
\documentclass[Afour,sageh,times]{sagej}

\usepackage{moreverb,url}
\usepackage[colorlinks,bookmarksopen,bookmarksnumbered,citecolor=red,urlcolor=red]{hyperref}


\usepackage{irom}

\usepackage{amsthm}
\usepackage{amsmath}
\usepackage{mathrsfs}
\usepackage{algorithmic}
\usepackage{algorithm}
\usepackage{booktabs}
\usepackage{breakcites}
\usepackage{thm-restate}
\usepackage{balance}
\usepackage{bbm}
\usepackage{verbatim}
\usepackage{comment}
\usepackage{url}
\usepackage{natbib}
\usepackage{graphicx, float, url, wrapfig, stfloats}
\usepackage[dvipsnames]{xcolor}
\usepackage{cleveref}
\usepackage[title]{appendix}
\setcitestyle{aysep={,}}

\renewcommand{\O}{\mathcal{O}}
\newcommand{\M}{\mathcal{M}}
\newcommand{\D}{\mathcal{D}}
\newcommand{\E}{\mathcal{E}}
\renewcommand{\S}{\mathcal{S}}
\newcommand{\A}{\mathcal{A}}

\newcommand{\Z}{\mathcal{Z}}

\usepackage[font=footnotesize]{caption}

\usepackage{makecell, tabularx, multirow,  adjustbox}
\usepackage{subcaption}
\newcommand{\addendum}[1]{{\color{black} #1}}

\usepackage{xspace}
\newcommand{\pwc}{\textsc{PwC}\xspace}

\usepackage[protrusion=true,
    activate={true,nocompatibility},
    final,
    tracking=true,
    kerning=true,
    spacing=false,
    factor=1100]{microtype}
    
\SetTracking{encoding={*}, shape=sc}{10}



\theoremstyle{nospace} 
\theoremstyle{nospace} \newtheorem{proposition}{Proposition}
\theoremstyle{nospace} 
\theoremstyle{nospace} 
\theoremstyle{nospace} 
\theoremstyle{nospace} 
\theoremstyle{nospace} 
\theoremstyle{nospace} 

\newcommand\BibTeX{{\rmfamily B\kern-.05em \textsc{i\kern-.025em b}\kern-.08em
T\kern-.1667em\lower.7ex\hbox{E}\kern-.125emX}}

\setcounter{secnumdepth}{3}
\begin{document}

\runninghead{Learning-Based Perception with Safety Guarantees}

\title{Perceive With Confidence: Statistical Safety Assurances for Navigation with Learning-Based Perception}

\author{Zhiting Mei\affilnum{1}, Anushri Dixit\affilnum{2}, Meghan Booker\affilnum{3}\textsuperscript{*}, Emily Zhou\affilnum{1}, Mariko Storey-Matsutani\affilnum{1}, \\ Allen Z. Ren\affilnum{1}, Ola Shorinwa\affilnum{1}, Anirudha Majumdar\affilnum{1}}

\affiliation{\affilnum{1} Department of Mechanical and Aerospace Engineering, Princeton University, Princeton, NJ, USA\\
\affilnum{2} Department of Mechanical and Aerospace Engineering, University of California, Los Angels, CA, USA\\
\affilnum{3} Johns Hopkins University Applied Physics Laboratory, Laurel, MD, USA}

\corrauth{Anirudha Majumdar, Department of Mechanical and Aerospace Engineering, Princeton University, Princeton, NJ, 08540, USA}

\email{ani.majumdar@princeton.edu}

\begin{abstract}
Rapid advances in perception have enabled large pre-trained models to be used out of the box for transforming high-dimensional, noisy, and partial observations of the world into rich occupancy representations. However, the reliability of these models and consequently their safe integration onto robots remains unknown, particularly when deployed in environments unseen during training. To provide safety guarantees, we rigorously quantify the uncertainty of pre-trained perception systems for object detection and scene completion via a novel calibration technique based on conformal prediction. Crucially, this procedure guarantees robustness to distribution shifts in states when perception outputs are used in conjunction with a planner. As a result, the calibrated perception system can be used in combination with \emph{any} safe planner to provide an end-to-end statistical assurance on safety in unseen environments. We evaluate the resulting approach, \emph{Perceive with Confidence} (\pwc), in simulation and on hardware where a quadruped robot navigates through previously unseen static indoor environments. These experiments validate the safety assurances for obstacle avoidance provided by \pwc. 
In simulation, our method reduces obstacle misdetection by $70\%$ compared to uncalibrated perception models. While misdetections lead to collisions for baseline methods, our approach consistently achieves $100\%$ safety. We further demonstrate reducing the conservatism of our method without sacrificing safety, achieving a $46\%$ increase in success rates in challenging environments while maintaining $100\%$ safety. In hardware experiments, our method improves empirical safety by $40\%$ over baselines and reduces obstacle misdetection by $93.3\%$. The safety gap widens to $46.7\%$ when navigation speed increases, highlighting our approach’s robustness under more demanding conditions.
\end{abstract}

\keywords{Uncertainty quantification, occupancy prediction, trustworthy robot perception, robot navigation}

\maketitle
\footnotetext{\textsuperscript{*} Work conducted while affiliated with Princeton University.}

\section{Introduction}
\label{sec:intro}

How can we decide if the outputs of a given perception system are sufficiently reliable for safety-critical robotic tasks such as autonomous navigation? 
Significant strides in perception over the past few years have enabled large pre-trained models to be used out of the box~\citep{firoozi2023foundation} for tasks such as \emph{object detection} and \emph{occupancy prediction}, which serve as a fundamental building block for navigation. 
However, current pre-trained models are still not reliable enough for safe integration into many real-world robotic systems. Despite being trained on vast amounts of data, these systems can often fail to generalize to novel environments~\citep{eduardo2019survey, wang2023sam, sunderhauf2018limits}. 
In this paper, we ask: \emph{how can we leverage the power of large pre-trained perception models while providing safety assurances for robot navigation?}
 
Consider a legged robot tasked with navigating a cluttered environment such as a home, office, or warehouse (Figure~\ref{fig:processDiagram}). A typical navigation pipeline for such a system consists of two modules: (i) a perception module that detects obstacles, and (ii) a planner that produces collision-free trajectories assuming accurate perception. However, there are two challenges associated with obtaining reliable outputs from the perception module. First, the environments in which we deploy our robots will be \emph{unseen} during training, and thus require \emph{generalization} to new obstacle geometries, appearances, and other environmental factors. Second, \emph{closed-loop deployment} of the perception system in conjunction with a planner causes a shift in the distribution of \emph{states} (e.g., relative locations to obstacles) that are visited by the robot. Since the robot's planner influences future states, the robot may view obstacles from unfamiliar relative poses (Figure~\ref{fig:processDiagram}), which can cause the perception system to fail.
\begin{figure*}[h]
    \centering
    \vspace{-2pt}
    \includegraphics[width = 0.95\linewidth]{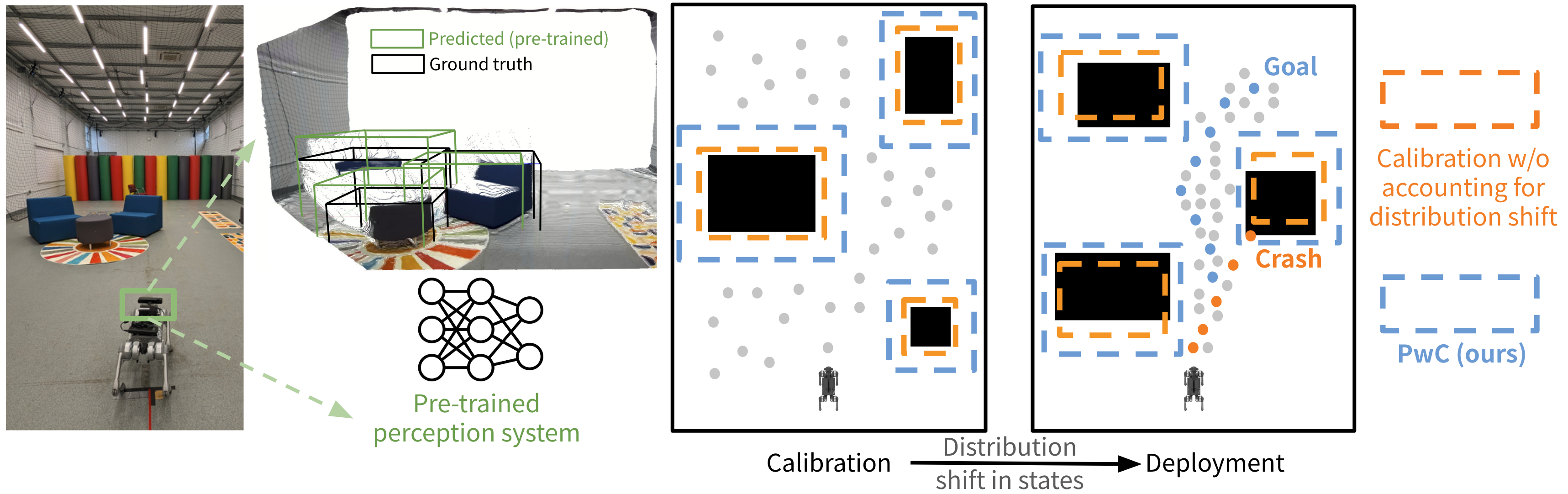}
    \caption{\pwc lightly processes the outputs of a pre-trained perception system (green bounding boxes) using conformal prediction in order to ensure a bounded misdetection rate despite \emph{any} distribution shift in states (gray dots).
    The calibrated perception system (blue boxes) paired with a non-deterministic filter and a safe planner provide an end-to-end statistical assurance on safety in new test environments.}
    \label{fig:processDiagram}
    \vspace{-15pt}
\end{figure*}

In this paper, we address these challenges by performing rigorous \emph{uncertainty quantification} for the outputs of a pre-trained perception system in order to achieve reliably safe (i.e., collision-free) navigation. We utilize techniques from \emph{conformal prediction}~\citep{vovk_algorithmic_2005} in order to lightly process the outputs of a pre-trained perception system in a way that provides a \emph{formal assurance} on correctness, i.e., with a user-specified probability $1-\epsilon$, the processed perception outputs will correctly detect obstacles in a \emph{new} environment. To enable this, we assume access to a relatively small-sized (e.g., $|\cdot| = 400$) dataset of environments that are representative of deployment environments with ground-truth obstacle annotations, and use these for \emph{calibrating} the outputs of the perception system. Crucially, we propose a novel calibration technique that ensures robustness of the perception system to \emph{any closed-loop distribution shift in states}. Hence, the calibrated outputs can be used in conjunction with \emph{any} safe planner to provide an end-to-end statistical assurance on safety in new static environments with a user-specified threshold $1-\epsilon$. To the best of our knowledge, this is the first work to calibrate a black-box perception system in a way that ensures robustness to closed-loop distribution shifts in order to provide end-to-end statistical assurances on safe navigation. 

Our proposed framework, \emph{Perceive with Confidence} (\pwc), is amenable to different types of perception systems, e.g., bounding-box prediction perception systems and scene-completion perception systems. A preliminary version of this work, which was presented at the Conference on Robot Learning~\citep{dixit2024perceive}, focused on the calibration of bounding-box predictions from onboard robot perception systems. 
In the preliminary version, we demonstrate the end-to-end safety guarantees provided by \pwc in simulated and hardware experiments on the Unitree Go1 quadruped navigating in indoor environments with objects that are unseen during calibration (Figure~\ref{fig:processDiagram}), where \pwc achieves up to $40\%$ increase in safety with only modest reductions in task completion rates compared to baselines that use the pre-trained perception model directly, fine-tune it on the calibration dataset, or utilize conformal prediction for uncertainty quantification but do not account for closed-loop distribution shift.

However, bounding boxes lack the high level of expressiveness required for high-accuracy occupancy predictions in environments with complex geometry. In this work, we extend \pwc to perception systems that predict 3D occupancy maps, providing higher-fidelity scene representations for robot planning and navigation. Specifically, in this paper, we derive procedures for calibrating unsigned distance functions predicted by scene-completion models to safely decompose a robot's environment into free and occupied space. We call the resulting method \pwc-NU-MCC, based on the scene-completion model used in this work, NU-MCC~\citep{lionar2023nu}. In addition, we present simulated experiments on a quadruped robot, demonstrating that \pwc-NU-MCC outperforms the calibrated bounding-box predictor, improving the goal-reaching rate by $46\%$, particularly in scenes with intricate geometry (Section~\ref{sec:numcc}). Further, we present additional quadruped robot experiments, achieving faster navigation with the robot moving more than three times faster compared to our original experiments (Section~\ref{sec:fast-robot}).

The rest of the paper is organized as follows: we first review relevant literature in~\Cref{sec:related work}. In \Cref{sec:problem formulation}, we formulate the problem of rigorous calibration of perception systems with safety guarantees. In \Cref{sec:CP}, we provide a brief introduction to conformal prediction. Next, we discuss the offline calibration of perception systems in \Cref{sec:approach-calibration} followed by a discussion of the online perception and planning procedure \Cref{sec:approach-planning}. We present simulated and hardware experiments on a quadruped robot in \Cref{sec:simulations,sec:hardware}, respectively. We conclude in \Cref{sec:conclusion} and provide additional discussion, including simulations and hardware results, in the Appendix.
\section{Related Work}
\label{sec:related work}

\textbf{Safe planning.} Collision avoidance is a crucial goal in autonomous navigation. Safe planning methods typically rely on the assumption that the robot has perfect knowledge of its state and environment~\citep{hsu_safety_2023}. Recent approaches have allowed for occlusion~\citep{janson_safe_2018, zhang2021safe, packer2023anyone, koschi2020set} or accounted for losing sight of a previously tracked object~\citep{laine2020eyes}, but still require either perfect detection of seen objects or bounded sensor noise. Such assumptions are impractical for learning-based perception modules that can fail catastrophically in new environments. 

\noindent\textbf{Formal assurances for perception-based control.}
Proposed methods include control barrier functions (CBFs) \citep{dean_guaranteeing_2021,dawson_learning_2022}, verification methods on neural networks (NNs) \citep{hsieh_verifying_2022,katz_verification_2021}, and other learning-based methods \citep{katz_verification_2021,ghosh_counterexample-guided_2021,dean_certainty_2021,sun_learning-based_2023,liu_autoregressive_2022,majumdar2021pac,farid2022task,farid2022failure}. However, these works either do not guarantee generalization to novel environments \citep{hsieh_verifying_2022,katz_verification_2021}, ignore closed-loop distribution shifts \citep{liu_autoregressive_2022,sun_learning-based_2023}, require end-to-end training and a good prior \citep{majumdar2021pac,farid2022task,farid2022failure}, or demand usage/design of specific controllers \citep{dean_guaranteeing_2021,dawson_learning_2022,ghosh_counterexample-guided_2021, sinha2023closing}. Some make strong assumptions on the perception system \citep{dean_robust_2020,chou_safe_2022} that are unrealistic for deployment. In contrast, our method doesn't need any of the above, and is lightweight and modular, allowing for the use of any downstream safe planners to ensure end-to-end safety.

\noindent\textbf{Conformal prediction.}
Conformal prediction (CP)~\citep{vovk_algorithmic_2005,pmlr-v25-vovk12,angelopoulos_gentle_2022} is an uncertainty quantification framework particularly suitable for robotics applications~\citep{ren2023robots,lindemann_safe_2023,dixit_adaptive_2022,luo_sample-efficient_2023} where learned modules are deployed in environments drawn from unknown distributions. In this work, we focus on providing uncertainty quantification for the perception system, which usually involves high-dimensional inputs and closed-loop distribution shifts. Prior works~\citep{dixit_adaptive_2022,sun_learning-based_2023,yang2023safe,park_pac_2020} either provide guarantees for a single environment, assume known environments, or do not account for closed-loop distribution shifts. To the best of our knowledge, this is the first work to obtain end-to-end safety assurances for the perception and planning system in new environments while being robust to closed-loop distribution shifts and amenable to changes in the planner parameters.

\section{Problem Formulation}
\label{sec:problem formulation}

\noindent{\bf Dynamics and environments.} Suppose that the dynamics of the robot are described by $s_{t+1} = f_E(s_t, a_t)$, 
where $s_t \in \S$ is the robot's state at time-step $t$, $a_t \in \A$ is the action, and $E \in \E$ is the \emph{environment} that the robot operates in during a given episode. We primarily focus on navigation with static obstacles; in this context, the environment $E$ specifies the locations and geometries of objects.  
We assume that environments that the robot will be deployed in are drawn from an \emph{unknown} distribution $\D_\E$, e.g., a distribution over possible rooms that the robot may be deployed in. We will make no assumptions on this distribution besides the ability to sample a finite dataset $D = \{E_1, \dots, E_N\}$ of independent identically distributed (i.i.d.) environments from $\D_\E$. 

\smallskip
\noindent{\bf Sensor and perception system.} We consider a robot equipped with a sensor ${\sigma: \S \times \E \rightarrow \O}$ that provides observations ${o_t = \sigma(s_t, E)}$ (e.g., depth images) based on the robot's state and environment. 
We assume access to a pre-trained perception model $\phi: \O \rightarrow \Z$, which processes raw sensor observations $o_{t}$ into an occupancy representation of the environment $z_{t}$. 
For example, models for 3D object detection can produce bounding boxes for obstacles~\citep{supergradients}, perform shape completion~\citep{wu2023multiview}, or predict free space in the environment \citep{chatzipantazis2023mathrmseequivariant}. 
We demonstrate our framework with two types of perception models: (i) models for obstacle detection that output 3D bounding boxes, and (ii) models for scene reconstruction that output occupancy grid maps.
The representations $(z_0, \dots, z_t)$ up to the current time-step are aggregated into an overall representation $m_t \in \M$ (e.g., a map). 
We denote predicted occupied space in green and predicted free space in blue, except where noted otherwise.

\smallskip
\noindent{\bf Planner and Policy.} The planner utilizes the environment representation $m_t$ to compute actions for the given task. We denote the resulting end-to-end policy that utilizes a perception model $\phi$ by ${\pi^\phi: \O^{t+1} \rightarrow \Z^{t+1} \rightarrow \M \rightarrow \A}$, which maps histories of sensor observations to actions.

\smallskip
\noindent{\bf Safety and task performance.} Let $C_E^\text{safe}$ be a cost function that captures safety (e.g., obstacle avoidance). In addition, let $\S_{0,E}$ denote the allowable set of initial conditions in environment $E$. Then, $C_E^\text{safe}(\pi^\phi) \in \{0,1\}$ assigns a cost of $0$ if policy $\pi^\phi$ maintains safety from any initial state $s_0 \in \S_{0,E}$ when deployed over a given time horizon in environment $E$, and a cost of $1$ otherwise. Although we only present safety-oriented cost functions here, additional cost functions can be used to capture task performance, e.g., $C_E^\text{task}$ which minimizes the time to reach the goal.

\smallskip
\noindent{\bf Goal (statistical safety assurance).} Our goal is to provide a statistical assurance on safety for the end-to-end policy $\pi^\phi$. We propose a procedure that uses a finite dataset $D$ of environments in order to produce a \emph{calibrated} perception system $\tilde{\phi}: \O \xrightarrow{\phi} \Z \xrightarrow{\rho} \Z$. Our approach is modular: outputs of the calibrated perception system may be used with \emph{any} safe planner (cf. Section~\ref{sec:approach-planning}) to provide probabilistic guarantees on safety, with:
\begin{equation}
     C_{\D_\E}^\text{safe}(\pi^{\tilde{\phi}}) := \underset{E \sim \D_\E}{\EE} \ \Big{[} C_E^\text{safe}(\pi^{\tilde{\phi}}) \Big{]} \ \leq  \ \epsilon ,
\end{equation}
for a user-specified safety tolerance $\epsilon$, while also post-processing outputs from $\phi$ as lightly (i.e., non-conservatively) as possible in order to allow the robot to optimize task performance. 
\section{Background: Conformal Prediction}
\label{sec:CP}

We leverage the theory of conformal prediction (CP) to perform rigorous uncertainty quantification for perception. Here, we provide a brief overview of conformal prediction and refer interested readers to \citep{vovk_algorithmic_2005, angelopoulos_conformal_2023} for a more detailed discussion.

Given $N$ i.i.d. (or \emph{exchangeable}) samples $U_1, \dotsc, U_N$ of a scalar random variable $U$, we compute the threshold, $\hat{q}_{1-\epsilon}$, such that the next sample, $U_{\text{test}}$, satisfies,
\begin{align}\label{eq:marginal}
    &\mathbb{P}[U_{\text{test}} \leq \hat{q}_{1-\epsilon}] \geq 1-\epsilon, \\
    &\hat{q}_{1-\epsilon} = \begin{cases}
U_{(\lceil (N+1)(1-\epsilon) \rceil)} \ \ \text{if } \lceil (N+1)(1-\epsilon) \rceil \leq N,\\
\infty \ \ \text{otherwise},
\end{cases}\nonumber
\end{align}
where  $U_{(1)}\leq U_{(2)} \leq \dotsc \leq U_{(N)}$ are the order statistics (sorted values) of the N samples  $U_1, \dotsc, U_N$. 
In the CP literature, the non-conformity score $U$ represents a measure of the (in)correctness of a model.

The guarantee in \eqref{eq:marginal} is \textit{marginal}, i.e.,~\eqref{eq:marginal} holds over the sampling of both the calibration dataset $U_1, \dotsc, U_N$ and the test variable $U_{\text{test}}$.  Hence, we will need to generate a fresh set of i.i.d. calibration data $\tilde{U}_1, \dotsc, \tilde{U}_N$ for the guarantee to hold for a new sample $\tilde{U}_{\text{test}}$. However, in practice, one typically only has access to a single dataset of examples; inferences from this dataset must be used for all future predictions on test examples. In this work, we use the following dataset-conditional guarantee\addendum{~\citep{pmlr-v25-vovk12, angelopoulos_gentle_2022}} that doesn't require us to generate of $N$ new samples for every test prediction and holds with probability $1-\delta$ over the sampling of the calibration dataset:
\begin{align}\label{eq:dataset_conditional_cp}
    &\mathbb{P}[U_{\text{test}} \leq \hat{q}_{1-\epsilon} | U_1,
    \dotsc, U_N ] \geq \text{Beta}_{N+1-v, v}^{-1}(\delta), \\
    &v := \lfloor (N+1)\hat{\epsilon} \rfloor,\nonumber
\end{align}
where, $\text{Beta}_{N+1-v, v}^{-1}(\delta)$ is the $\delta-$quantile of the $\text{Beta}$ distribution with parameters $N+1-v$ and $v$, and we can choose $\hat{\epsilon}$ to achieve the desired $1-\epsilon$ coverage.
\section{Calibrating the Perception System}
\label{sec:approach-calibration}

In this section, we describe our approach to the uncertainty quantification of a pre-trained perception system. We focus on the challenges highlighted in Section~\ref{sec:intro}: providing statistical assurances on safe generalization to novel environments and ensuring that the offline calibration procedure is robust to shifts in the distribution of states induced by the online implementation of the planner.

We consider two types of perception systems: (i) perception systems that output bounding boxes predicting the locations of objects in the environment, and (ii) perception systems that perform scene reconstruction and output occupancy grid maps representing the occupancy of the environment. For example, Figure~\ref{fig:TwoPredictors} illustrates the maps produced by (i) the bounding-box predictor and (ii) the occupancy prediction.

\begin{figure}
    \centering
    \includegraphics[width=\columnwidth]{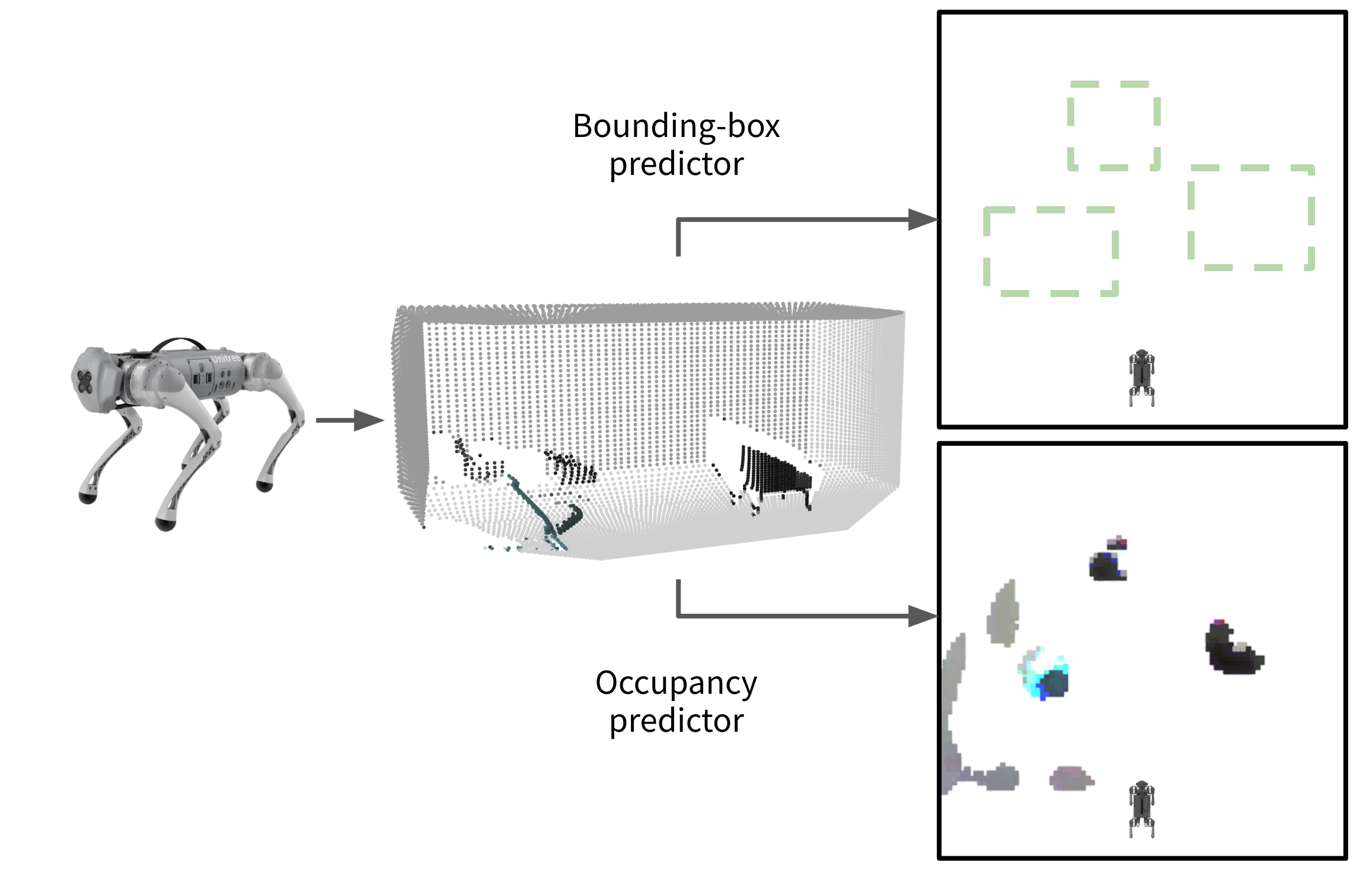}
    \caption{Our proposed method is amenable to a range of perception systems, e.g., bounding-box predictors (top), which output a map with predicted bounding boxes (green dashed boxes), and occupancy predictors (bottom), which output a map with predicted occupied regions.}
    \label{fig:TwoPredictors}
\end{figure}

\subsection{Misdetection Rate}

Our key idea for ensuring \textit{generalization} to new, unseen environments and tackling \textit{the distribution shift arising from the closed-loop deployment} of the calibrated perception system with a plannner is to use a \emph{policy-independent} misdetection cost, $\tilde{C}_E$, which considers worst-case errors across \textit{all} states in an environment\footnote{It would be infeasible to consider \textit{all} possible states in an environment. In practice, we use a sampling-based motion planner and consider a fixed set of samples for our calibration that could be used by any planner.},
${
    \tilde{C}_E(\phi)  :=  \max_{s\in\S} \; \mathbbm{1}_{\X^\text{occ}\not\subseteq \overline\X^\text{occ}_s }.
}$
We present a calibration procedure that bounds this misdetection cost with high probability in a new environment, and thus guarantee the correctness of the calibrated perception system independent of the robot policy using CP. Moreover, we note that the perception system can be fine-tuned to the target deployment environments to reduce the nominal misdetection rate, which we discuss in Appendix~\ref{appendix:extensions}.

\subsection{Calibration Procedure}

\noindent\textbf{Dataset.} We assume access to a dataset of $N$ i.i.d. environments $D = \{E_1, \dotsc E_N\}\sim{\D_{\E}}$ (cf. Section~\ref{sec:problem formulation}).  Let $\X_i$ denote the configuration space of environment $E_i$ (e.g., $x$-$y$ location). 
In each environment, $E_i$, we have access to the ground-truth occupied space $\mathcal X^\text{occ}_i$ and the predicted occupied space $\overline{\mathcal X}^\text{occ}_{s,i}$, generated by the pre-trained perception system $\phi$ for each state $s\in\mathcal S$.
\addendum{Care is required to ensure that the calibration environments are representative of deployment environments. As such, we construct the calibration dataset using real-world environments or create simulation environments using real-world data~\citep{uy-scanobjectnn-iccv19, berk2017yale, fu20213d} to ensure sufficient variation in environmental factors (e.g., geometry and locations of obstacles, lighting, etc.).} 

\smallskip
\noindent\textbf{Calibration.} In each calibration environment $E_i$, we define a parameter $q_i$ that monotonically scales the predicted occupied space to be  $\overline{\mathcal X}^\text{occ}_{s,i} (q_i)$. In other words, as we increase $q_i$, $\overline{\mathcal X}^\text{occ}_{s,i} (q_i)$ expands monotonically. We find the $q_i$ such that the ground truth occupied space is fully enclosed by the scaled prediction, i.e., $ {\mathcal X}^\text{occ}_i\subseteq  \overline{\mathcal X}^\text{occ}_{s,i}(q_i), \forall s \in \S$, where, $\S$ is assumed to be a finite, discrete set.
In~\Cref{sec:simulations}, we provide concrete examples on how to choose the parameter $q$ for the two types of perception models considered.
We define the \emph{non-conformity score} for environment $E_i$ to be the minimum required scaling parameter $q_i$ in that environment:
\begin{align}
\label{eq:nonconformity}
   U_i \ = \  \min_{q_i} \; q_i \;\;\;\text{s.t} \quad \mathcal X^\text{occ}_i \subseteq \overline{\mathcal X}^\text{occ}_{s,i}(q_i), \forall s\in\S.
\end{align}
Observe that $U_i \leq 0 \implies {\mathcal X}^\text{occ}_{i} \subseteq \overline{\mathcal X}^\text{occ}_{s,i}, \, \forall s \in \S$ and a growing $U_i$ signals a worse performance of the pre-trained perception system. We can compute the nonconformity scores for the i.i.d. sampled environments $\{E_1, \dotsc, E_N\}$ and the quantile $\hat{q}_{1-\epsilon} = \text{Quantile}\bigg(U_{(1)},\dotsc,U_{(N)}; \frac{\lceil(N+1)(1-\hat{\epsilon})\rceil}{N} \bigg)$. 
Here, $\hat{\epsilon}$ is the calibration threshold such that the dataset conditional guarantee~\eqref{eq:dataset_conditional_cp} achieves the desired $(1-\epsilon)-$coverage with probability $1-\delta = 0.99$ over the sampling of the calibration dataset.

\begin{proposition}
\label{prop:calibration}
	Consider the \textit{calibrated} perception system $\tilde{\phi}$ that modifies every output of the perception system $\phi$ by scaling the predicted occupied space as $ \overline{\mathcal X}^\text{occ}_{s,i}(q_i)$. With probability $1-\delta$ over the sampling of the dataset used for calibration, the calibrated perception system, $\tilde{\phi}$, is guaranteed to have an $\epsilon$-bounded misdetection rate on \emph{new} test environments:
{\begin{equation}\label{eq:cp_perception_guarantee}
    	\underset{E_{\text{test}} \sim \D_\E}{\EE} \Big[\tilde{C}_{E_{\text{test}}}(\tilde{\phi})| U_1, \dotsc, U_N \Big]\leq \epsilon.
	\end{equation}}
\end{proposition}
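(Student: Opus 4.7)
The statement is essentially a translation between two guarantees: the dataset-conditional conformal bound from Section~\ref{sec:CP} applied to the scalar scores $U_i$, and the bound on the expected misdetection rate for the calibrated perception system. The plan is to show that (i) the event $\{U_{\text{test}} \leq \hat{q}_{1-\epsilon}\}$ deterministically implies zero misdetection cost $\bar C_{E_{\text{test}}}(\bar\phi)=0$, and then (ii) invoke the dataset-conditional CP bound \eqref{eq:dataset_conditional_cp} with $\hat\epsilon$ chosen so that the Beta quantile equals $1-\delta$ at level $1-\epsilon$, to upper-bound the expected cost by $\epsilon$ with probability $1-\delta$ over the calibration draw.

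For step (i), I would argue from the definition \eqref{eq:nonconformity} of $U_{\text{test}}$ together with the monotonicity of the inflation map $q \mapsto \Delta_q$. Since $\Delta_q$ inflates each bounding box coordinate outward by $q$, we have the containment $B_s + \Delta_{q} \subseteq B_s + \Delta_{q'}$ whenever $q \leq q'$. Therefore, on the event $U_{\text{test}} \leq \hat{q}_{1-\epsilon}$, the constraint in \eqref{eq:nonconformity} is feasible at level $\hat{q}_{1-\epsilon}$, giving $A_{\text{test}} \subseteq B_{s,\text{test}} + \Delta_{\hat{q}_{1-\epsilon}} = \bar{B}_{s,\text{test}}$ for every state $s \in \S$. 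By the definition of $\hat C_E$ and the policy-independent misdetection rate \eqref{eq:misdetection_robust}, this yields $\bar C_{E_{\text{test}}}(\bar\phi) = \max_{s \in \S}\hat C_{E_{\text{test}}}(\bar\phi,s) = 0$.

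For step (ii), since $\bar C_{E_{\text{test}}}(\bar\phi) \in \{0,1\}$, the conditional expectation equals the conditional probability of a misdetection event, and by step (i),
\begin{equation*}
\mathbb{E}_{E_{\text{test}}\sim\D}\!\left[\bar C_{E_{\text{test}}}(\bar\phi)\mid U_1,\dots,U_N\right]
\leq \mathbb{P}\!\left[U_{\text{test}} > \hat{q}_{1-\epsilon}\mid U_1,\dots,U_N\right].
\end{equation*}
The environments being i.i.d.\ transfers to exchangeability of the non-conformity scores $U_1,\dots,U_N,U_{\text{test}}$, so the dataset-conditional CP bound \eqref{eq:dataset_conditional_cp} applies. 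Choosing $\hat\epsilon$ such that $\text{Beta}^{-1}_{N+1-v,v}(\delta) \geq 1-\epsilon$ with $v=\lfloor(N+1)\hat\epsilon\rfloor$, the right-hand side is at most $\epsilon$ with probability at least $1-\delta$ over the calibration draw, giving \eqref{eq:cp_perception_guarantee}.

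\textbf{Expected obstacle.} The technical content is light: the only non-routine piece is verifying that the reduction from the set-valued containment guarantee to the scalar CP guarantee is tight, i.e., that the inflation operator is monotone and that containment at the calibration scalar level $U_i$ actually holds uniformly over \emph{all} states $s \in \S$. The latter is baked into the definition of $U_i$ as a maximum/min-over-states in \eqref{eq:misdetection_robust}--\eqref{eq:nonconformity}, which is precisely the design choice that sidesteps closed-loop distribution shift. I would make this explicit in the proof to highlight why a policy-independent score is what enables the guarantee to survive replacing $\phi$ by $\bar\phi$ inside $\pi^{\bar\phi}$, as used in the final inequality of \eqref{eq:misdetection_guarantee}.
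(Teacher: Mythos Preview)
Your proposal is correct and follows essentially the same approach as the paper: both establish the event equivalence (or implication) $\{U_{\text{test}} \leq \hat{q}_{1-\epsilon}\} \Rightarrow \{\bar{C}_{E_{\text{test}}}(\bar\phi)=0\}$ via the definition of the non-conformity score and monotonicity of the inflation, then invoke the dataset-conditional CP bound \eqref{eq:dataset_conditional_cp} with $\hat\epsilon$ chosen to achieve $1-\epsilon$ coverage. Your write-up is slightly more explicit than the paper's about the monotonicity of $q\mapsto\Delta_q$ and about converting the conditional expectation of the $\{0,1\}$-valued cost into a conditional probability, but the logical skeleton is identical.
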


\begin{proof}
    As seen in Section~\ref{sec:CP}, conformal prediction gives us the following \textit{dataset-conditional} guarantee on a new sample of the nonconformity score $U_{\text{test}}$ corresponding to a test environment $E_{\text{test}}$. With probability $1-\delta$ over the sampling of $U_1, \dotsc, U_N$,
\begin{equation*}
	\mathbb{P}[U_{\text{test}} \leq \hat{q}_{1-\epsilon} | U_1, \dotsc, U_N ] \geq \text{Beta}_{N+1-v, v}^{-1}(\delta).
\end{equation*}

We can rewrite the event $U_{\text{test}} \leq \hat{q}_{1-\epsilon}$ as:
\begin{align*}
	&\{U_{\text{test}} \leq \hat{q}_{1-\epsilon}\}  \\ 
    =  &\Big\{ \hat{q}_{1-\epsilon} \geq \min_{q_{\text{test}}} \ q_{\text{test}} \mid  \mathcal X^\text{occ}_{\text{test}} \subseteq \overline{\mathcal X}^\text{occ}_{s,\text{test}}(q_{\text{test}}), \forall s\in\S\ \Big\} \\
 	= &\Big\{ \mathcal X^\text{occ}_{\text{test}} \subseteq \overline{\mathcal X}^\text{occ}_{s,{\text{test}}}(\hat{q}_{1-\epsilon}), \forall s\in\S \Big\} \\
	= &\Big\{ \tilde{C}_{E_{\text{test}}}(\tilde{\phi}) = 0\Big\},
\end{align*}
which gives us the desired result~\eqref{eq:cp_perception_guarantee}.
\end{proof}

\Cref{prop:calibration}
gives us a formal assurance on the correctness of the perception system \emph{independent of the robot's policy}. As we describe below, the calibrated perception can thus be combined with \emph{any} safe planner to bound the collision rate to $\epsilon$. The calibrated perception outputs are guaranteed to be correct with probability $1-\epsilon$ over environments. Since we accounted for the perception error from every state in each environment, the resulting calibrated outputs are also guaranteed to be correct from every state in new test environments. Given that we have addressed the challenge of closed-loop distribution shift, we can now utilize this calibrated perception system with any safe planner to obtain a statistical assurance on robot safety. 

\subsection{Implementation with a limited field-of-view}~\label{appendix:fov}
A natural question that arises after following the calibration procedure described above is: what happens if the robot is not able to observe all objects in the environment from all states? This may happen due to a limited sensing capability or because some parts of the environment are occluded from view. We address this issue in our calibration procedure implementation by only taking into account perception errors for objects that are within the field-of-view of the robot in a given state, and masking any region of the ground-truth occupied space that is not visible to the robot, i.e., $\X^\text{occ}$ (which now depends on state $s$) is the ground-truth visible occupied region. Hence, the perception system correctness assurance stated above holds for all objects within the field-of-view of the robot at any given state. The presence of possibly occluded obstacles is dealt with by a safe planner, which we describe next. 

\section{Perception and Planning}
\label{sec:approach-planning}
\begin{figure}[h]
    \begin{minipage}{0.47\linewidth}
    \centering\captionsetup[subfigure]{justification=justified}
    \includegraphics[width = \linewidth]{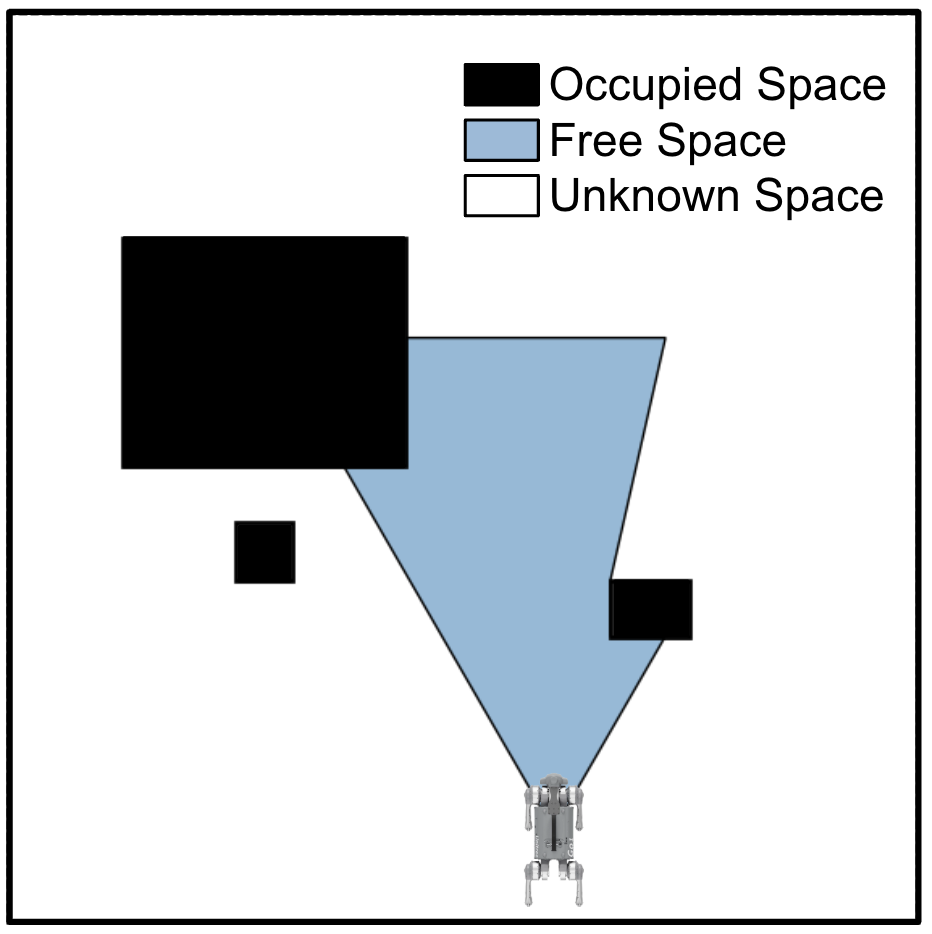}
    \caption{The configuration space is partitioned into three.}
    \label{fig:configSpace}
    \end{minipage}
    \hfill
    \begin{minipage}{0.47\linewidth}
    \centering\captionsetup[subfigure]{justification=justified}
    \includegraphics[width=\linewidth]{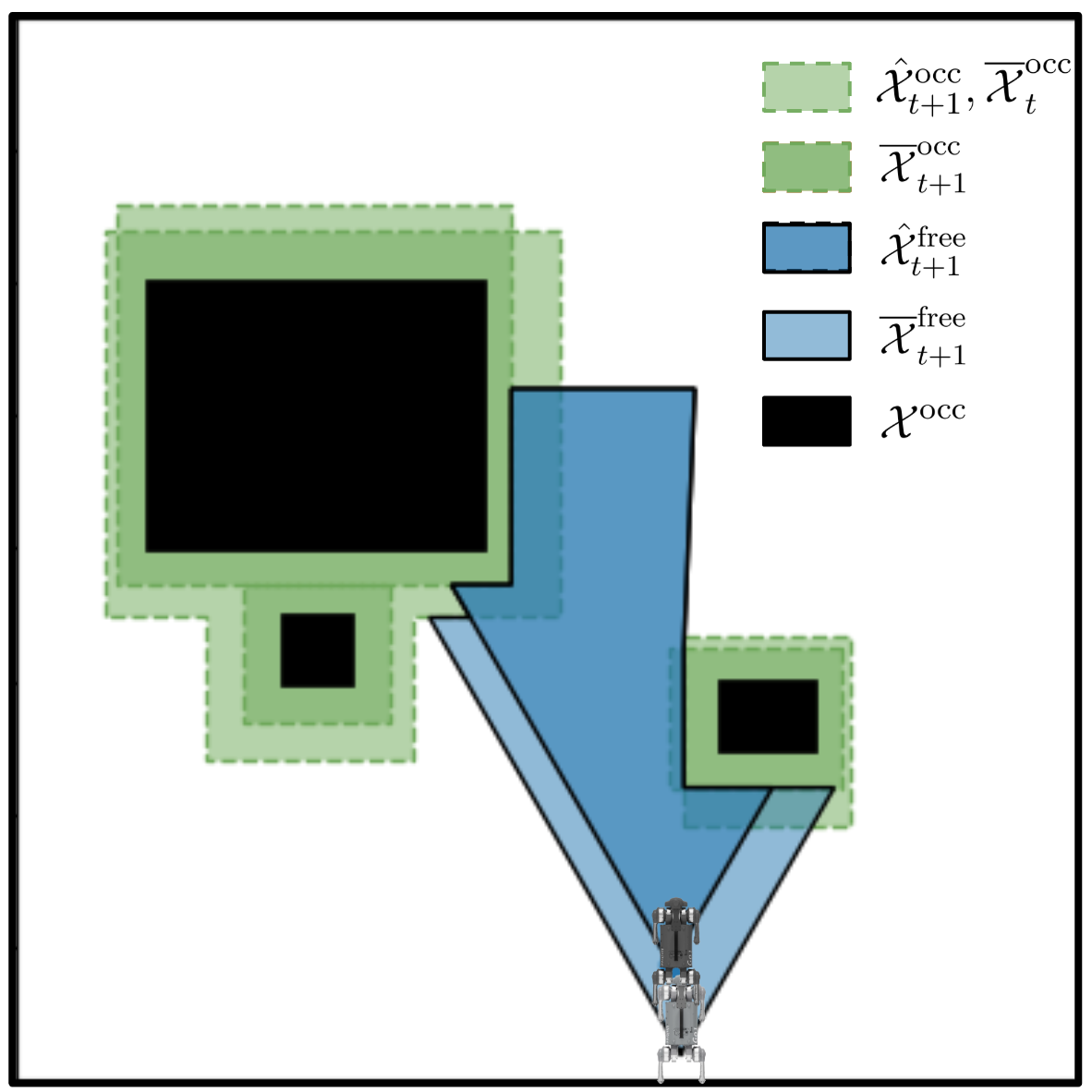}
    \caption{The filter takes union over the free space.}
    \label{fig:nonDetOnly}
\end{minipage}
\end{figure}

We now focus on the online implementation of the method described in Section~\ref{sec:approach-calibration} to reduce conservatism when used in conjunction with a safe planner. In general, a safe planner takes into account the dynamics of the robot and produces plans in the state space $\mathcal S$. Let $\X$ be the configuration space of the robot (e.g., $x$-$y$ location for a point). The configuration space of any given environment $E$ can be partitioned into the ground-truth occupied space $\X^\text{occ}$, the known free space $\X^\text{free}$, and the unknown space $\X^\text{unknown}$ (Figure~\ref{fig:configSpace}). 

\smallskip
\noindent\textbf{Non-deterministic filter.} 
We utilize the assurance obtained from Section~\ref{sec:approach-calibration} to implement a \emph{non-deterministic filter}~\citep[Ch. 11.2.2]{lavalle2006planning}, which shrinks the occupied space and grows the known free space over time (Figure~\ref{fig:nonDetOnly}). 
Suppose the robot's perceived partition of the configuration space $\mathcal{X}$ at time $t$ is denoted by the triplet $\{\overline{\mathcal{X}}^\text{free}_t,\overline{\mathcal{X}}_t^\text{occ}, \overline{\mathcal{X}}_t^\text{unknown}\}$, which represents the overall map $m_t$ of the environment. At a new time step $t+1$, the robot's perception system returns a new estimation for the occupied space, $\hat{\mathcal{X}}_{t+1}^\text{occ}$. The filter intersects the occupied spaces: $\overline{\mathcal{X}}_{t+1}^\text{occ} = \overline{\mathcal{X}}_t^\text{occ}\cap\hat{\mathcal{X}}_{t+1}^\text{occ}.$
We compute the new estimation of free space $\hat{\mathcal{X}}_{t+1}^\text{free}$ based on $\overline{\mathcal{X}}_{t+1}^\text{occ}$, considering occlusion and limited field of view. The new perceived free space is updated by taking the union: $\overline{\mathcal{X}}_{t+1}^\text{free} = \overline{\mathcal{X}}_t^\text{free}\cup\hat{\mathcal{X}}_{t+1}^\text{free}$.

The non-deterministic filter pairs effectively with our method in Section~\ref{sec:approach-calibration} for two key reasons: 1) it mitigates the conservatism of our expansion procedure for the predicted occupied space by intersecting $\overline{\mathcal X}_t^\text{occ}$, rapidly reducing its size even if the initial prediction with CP bounds appears generous; and 2) Proposition~\ref{prop:calibration} ensures that with high probability in a new test environment, $\overline{\mathcal X}_t^\text{free}$ never intersects the true occupied space ${\mathcal X}^\text{occ}$. We demonstrate the rapid expansion of known free space in Figure~\ref{fig:nonDet} for our simulated setup (Section ~\ref{sec:simulations}).

\smallskip
\noindent\textbf{Safe planning.} With our formal assurance on the estimated free space $\overline\X_t^\text{free}$, we can utilize \emph{any} safe planner~\citep{schouwenaars2002safe,bouraine_real-time_2016,pairet_online_2022} to ensure end-to-end safety, as long as the planner includes a safety filter that takes into account the robot's dynamics in order to reject potentially unsafe actions with the assumption of known robot states and a static (but unknown) environment~\citep[Corollary 1.4]{hsu_safety_2023}.

For our simulation and hardware experiments, we use the safe planner proposed in \citep{janson_safe_2018} due to its approximate optimality. The safety filter in this case is an inevitable collision set (ICS) constraint \citep{fraichard_inevitable_nodate}, where the robot is forbidden to enter any state that will eventually result in collision no matter what control actions are taken. Within the known free space $\overline\X_t^\text{free}$, the robot plans using the fast marching tree algorithm (FMT$^\star$) \citep{janson_fast_2015} with dynamics \citep{schmerling_optimal_2015}. If the goal is not visible within $\overline\X_t^\text{free}$, the robot plans to an intermediate goal on the boundary of its free space. The intermediate goals are chosen based on the cost-to-come from current robot state to the intermediate goal, and the distance-to-go from the intermediate goal to the actual goal. The robot replans whenever it receives a sensor update and an updated $\overline\X_{t+1}^\text{free}$ from its non-deterministic filter, and accounts for ICS constraints \citep{schmerling_optimal_2015-1} in-between sensor updates.

\begin{proposition}
    For any user-specified safety tolerance $\epsilon$, the \emph{calibrated} perception system $\tilde{\phi}$ in Proposition~\ref{prop:calibration} combined with any safe planner that chooses actions based on the outputs of the non-deterministic filter ensures the end-to-end safety for the overall policy $\pi^{\tilde \phi}$:
    \begin{equation}
     C_{\D_\E}^\text{safe}(\pi^{\tilde{\phi}}) := \underset{E \sim \D_\E}{\EE} \ \Big{[} C_E^\text{safe}(\pi^{\tilde{\phi}}) \Big{]} \ \leq  \ \epsilon ,
\end{equation}
where $C_E^\text{safe}(\pi^{\tilde{\phi}})$ is the cost for safety from Section \ref{sec:problem formulation}. 
\label{prop:overallSafe}
\end{proposition}

\begin{proof}
    As shown in Proposition \ref{prop:calibration}, the misdetection rate of the calibrated perception system $\tilde\phi$ is $\epsilon$-bounded on environments drawn from $\D$ at each time step $t$, where the robot is at state $s_t$. In other words, the predicted occupied space $\hat\X_t^{\text{occ}}$ at each time step contains the true occupied space $\X^\text{occ}$ with high probability across environments. Conversely, the predicted free space $\hat\X_t^{\text{free}}$ at each time step does not intersect with the true occupied space $\X^\text{occ}$ with high probability across environments. If we consider a safety-relevant misdetection cost at time step $t$:
   \begin{equation}
    \hat{C}^{\text{safe}}_{E}(\tilde\phi, s_t) = \begin{cases}
1 \ \quad \text{if } \X^\text{occ}\subseteq \hat\X_t^{\text{free}}\text{ (unsafe)},\\ 
0 \ \quad \text{otherwise},
\end{cases}
\end{equation}
then the misdetection rate over the set of states should be $\epsilon$-bounded across environments by Proposition \ref{prop:calibration}:
\begin{equation}
    \underset{E \sim \D_\E}{\EE} \max_{t\in [0,T]} \; \; \hat{C}^\text{safe}_{E}(\tilde\phi, s_t) \leq \epsilon.
\label{eq:safetyMisdetection}
\end{equation}
Because the expectation in Equation \eqref{eq:safetyMisdetection} is over the set of environments, the following statement holds in any new environment (with probability $1-\delta$ over the calibration dataset of environments),
\begin{equation}
   \mathbb{P}\Big{\{}\max_{t\in [0,T]} \; \; \hat{C}^\text{safe}_{E}(\tilde\phi, s_t) = 0 \Big{\}}\geq 1- \epsilon.
\end{equation}
Given $m_t=\{\overline\X^{\text{free}},\overline\X^{\text{occ}},\overline\X^{\text{unknown}}\}$, a safe planner never drives the robot outside of the free space. Therefore, the safe planner guarantees $C_{E}^\text{safe}(\pi^{\tilde{\phi}}) \leq \tilde{C}^\text{safe}_{E}(\tilde\phi) $.
\begin{equation}
   \mathbb{P}\Big{\{}C_{E}^\text{safe}(\pi^{\tilde{\phi}}) = 0 \Big{\}}\geq 1- \epsilon.
\end{equation}
\end{proof}

This result is a direct consequence of the formal assurance on the calibrated perception system that ensures correctness from \textit{any} state in a new test environment (sampled i.i.d. from the same distribution as the calibration environments) with probability $1-\epsilon$ \textit{over environments}. 

We discuss extensions of our perception and planning approach to problems with sensor and dynamics uncertainty in Appendix~\ref{appendix:extensions}.

\begin{figure}[th]
    \begin{minipage}{\linewidth}
        \centering\captionsetup[subfigure]{justification=justified}
        \includegraphics[width=\linewidth]{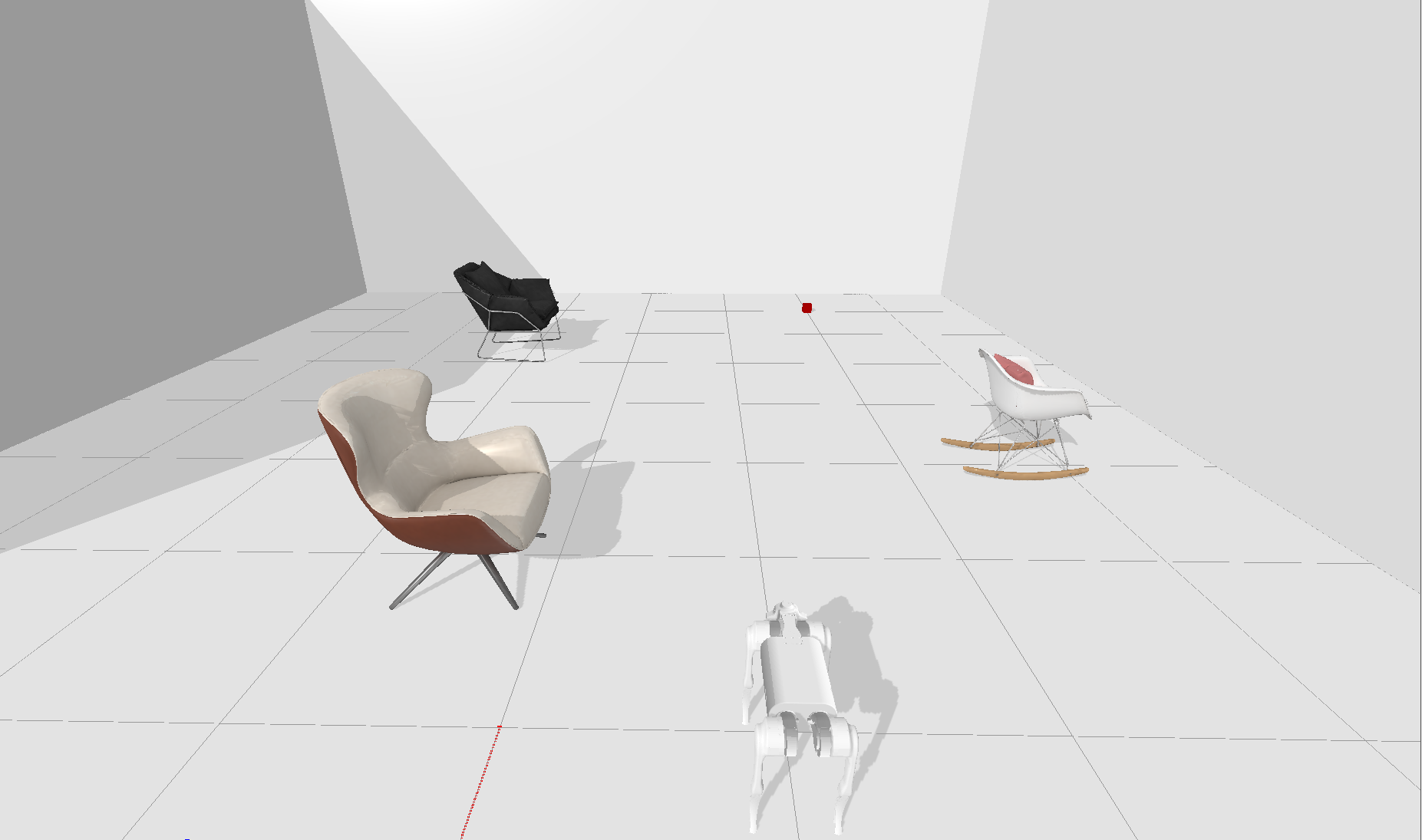}
        \caption{Simulation environment in Pybullet.}
        \label{fig:simEnv}
    \end{minipage}
\end{figure}
\begin{figure}[th]
    \begin{minipage}{\linewidth}
        \centering\captionsetup[subfigure]{justification=justified}
        \includegraphics[width=\linewidth]{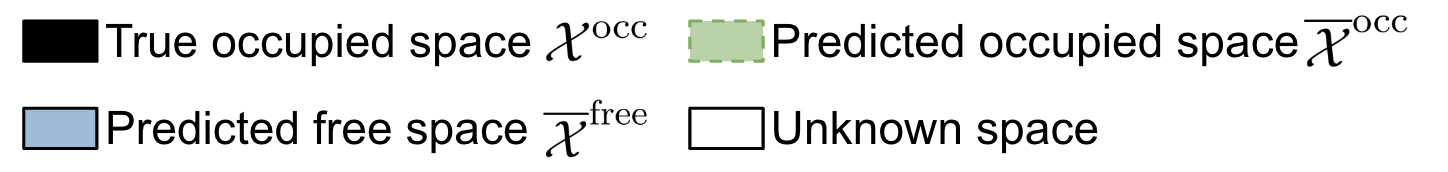}
        \label{fig:nonDetLegend}
        \vspace{-10pt} 
    \end{minipage}
    \begin{minipage}{0.48\linewidth}
        \centering\captionsetup[subfigure]{justification=justified}
        \includegraphics[width=\linewidth]{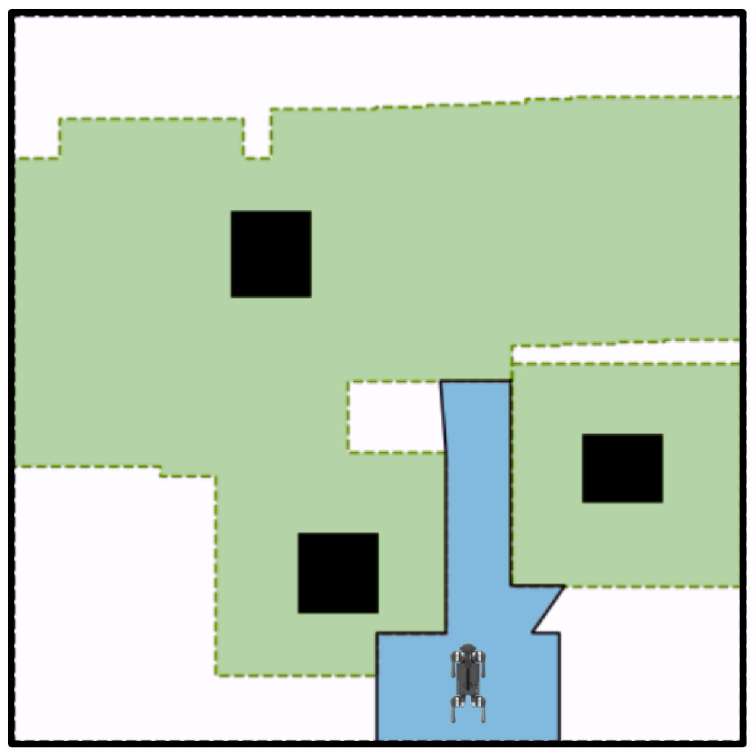}
        $t=1$
        \vspace{5pt}
        \label{fig:nonDet1}
    \end{minipage}
    \begin{minipage}{0.48\linewidth}
        \centering\captionsetup[subfigure]{justification=justified}
        \includegraphics[width=\linewidth]{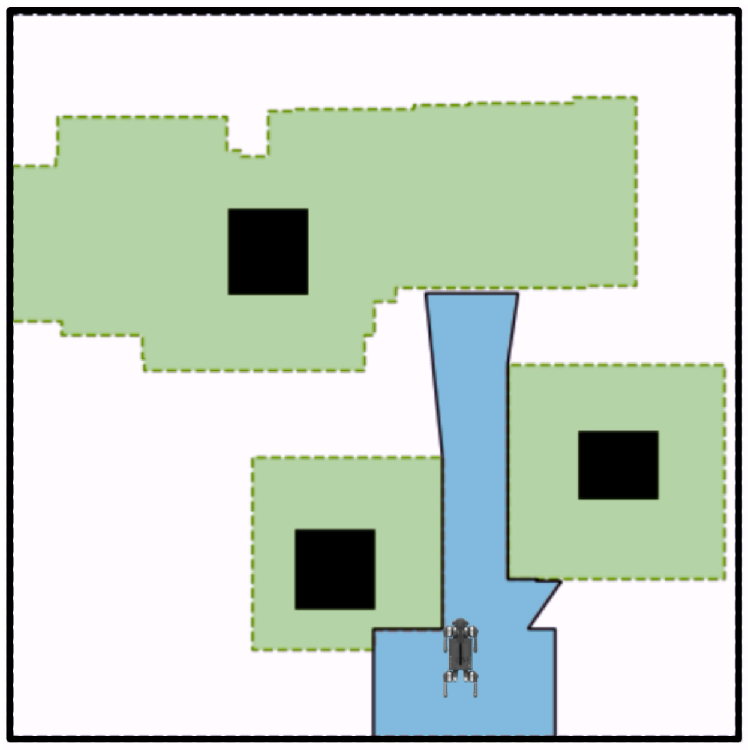}
        $t=8$
        \vspace{5pt}
        \label{fig:nonDet8}
    \end{minipage}
    \begin{minipage}{0.48\linewidth}
        \centering\captionsetup[subfigure]{justification=justified}
        \includegraphics[width=\linewidth]{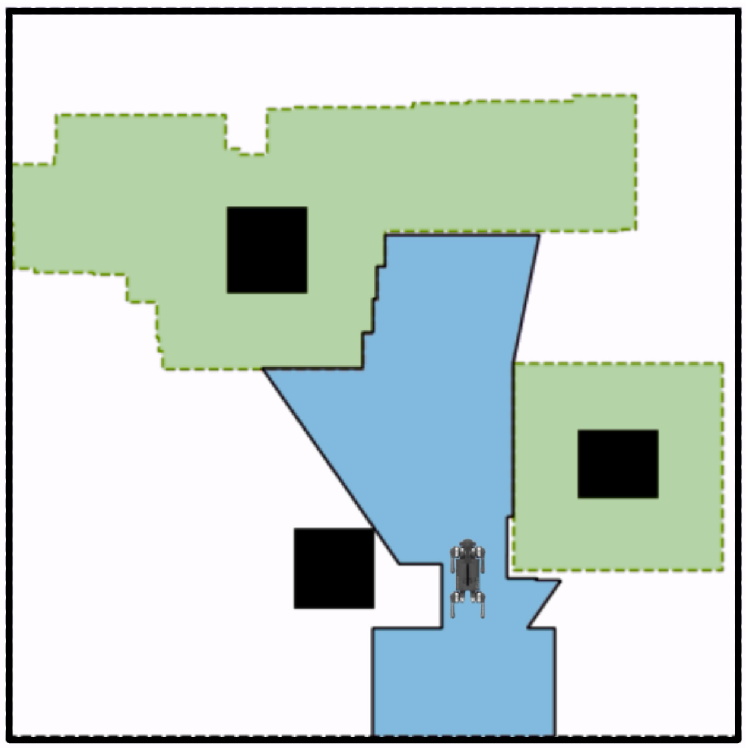}
        $t=14$
        \label{fig:nonDet14}
    \end{minipage}
    \hfill
    \begin{minipage}{0.48\linewidth}
        \centering\captionsetup[subfigure]{justification=justified}
        \includegraphics[width=\linewidth]{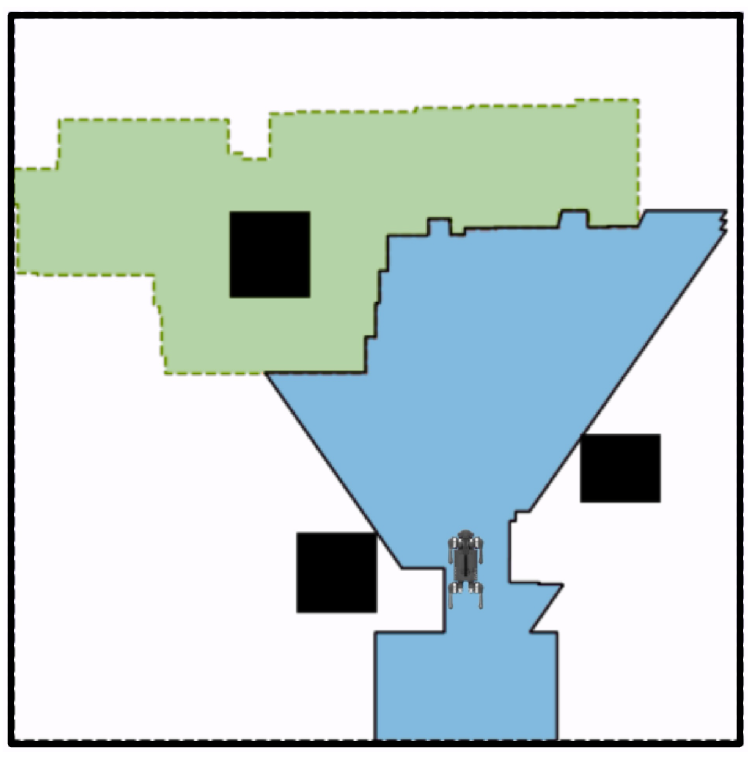}
        $t=17$
        \label{fig:nonDet17}
    \end{minipage}
    \caption{Simulation and non-deterministic filter updates. The robot begins with large occupied space predictions due to the inflation obtained through offline calibration (Section \ref{sec:approach-calibration}). After a few updates, the predicted occupied space $\overline{\mathcal{X}}^{\text{occ}}$ shrinks significantly.}
    \label{fig:nonDet}
\end{figure}

\begin{figure*}[t]
    \centering
    \includegraphics[width=0.95\linewidth]{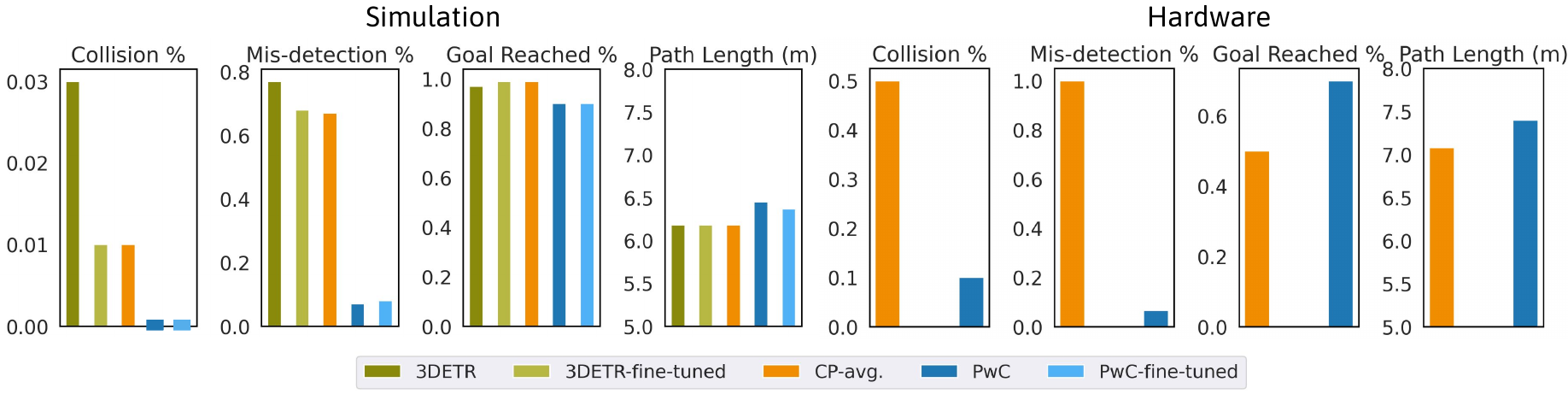}
    \caption{[${\epsilon = 0.15}$] \textbf{(Left)} Results for the simulated experiments across 100 new environments with 1 - 5 chairs (see Section \ref{sec:simulations}). \textbf{(Right)} Results for the hardware trials across 30 different chair configurations with 4-8 chairs described in Section \ref{sec:hardware}. Here the path length is averaged only for successful trials for both \pwc and CP-avg. due to the varying goal locations.}
    \label{fig:all-bars}
\end{figure*}

\section{Simulated Experiments}\label{sec:simulations}

We evaluate our approach for vision-based navigation in the PyBullet simulator ~\citep{coumans2020} using a diverse set of chairs from the 3D-Front dataset~\citep{fu20213d}. 

\smallskip
\noindent{\textbf{Simulation Environment.}}
We specify the environment distribution by randomly placing $1-5$ chairs from the diverse 3D-Front dataset~\citep{fu20213d} in an $8$ m $\times8$ m room (Figure~\ref{fig:simEnv}). We construct the simulation environment using CAD models of \emph{real} furniture pieces from the 3D-Front dataset~\citep{fu20213d}, which contains a highly diverse array of industrial CAD models developed by professional designers.

\smallskip
\noindent{\textbf{Robot Platform.}}
We evaluate each method on the Unitree Go1 quadruped robot, where we task the robot to navigate to a goal location that is about $7$m away from the initial location of the robot. The robot camera has a field of view of $70^\circ$ and a visibility range of $[1, 5]$ m. 

\smallskip
\noindent\textbf{Metrics for experiments.} We utilize the following metrics for our simulation experiments: a trial is counted as a collision if the robot collides with an obstacle and we count a misdetection for a trial if the free space predicted by the planner has any intersection with the ground-truth occupancy of the obstacles. We say that the goal has been reached in a given trial if the robot is able to navigate to within $1$ m around the goal in less than $140$ s. We also record the average path length for trials in which the goal is reached.

\subsection{Bounding-Box Predictors}~\label{sec:exps-calibration}

We first consider perception systems that output bounding-boxes. For our implementation, we use the 3DETR end-to-end transformer model~\citep{misra2021-3detr} as the pre-trained perception system. We compare our approach (\emph{Perceive with Confidence} --- \pwc) to three baselines to illustrate its effectiveness in achieving a user-specified safety rate. First, we consider the most common approach of directly using the outputs of the perception system~\citep{misra2021-3detr} in our planning pipeline. We call this baseline \textbf{3DETR}. Next, we consider the common practice of fine-tuning the outputs of the perception system using a small dataset of task-representative environments $D_{\text{tune}}$ (cf. Section~\ref{sec:fine-tuning}). We call this perception system \textbf{3DETR-fine-tuned}. Lastly, we perform calibration using conformal prediction; however, instead of accounting for the closed-loop distribution shift, we bound the misdetection rate averaged across environments \emph{and} states (similar to~\citep{sun_learning-based_2023}, which does not utilize conformal prediction, but quantifies expected errors in a perception system for a pre-defined distribution of states). We refer to this baseline as \textbf{CP- avg}. We consider two variations of our approach for comparison to the above baselines. First, we refine 3DETR outputs using our calibration procedure described in Section~\ref{sec:approach-calibration}. We call this approach \textbf{\pwc}. Second, the 3DETR outputs are fine-tuned and calibrated using split conformal prediction as described in Appendix~\ref{sec:fine-tuning}; we call this approach \textbf{\pwc-fine-tuned}. 

\smallskip
\noindent{\bf Calibration and Planning.} 
We implement the calibration procedure presented in Section~\ref{sec:approach-calibration} with the perception model $\phi$ instantiated as a bounding box predictor, mapping the observation $o_t$ to a union of bounding boxes. Formally, we represent each bounding box $j$ with the minimum and maximum coordinates in each dimension, $\big[d^{\min}, d^{\max}\big]_j$, where $d = (x,y)$ represents the spatial coordinates. The predicted occupied space is the union of 15 predicted most likely bounding boxes: $\overline\X^\text{occ} = \cup_{j=1}^{15} \big[d^{\min}, d^{\max}\big]_j$.
The parameter $q$ for this perception model is the inflation of the bounding boxes along each dimension. Therefore, in a given calibration environment $E_i$, from a given state $s$, and with a specific inflation parameter $q_i$, the predicted occupied region is defined as:
\begin{equation}
    \overline{\mathcal X}^\text{occ}_{s,i}(q_i)\coloneqq \cup_{j=1}^{15} \big[d^{\min}_{s,i}-q_i,d^{\max}_{s,i}+q_i\big]_{j}. 
\end{equation}
We collect a calibration dataset of $400$ environments as specified. In the $8$ m $\times8$ m room, we use a fixed set of $400$ sampled configurations for the sampling-based motion planner and use the same set of samples for the calibration procedure. Similarly, we collect an additional fine-tuning dataset $D_{\text{tune}}$ consisting of $100$ environments. These environments include ones with occlusions of the goal and objects in the scene. 
With an allowable misdetection rate of $\epsilon=0.15$, we obtain $\hat{q}_{0.85} = 0.75$ m for {\pwc}, $\hat{q}_{0.85} = 0.65$ m for {\pwc-fine-tuned}, and $\hat{q}_{0.85} = 0.05$ m for {CP-avg.} through calibration.  The planner replans and obtains a new sensor observation to update the filter every $0.5$ s or less (if the previous plan is already completed). 

\smallskip
\noindent\textbf{Misdetection Rate.}
We examine the misdetection rate, 
i.e., whether obstacles in the scene are classified as free space at any point during a trial, of our method, \pwc, and the baseline CP-avg., which is also calibrated using conformal prediction but without accounting for the closed-loop distribution shift. We vary the allowable misdetection bound $\epsilon$ for each method and compute the rate of misdetections in $100$ test environments. As seen in Figure~\ref{fig:missRate}, our method guarantees a misdetection rate lower than the threshold $\epsilon$ while CP-avg. violates this threshold for every $\epsilon$ considered. 

\begin{figure}[h]
    \centering
    \includegraphics[width = 0.9\linewidth]{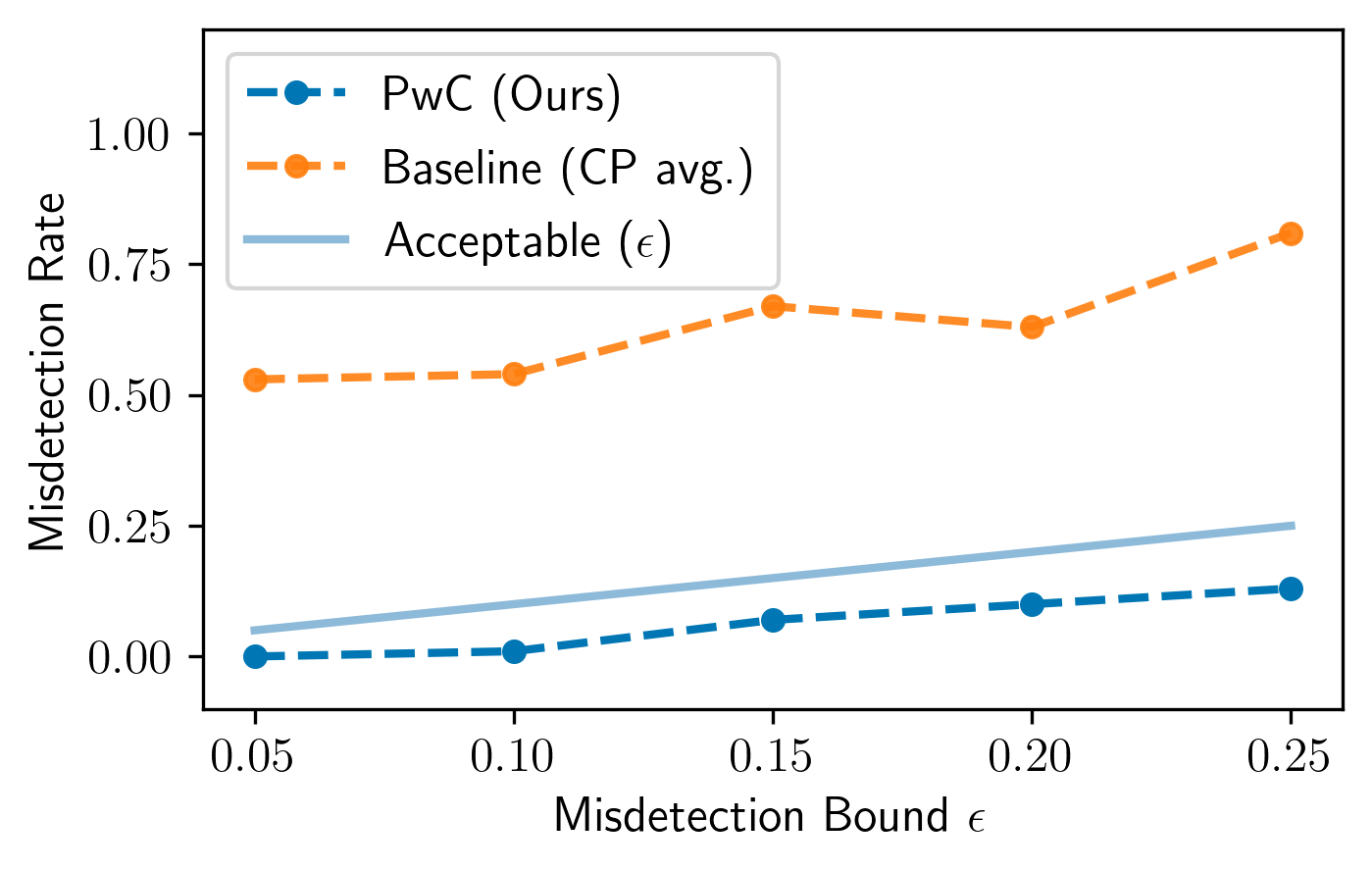}
    \caption{As we relax the confidence threshold by increasing $\epsilon$, the misdetection rate increases but remains bounded for \pwc. The baseline method has a misdetection rate much higher than acceptable.}
    \label{fig:missRate}
\end{figure}

\smallskip
\noindent\textbf{Collision Rate.}
We compare \pwc to the baselines in $100$ new environments drawn from the same distribution as calibration environments. Figure~\ref{fig:simEnv} illustrates one such test environment. Figure~\ref{fig:nonDet} shows the evolution of the free space in this environment using \pwc. Although the initial calibrated perception system outputs are inflated, the non-deterministic filter is able to expand the predicted free space in a few time steps and ensure that the robot can navigate without unnecessary conservatism, while guaranteeing safety. The results are summarized in Figure~\ref{fig:all-bars}. 
We observe that our proposed approaches, \pwc and \pwc-fine-tuned, have no collisions in any environments. While the robot reaches the goal in a slightly lower percentage of environments compared to baselines, we emphasize that ours is the only approach that is able to ensure a low,  statistically guaranteed misdetection rate across test environments. 

\smallskip
\noindent\textbf{Ablations.}
To further illustrate the effect of misdetections on safety, we consider a different distribution of environments wherein we randomly place a \emph{single} chair in the straight line path between the initial position of the robot and the goal. 
For a safety threshold $1-\epsilon = 0.85$, we compare $\pwc$, CP-avg, and 3DETR.
The results are provided in Figure~\ref{fig:small-bars} for $100$ new test environments, wherein the goal is reached if the robot navigates to within $2$ m of the goal. In these environments, the desired safety rate is not met by the baselines while our approach is still statistically guaranteed to be safe.

\begin{figure}[h]
    \centering
    \includegraphics[width=0.9\linewidth]{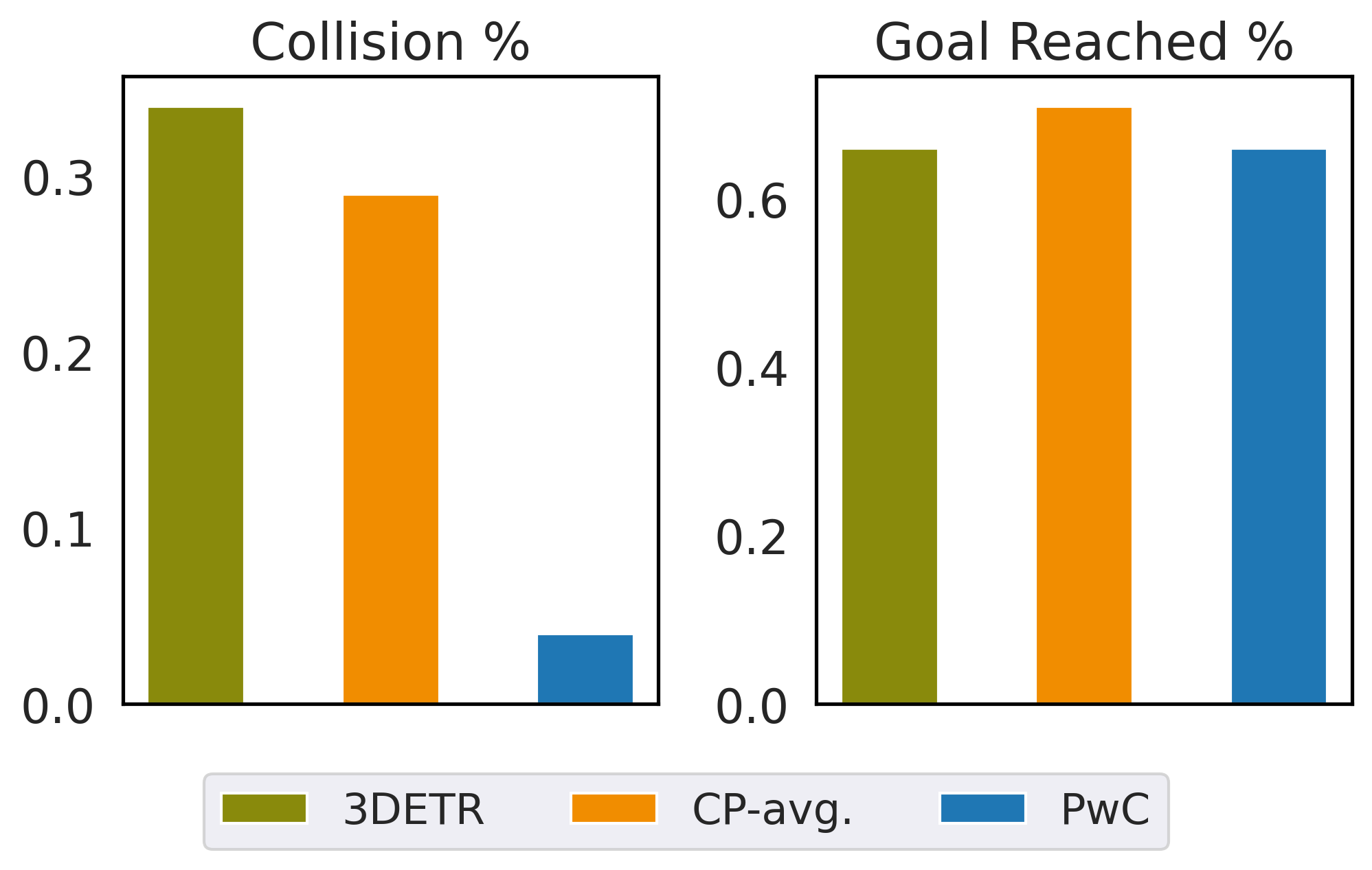}
    \caption{A comparison between collision rates of different perception systems that use the same planner.}
    \label{fig:small-bars}
\end{figure}

We provide additional simulation results in Appendix~\ref{appendix:kl-divergence-results} that illustrate: 1) the effects of closed-loop distribution shifts on safety wherein \pwc is robust to an increase in the level of closed-loop distribution shift, while the baseline CP-avg. is not, leading to higher collision rates; 2) the tradeoff in different partition sizes for fine-tuning using split-CP; 3) the effect of varying the allowable safety rate $\epsilon$; 4) the effect of varying the number of sampled configurations; and 5) comparing \pwc to a method of heuristically inflating bounding boxes.

\begin{figure*}[t]
    \centering
    \includegraphics[width=\linewidth]{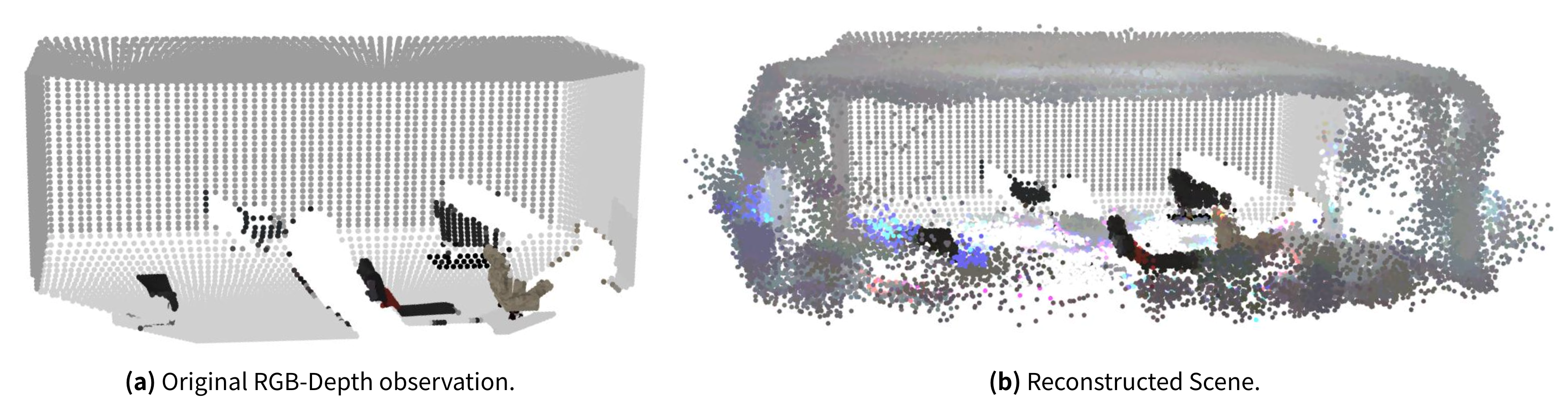}
    \caption{NU-MCC scene completion with 3D reconstruction.}
    \label{fig:numcc-scene-completion}
\end{figure*}
    
\subsection{Occupancy Predictors}
\label{sec:numcc}
Now, we consider perception systems that predict occupancy maps.
Specifically, we demonstrate the framework with the scene completion model NU-MCC~\citep{lionar2023nu}.
NU-MCC takes an RGB-Depth image as input and predicts the value of the unsigned distance function (UDF) of each point in space. The original NU-MCC model includes a 3D reconstruction phase (Figure~\ref{fig:numcc-scene-completion}), where points with UDF less than a threshold, $q$, are kept and shifted to the surface of objects. Although this procedure results in better 3D visualizations, it breaks the correspondence between the threshold $q$ and spatial coverage. In our method, we only use the predicted UDF from NU-MCC. In addition, we exponentially scale all the predicted UDFs to achieve a more uniformly covered range of UDF values.

\smallskip

\smallskip
\noindent\textbf{Calibration and Planning.} 
To define the non-conformity score, we find the smallest threshold $q_i$ in each environment $E_i$ such that the predicted occupancy covers the ground truth occupancy. Formally, the observation acquired at time step $t$ is denoted $o_t$, represented by an RGB-depth image. The perception model, denoted $\phi$, is the combination of (i) the adapted NU-MCC model which predicts UDF for each point in space, (ii) filtering out points in space with $\text{UDF}>q$, and (iii) projecting the resulting pointcloud onto a 2D occupancy grid as a bird’s eye-view. This perception pipeline is summarized in Figure~\ref{fig:numcc-pipeline}. Thus, $\phi$ maps $o_t$ to the occupancy representation of the environment, $\mathcal Z$. In our implementation, $\mathcal Z$ is an $n\times n$ grid with boolean entries, with $1$ representing occupied and $0$ otherwise. We use $P\in\mathcal Z$ to denote one point on the grid in $\mathcal Z$.
\begin{figure*}[tb]
    \centering
    \includegraphics[width=\linewidth]{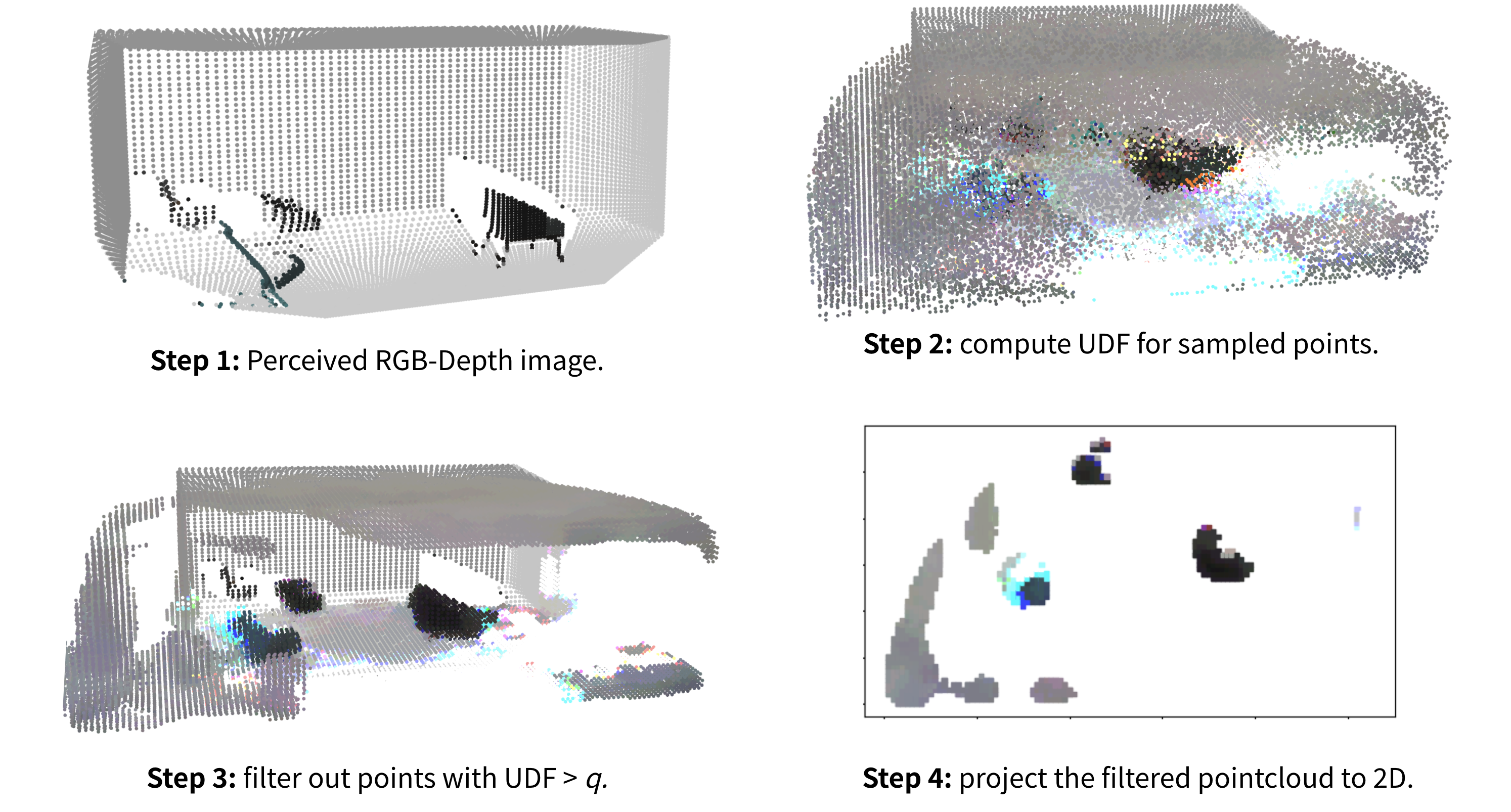}
    \caption{Perception model $\phi$ based on NU-MCC, showing the construction of calibrated occupancy maps with provable guarantees on the correctness of the resulting map.}
    \label{fig:numcc-pipeline}
\end{figure*}
In a given calibration environment $E_i$, from a given state $s$, and a specific threshold $q_i$, the predicted occupancy is defined as: 
\begin{equation}
    \overline{\mathcal X}^\text{occ}_{s,i}(q_i)\coloneqq \{P\in \mathcal Z \mid \text{UDF}(P)\leq q_i\}.
\end{equation}
As $q_i$ increases, $\overline{\mathcal X}^\text{occ}_{s,i}(q_i)$ expands monotonically, satisfying the property stated in Section~\ref{sec:approach-calibration}. Therefore, proposition~\ref{prop:calibration} follows, guaranteeing that the calibrated system $\tilde \phi$ predicts occupancies that cover the ground truth with high probability.

We generate a calibration dataset consisting of 300 environments from the distribution described in Section~\ref{sec:simulations}. In addition,  the chairs are randomly rotated about the $z$-axis (Figure~\ref{fig:bbox_numcc_traj}, middle and right). We use a fixed set of $812$ sampled configurations, same as the set used by the sampling-based planner described in Section~\ref{sec:approach-planning}, modified for the occupancy map setting. With an allowable misdetection rate of $\epsilon=0.15$, we obtain $\hat q_{0.85}$ for \pwc-NU-MCC and NU-MCC-CP-avg through calibration, and use the (exponentiated) default for NU-MCC.
For \pwc calibrated on  task distribution with rotated chairs and more states, we obtain $\hat q_{0.85} = 1.10$ m. Note that the $\hat q_{1-\epsilon}$ for \pwc stands for the bounding box inflation rather than the UDF threshold.

\smallskip
\noindent\textbf{Results.} Figure~\ref{fig:numcc-sim-rot} summarizes the simulation results. We compare our method based on occupancy prediction (\pwc-NU-MCC) against the non-calibrated version (NU-MCC), as well as the method applying conformal prediction without accounting for closed-loop distribution shift (NU-MCC-CP-avg). We also compare against our method based on bounding box predictors, as described in Section~\ref{sec:approach-calibration} (\pwc). We use the same metrics as described in Section~\ref{sec:simulations}. 

For the results shown in Figure~\ref{fig:numcc-sim-rot}, we use a test dataset of 100 environments from the same distribution as the calibration dataset. The rotated chairs are no longer axis-aligned on the $xy$-plane, causing unnecessary conservatism when using the bounding box representation. Indeed, Figure~\ref{fig:numcc-sim-rot} shows that \pwc-NU-MCC has a much higher success rate (40\% improvement) and shorter path length compared to \pwc, while the safety rate is maintained. The four methods in the figure are arranged in the order of least to most conservative from left to right, showing a significant drop in collision rate and mis-detection rate with our methods, which also fall within the guarantee of less than $15\%$.
\begin{figure}[h]
    \centering
    \includegraphics[width=\linewidth]{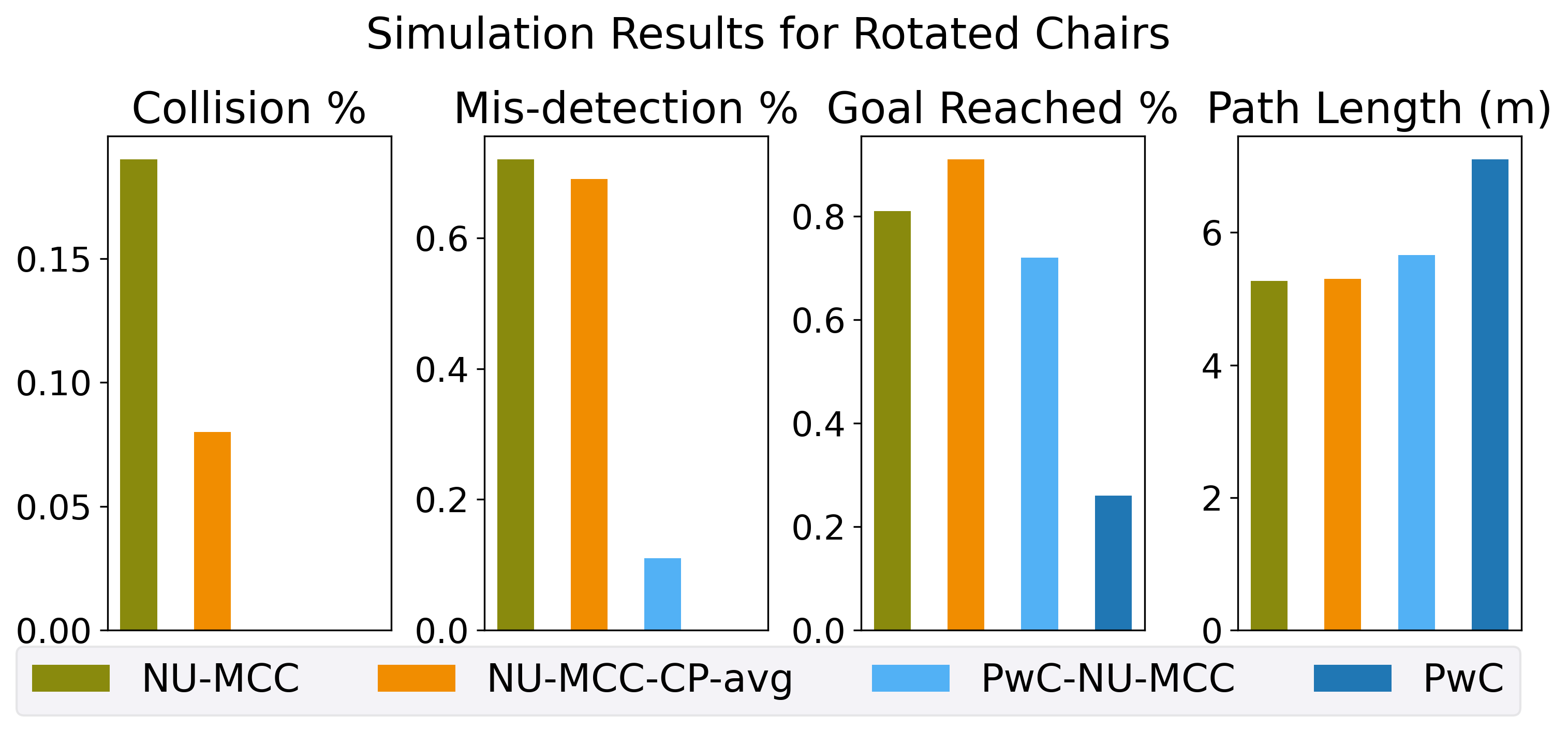}
    \caption{Results for the simulated experiments with occupancy predictors, across 100 environments with rotated chairs.}
    \label{fig:numcc-sim-rot}
\end{figure}

Figure~\ref{fig:bbox_numcc_traj} shows the trajectory of the robot in the same simulation environment, using three different perception modules. The left plot shows \pwc as described in Section~\ref{sec:exps-calibration}, while the middle and right plots show \pwc-NU-MCC and NU-MCC respectively. For \pwc, the bounding boxes are unnecessarily inflated, causing the robot to get stuck in the overly conservative estimate of free space. \pwc-NU-MCC preserves the safety guarantee while characterizing the true free space much more accurately, reaching the goal safely. NU-MCC overestimates the free space and collides with the obstacle.

\begin{figure*}[tb]
    \centering
    \includegraphics[width=\linewidth]{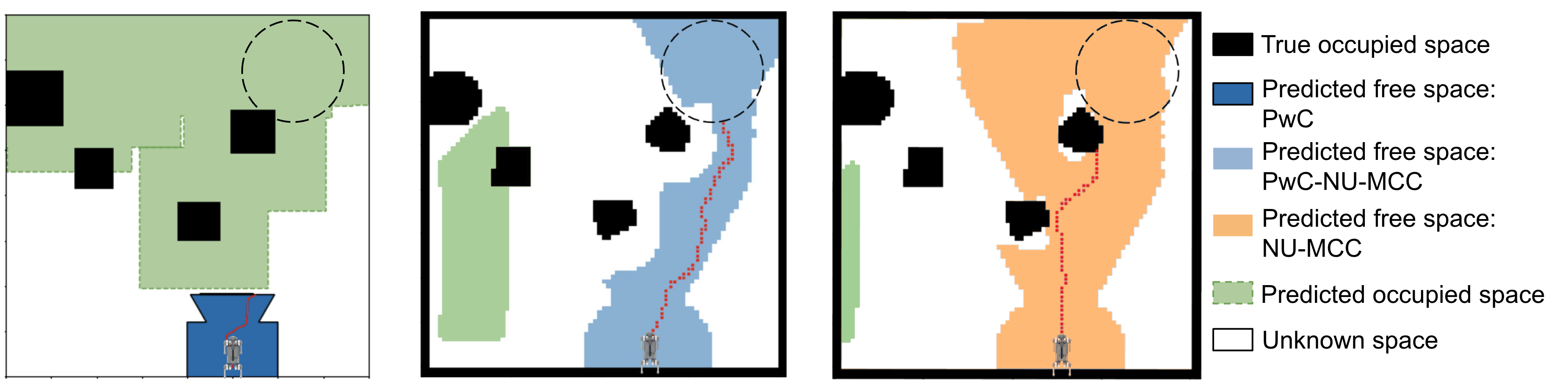}
    \caption{Comparison of trajectories in the same environment using three different perception systems: \textbf{(Left)} \pwc with 3DETR and bounding box representations, \textbf{(Middle)} \pwc-NU-MCC, \textbf{(Right)} NU-MCC. The robot marks the start position, and the dashed circle represents the goal area. The blue region shows predicted free space, and the black regions represent the ground truth occupied space, either as bounding boxes or as occupancy grids. The robot's trajectory is marked in red. Among all methods, only \pwc-NU-MCC enables the robot to safely reach the goal.}
    \label{fig:bbox_numcc_traj}
\end{figure*}
\section{Hardware Experiments}\label{sec:hardware}
Now, we validate the end-to-end statistical safety assurance of our approach on a quadruped robot in the task of vision-based navigation with two sets of experiments. As in our simulation setup in Section~\ref{sec:simulations}, the robot is tasked with navigating to a goal location while avoiding different chairs placed in varying configurations across an $8$ m $\times 8$ m room. We conduct two sets of experiments, which we term ``nominal'' and ``fast''. In the nominal experiments, the robot navigates with an average forward speed of $0.4$ m/s, whereas in the fast experiments, we speed up the robot to $1.5$ m/s. In both sets of experiments, we utilize the perception system calibrated in simulation with a guaranteed safety rate of $1-\epsilon = 0.85$, as described in~\Cref{sec:exps-calibration}. Our calibration in simulation environments with realistic and diverse environments ensures that the performance of the perception system remains similar in its simulation and hardware implementations.
We compare our \pwc method against CP-avg. (defined in Section \ref{sec:simulations}). We run the nominal experiments across 30 different physical environments (60 trials total) and run the fast experiments across 15 environments (30 trials total). \addendum{One challenge is to ensure a minimal sim-to-real gap for perception. In order to address this, we utilize depth measurements as the robot's sensory input. This choice facilitates a small sim-to-real gap, as observed in prior work \citep{loquercio2021learning, gervet2023navigating}}. 

\subsection{Experiment Setup}
We represent the robot's state as $s_t = [x, y, v_x, v_y]^T$ where $x$ and $y$ are its position in the environment and $v_x$ and $v_y$ are the respective velocities (See Figure~\ref{fig:hardware_res} for the coordinate system). For each trial, the robot is initialized around position $[4, 0]$ m (with the origin set to the bottom left corner of the room) and has 60 seconds to reach the goal. For the nominal experiments, the robot replans every second in a receding horizon manner using the safe planner described in Section~\ref{sec:approach-planning}. The goals are varied every 10 environments and include positions $[2, 7]$ m, $[4, 7]$ m, and $[6, 7]$ m, with a radius of $1$ m. For the fast experiments, the robot replans every $0.8$ s. The goal is set at $[6, 7]$ m, and the radius is increased to $1.5$~m. 

\smallskip
\noindent\textbf{Hardware.} We use the Unitree Go1 quadruped robot with fully onboard sensing and computation. The robot is equipped with a ZED 2i RGB-D camera and a ZED Box computer attached to the base of the robot as shown in the top row of Figure~\ref{fig:hardware_res}. The Zed 2i provides the Go1 with point cloud observations with a $70^\circ$ field of view  and a visibility range of [1, 5]m. The Zed 2i also uses vision-inertial odometry to provide accurate positional state estimates in the environment. The Zed Box includes an 8-core ARM processor and a 16GB Orin NX GPU. This allows us to process the point cloud observations in order to produce bounding boxes using the pre-trained 3DETR model \citep{misra2021-3detr}. The bounding boxes are aggregated over time to update the estimated free, occupied, and unknown spaces as described in Section~\ref{sec:approach-planning}. The safe planner described in Section~\ref{sec:approach-planning} is used to output Cartesian velocity commands bounded at a speed of 0.8m/s; these commands are sent from the Zed Box over UDP to the Go1's processor. \addendum{Our method is implemented in real-time on the Zed Box hardware with replanning every $0.5$ seconds of which the non-deterministic filter takes $0.00025$ seconds to run}. The dynamics of the Go1 are estimated using MATLAB's System Identification Toolbox \citep{MATLAB} and are provided in Appendix \ref{appendix:sysid}. 

\smallskip
\noindent\textbf{Environments.} For both sets of experiments, we test the robot in different environments, consisting of various chair configurations and geometries in an $8$ m $\times8$ m room. Configurations range from random, occluded goal, occluded chairs, clustered chairs, and narrow paths (approximately $1.8$ m in width leaving $0.4$ m of available free space for \pwc to find). For the nominal experiments, each environment has between 4 and 8 chairs present. See Appendix \ref{appendix:chairs} and \ref{appendix:envs} for the unseen chairs used in testing and the environment configurations respectively. We use a Vicon motion capture system to log the ground-truth placement and bounding boxes of the chairs for each environment. For the fast experiments, each environment has 6 chairs present. Since recording ground-truth data introduces latency that prevents the robot from reaching its target velocity of $1.5$ m/s, we report only collision and goal-reach rates for this set of experiments.

\begin{figure*}[t]
    \begin{minipage}{\linewidth}
        \centering\captionsetup[subfigure]{justification=justified}
        \begin{minipage}{0.32\linewidth}
            \centering
            \includegraphics[width=\linewidth]{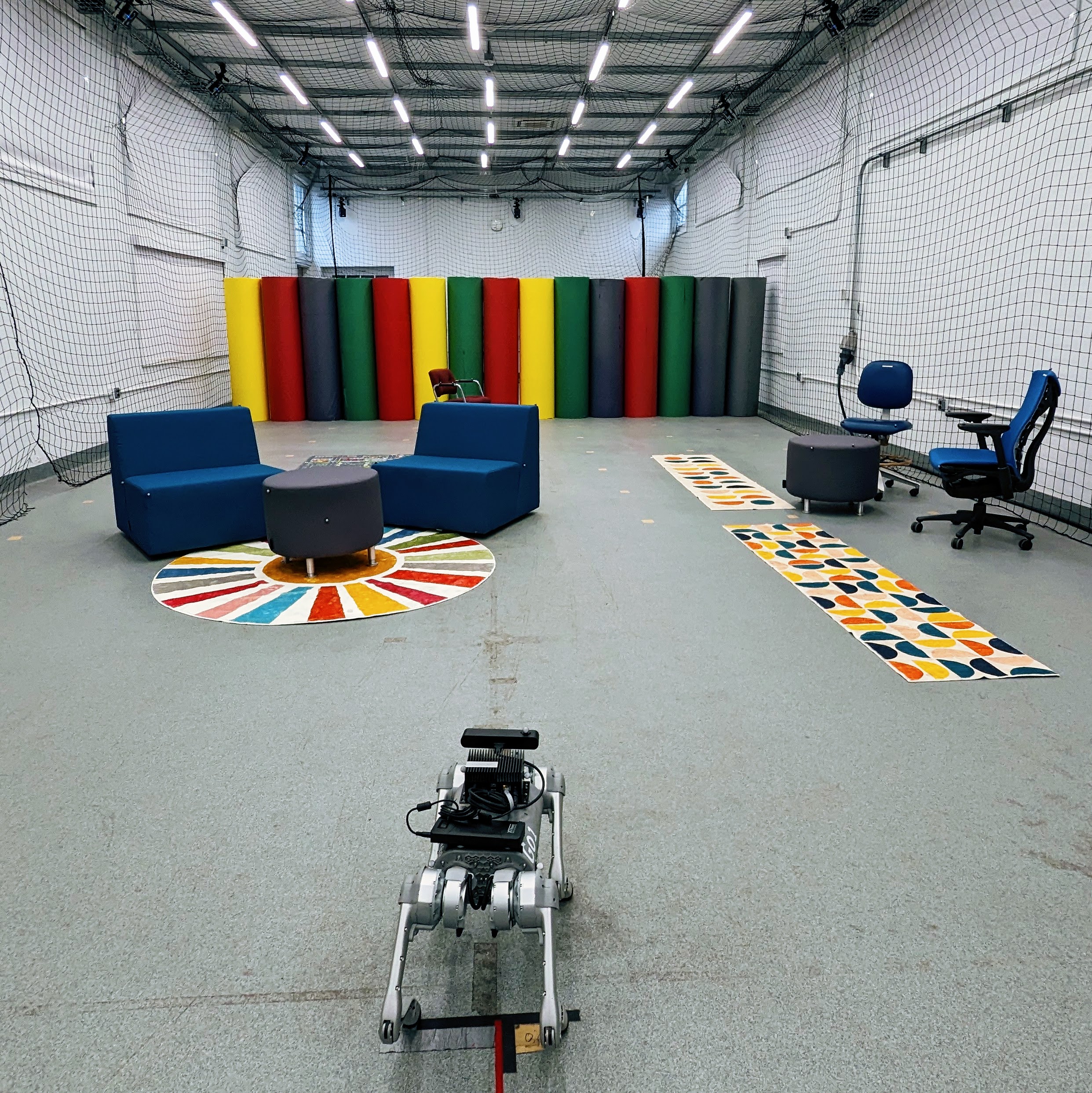}
            \label{fig:env1_layout}
        \end{minipage}
        \hfill
        \begin{minipage}{0.32\linewidth}
            \centering
            \includegraphics[width=\linewidth]{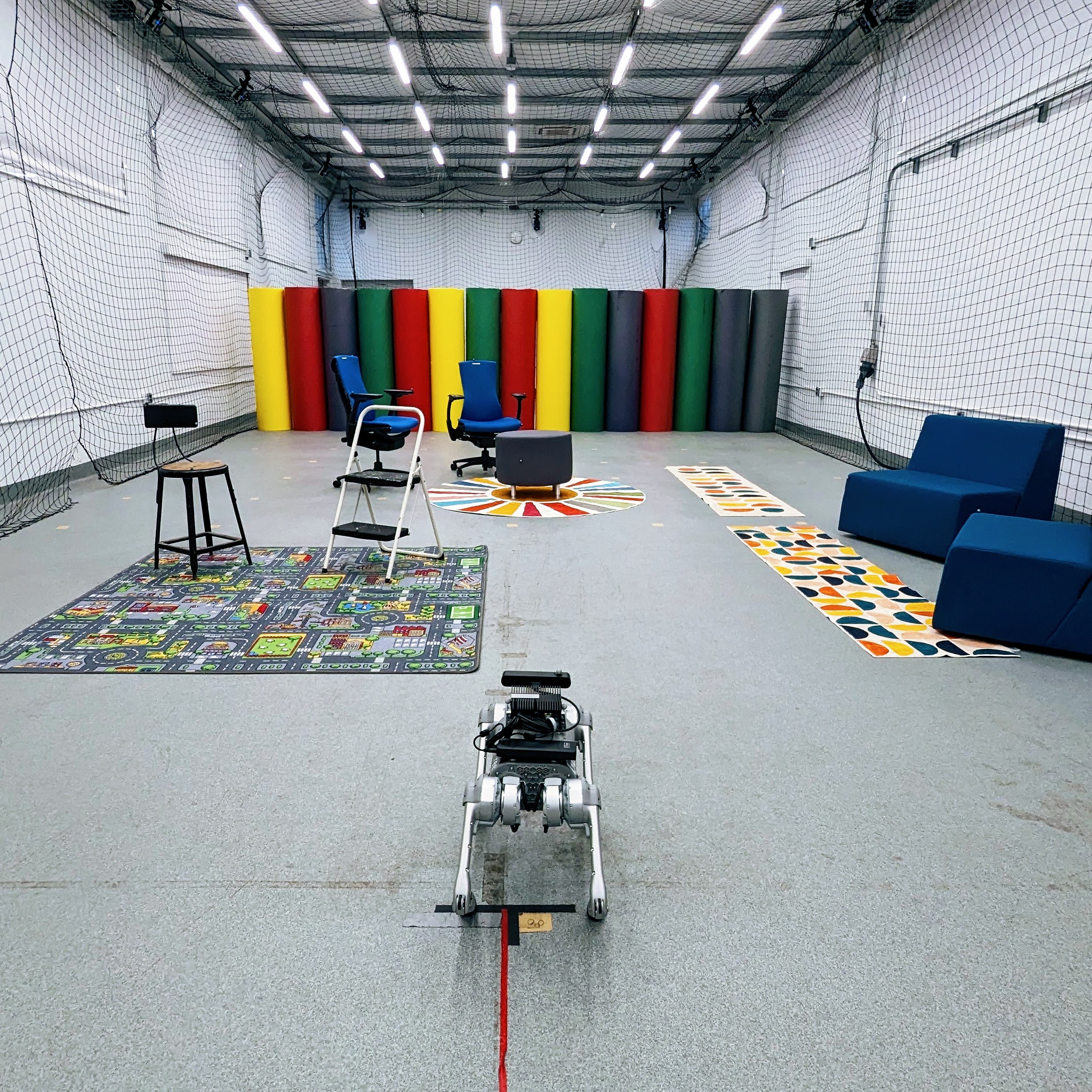}
            \label{fig:env4_layout}
        \end{minipage}
        \hfill
        \begin{minipage}{0.32\linewidth}
            \centering
            \includegraphics[width=\linewidth]{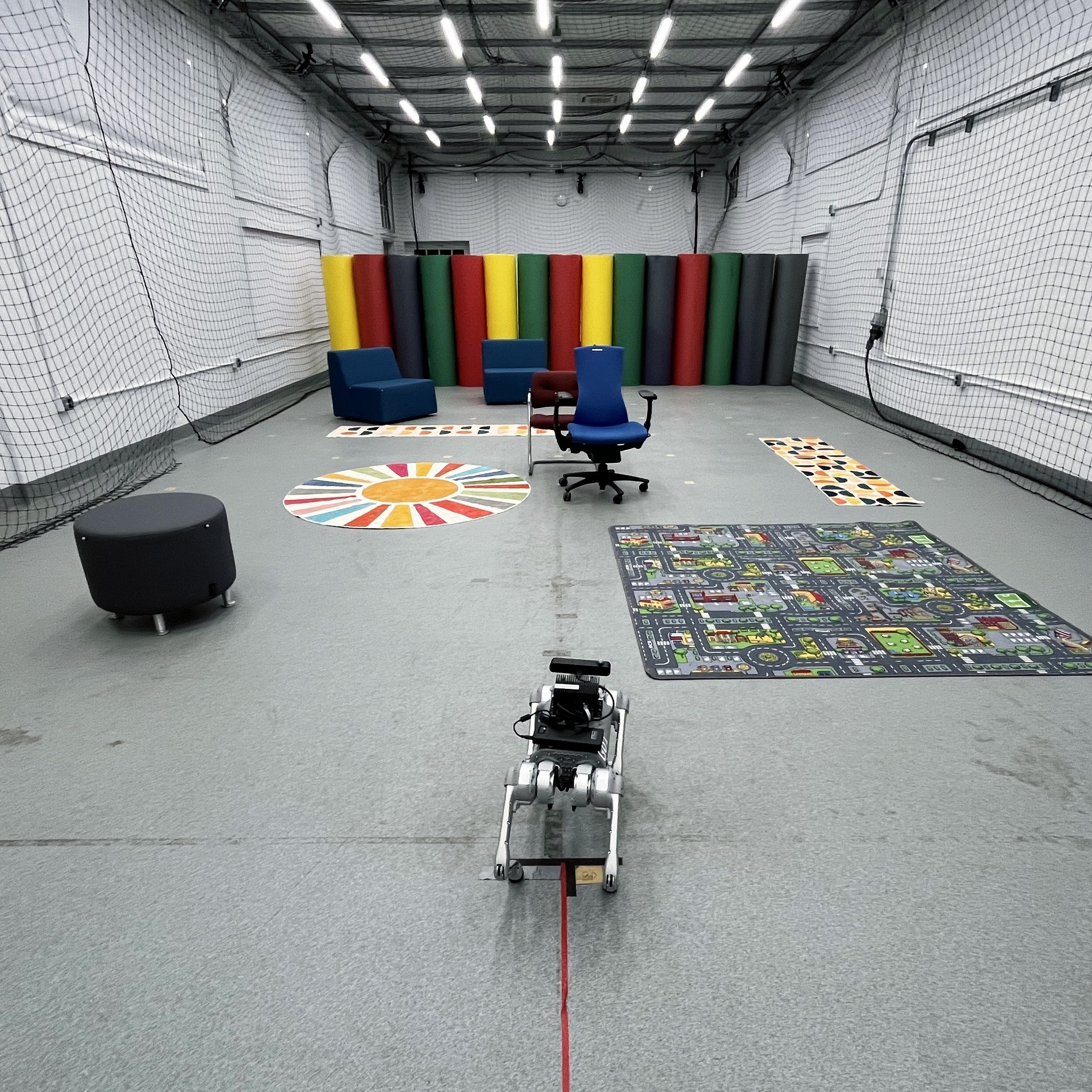}
            \label{fig:env14_layout}
        \end{minipage}
    \end{minipage}
    \vspace{5pt} 

    \begin{minipage}{\linewidth}
        \centering\captionsetup[subfigure]{justification=justified}
        \begin{minipage}{0.331\linewidth}
            \centering
            \includegraphics[width=\linewidth]{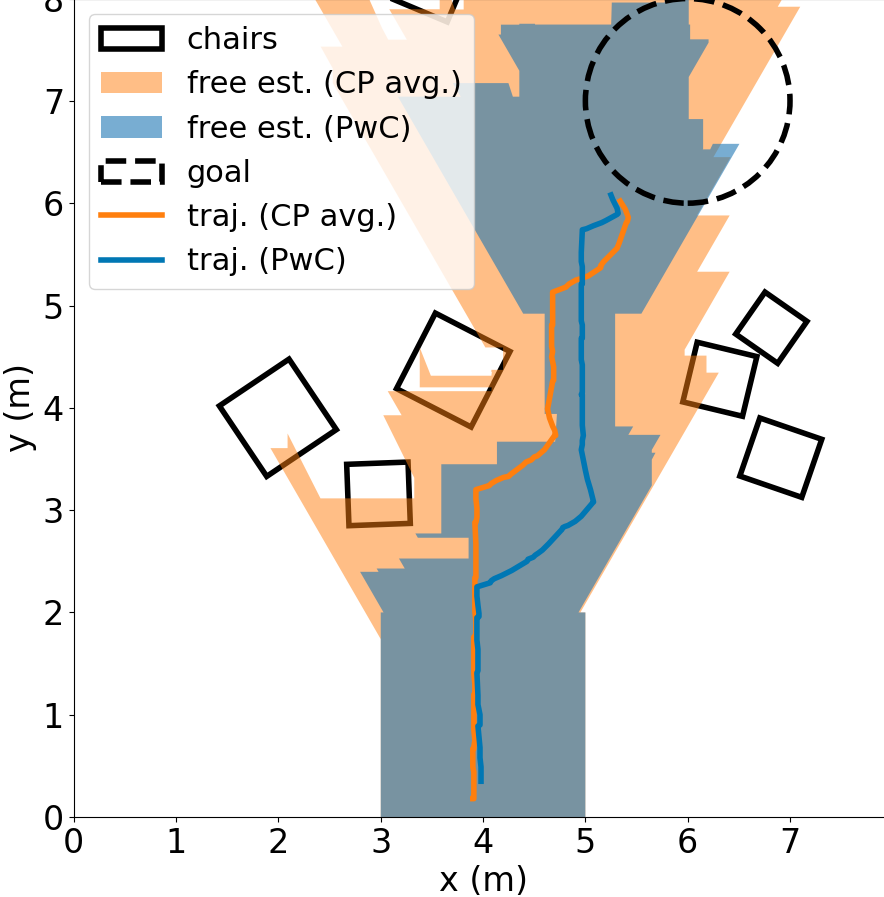}
            \label{fig:env1_traj}
        \end{minipage}
        \hfill
        \begin{minipage}{0.3\linewidth}
            \centering
            \includegraphics[width=\linewidth]{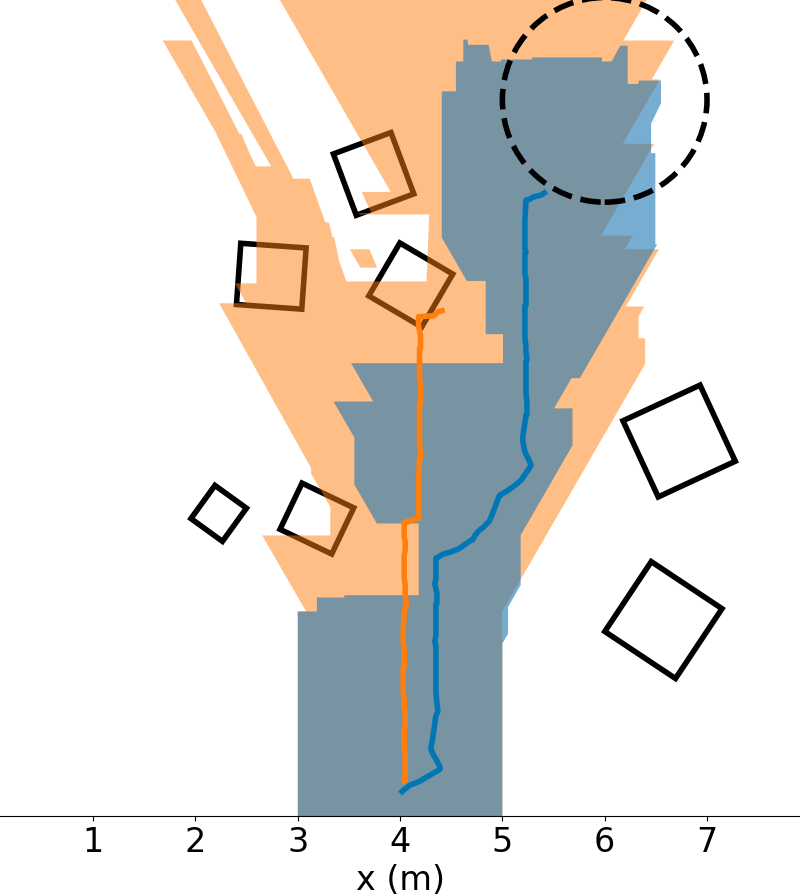}
            \label{fig:env4_traj}
        \end{minipage}
        \hfill
        \begin{minipage}{0.3\linewidth}
            \centering
            \includegraphics[width=\linewidth]{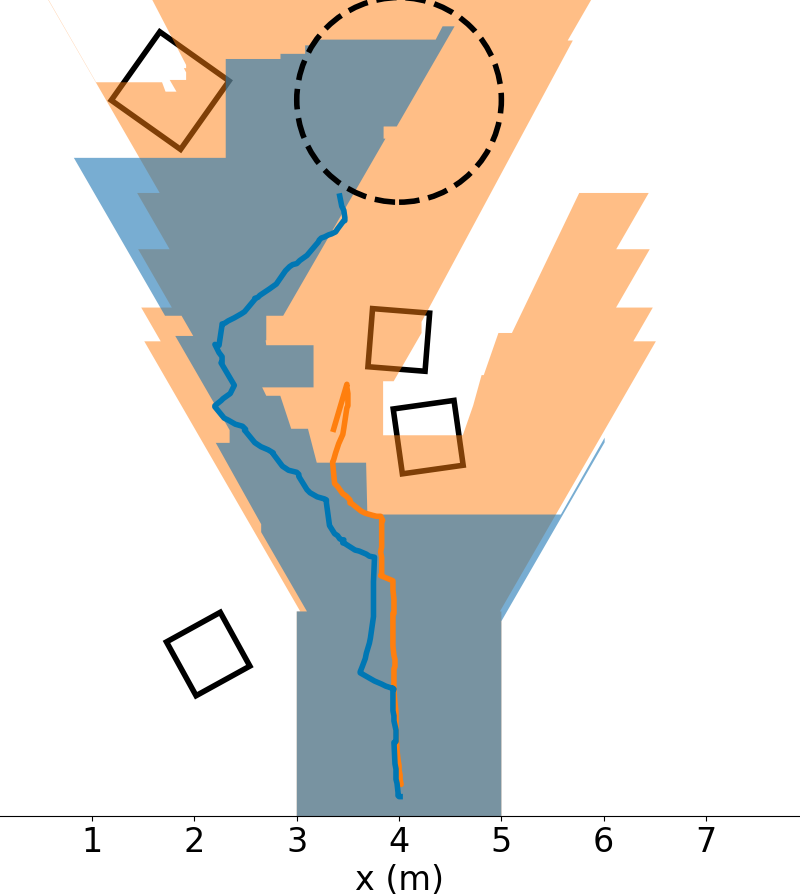}
            \label{fig:env14_traj}
        \end{minipage}
    \end{minipage}

    \caption{Hardware trial results. \textbf{(Top)} The physical layouts of three example environments. 
    \textbf{(Bottom)} The robot trajectories performed in these environments. Estimated free space is shaded, and robot trajectories are represented by solid lines: our method in blue and the baseline in orange. 
    Our method (\pwc) successfully navigates to the goal through challenging areas, whereas the baseline misdetects free spaces, leading to collisions in some cases.}
    \label{fig:hardware_res}
\end{figure*}

\subsection{Results}
\label{sec:fast-robot}
For \pwc, we used the ${\hat q_{0.85} = 0.73}$ m threshold found in simulation to inflate the predicted bounding boxes returned from 3DETR in order to achieve 85\% confidence that our robot will remain safe in new environments. We summarize key statistics of \pwc compared to CP-avg. (${\hat q_{0.85} = 0.02}$ m) across 30 different environments in Figure~\ref{fig:all-bars} (right). Importantly, our trials demonstrate that our confidence bound holds on hardware in real environments and without being too conservative. \pwc was safe through 90\% of the trials and also had comparable path length to the baseline. Meanwhile, the baseline struggled in the real environments by having misdetections in each trial and colliding with a chair in half of the trials. See Figure~\ref{fig:hardware_res} for trajectories and free space estimations through several environments with narrow spaces, occluded chairs, and occluded goals. The supplementary video contains full example trials. 

\pwc's low misdetection rate and higher success rate in these trials  emphasize the efficacy of the bounding box inflation provided by CP paired with the non-deterministic filter. This principled pairing inflates the (potentially poor) bounding box detections to properly capture obstacles but quickly shrinks the occupied space with the filter such that the robot can still navigate effectively. 

For faster navigation, we employ two complementary strategies: minimizing idle planning time via concurrent planning and execution, and increasing the robot’s velocity in the pre-sampled configuration space. 
First, we implement a concurrent planning and execution framework using threading. The planning process is divided into two stages: an initial policy computed from the starting state and a continually updated future policy computed from the robot's predicted future state. At the start of each trial, the robot calculates an initial policy to reach the goal. During execution, a separate thread uses the robot's predicted future state, the state at the end of the current policy, to concurrently generate the next policy phase. This synchronization of execution and planning minimizes idle planning time.
To support the increased speed, we re-sample the configuration space by keeping the positions unchanged but scaling up the speed, so that the calibration results would still hold. The re-sampled states, along with pre-computed reachability sets using the robot dynamics, are used to generate planned trajectories at high speed.

As a result, we increase the robot's average forward speed from about $0.4$ m/s to $1.5$ m/s and reduce the average task completion time in similar environments from about $28$ seconds to $8$ seconds.
These performance gains are achieved without significant compromise in the safety rate of hardware validations of our previous method. We present the accelerated hardware results across 15 new environments (different from those shown in Appendix~\ref{appendix:envs}) and compare the collision rate and success rate against the CP-avg. baseline, as shown in Table~\ref{tab:result_fastpwc}.

\begin{table}[h!]
\centering
\caption{Results for accelerated hardware experiments with \pwc and CP-avg.}
\begin{tabular}{ccc}
\hline
 \textbf{Method}& \textbf{Collision} & \textbf{Goal Reached}\\ 
 \hline\hline
Accelerated \pwc& 20\% & 53.3\% \\ 
\hline
CP-avg.& 66.7\% & 33.3\% \\ 
\hline
\end{tabular}
\label{tab:result_fastpwc}
\end{table}

\section{Discussion and Conclusions}\label{sec:conclusion}

We present a modular framework, \pwc, for rigorously quantifying the uncertainty of a pre-trained perception model in order to provide an end-to-end statistical safety assurance for perception-based navigation tasks. Notably, our statistical assurance holds for generalization to new environmental factors (e.g., new obstacle geometries and configurations) and allows for the distribution shift of states that may occur during closed-loop deployment of the perception system with the planner. 
Additionally, we address the conservatism introduced by the inflation of bounding boxes, by applying \pwc to occupancy predictors and achieving much better performance without sacrificing the safety assurance.
We validate the theoretical safety assurances provided by \pwc with our simulation and hardware experiments, demonstrating significant empirical improvements in safety compared to baseline approaches that do not consider closed-loop distribution shift. 

\smallskip
\noindent\textbf{Limitations and Future Work.} One limitation of our work is the assumption of static obstacles. As a future direction, we are interested in quantifying uncertainty in both the state of agents moving in the environment and predictions of their \emph{semantic labels} (e.g., ``pedestrian" vs. ``bicyclist"), and utilizing game-theoretic planning techniques that account for the uncertainty in the agents' current state and future motion. 
Additionally, while our definition of safety is limited to collision avoidance, we are interested in extending it to richer settings such as navigation on limited surface area.
Lastly, we are interested in uncertainty quantification for perception models that support tasks beyond point-to-point navigation, e.g., calibrating the outputs of multi-modal foundation models for language-instructed navigation where we ensure accurate detection of target objects as well as semantically unsafe regions~\citep{santos2024updating}. 
We expect that rigorous uncertainty quantification is a necessary step towards fully leveraging the power of large foundation models \citep{firoozi2023foundation} while safely integrating them into future robotic systems. 

\begin{acks}
The authors were partially supported by the Toyota Research Institute (TRI), the NSF CAREER Award [\#2044149], the Office of Naval Research [N00014-23-1-2148], and the Princeton SEAS Innovation Award from The Addy Fund for Excellence in Engineering. This article solely reflects the opinions and conclusions of its authors and not NSF, ONR, TRI or any other Toyota entity.
\end{acks}

\vspace{4pt}
\noindent\textbf{Statements and declarations}

\noindent Not applicable.

\vspace{4pt}
\noindent\textbf{Ethical considerations}

\noindent Not applicable.

\vspace{4pt}
\noindent\textbf{Consent to participate}

\noindent Not applicable.

\vspace{4pt}
\noindent\textbf{Consent for publication}

\noindent Not applicable.

\vspace{4pt}
\noindent\textbf{Declaration of conflicting interest}

\noindent The author(s) declared no potential conflicts of interest with respect to the research, authorship, and/or publication of this article.

\vspace{4pt}
\noindent\textbf{Funding statement}

\noindent The authors were partially supported by the Toyota Research Institute (TRI), the NSF CAREER Award [\#2044149], the Office of Naval Research [N00014-23-1-2148], and the Princeton SEAS Innovation Award from The Addy Fund for Excellence in Engineering. This article solely reflects the opinions and conclusions of its authors and not NSF, ONR, TRI or any other Toyota entity.

\bibliographystyle{SageH}
\balance
\bibliography{references}

\clearpage

\begin{appendices}
\section{Perception and Planning Extensions}
\label{appendix:extensions}

In this section, we outline a few extensions to the basic technical approach described in Sections~\ref{sec:approach-calibration} and \ref{sec:approach-planning}: (i) fine-tuning a pre-trained perception model and (ii) incorporating sensor and dynamics uncertainty.

\subsection{Fine-Tuning a Pre-Trained Perception Model}
\label{sec:fine-tuning}

In Section~\ref{sec:approach-calibration}, we assumed access to a pre-trained perception model $\phi$ that outputs occupancy predictions of the environment. The conformal prediction-based uncertainty quantification procedure then uses the calibration dataset $D = \{E_1, \dots, E_N\}$ of environments to produce a calibrated perception system $\tilde{\phi}$, which lightly processes the outputs of $\phi$ scaling with a parameter $q$. In practice, it may also be useful to \emph{fine-tune} $\phi$ for our target deployment environments before performing uncertainty quantification. 

This can be achieved using \emph{split conformal prediction}~\citep{angelopoulos_gentle_2022}, where one splits the overall dataset $D$ into $D = D_\text{tune} \cup D_\text{cal}$. If the perception model takes the form of a neural network $\phi_w$ parameterized by weights $w$, we can use $D_\text{tune}$ to fine-tune $w$ (or the weights of a residual network). We can then utilize $D_\text{cal}$ in order to perform the CP-based calibration as described in Section~\ref{sec:approach-calibration}. We demonstrate the fine-tuning process for the case of bounding box predictions in Section~\ref{sec:simulations}, and show that this additional fine-tuning step before calibration can reduce the conservatism of outputs and improve end-to-end success rates. 

The typical choice of loss function for training a bounding box predictor is the \emph{generalized intersection-over-union (gIoU) loss} \citep{Rezatofighi19}, a differentiable version of the IoU loss: given a ground-truth bounding box $A$ and a predicted box $B$, one computes ${L(A,B) := |A \cap B| / |A \cup B|}$. However, while this loss is popular in computer vision, it is not suitable for robot navigation. In particular, the IoU loss is \emph{symmetric}: it does not distinguish between the ground-truth and predicted bounding box and thus does not encourage the predicted box to \emph{contain} the ground-truth box. We propose a modification to the gIoU loss in Appendix~\ref{appendix:fine-tuning loss}, which encourages that the predicted bounding box encloses the ground-truth box while also ensuring that the predicted box is not too large. Similar to the gIoU loss, this loss is (almost-everywhere) differentiable and scale invariant. We utilize this loss for fine-tuning in our experiments (Section~\ref{sec:simulations}). {However, one could use any other method for finetuning not limited to training a simple neural network with the gIoU loss~\citep{neklyudov2018variance}.}

\subsection{Sensor Errors and Dynamics Uncertainty}

In Section~\ref{sec:problem formulation}, we modeled the robot's sensor as a deterministic mapping $\sigma: \S \times \E \rightarrow \O$, which provides observations from a particular state in a given environment. This formulation allows us to also incorporate sensor errors. Specifically, any errors or randomness in the sensor can be formally included as part of the environment $E \in \E$. Thus, in addition to sampling environmental variables such as obstacle locations, geometries, etc., each environment $E$ also samples random variables that prescribe sensor errors from each state $s \in \S$ in the environment. This way of modeling sensor errors allows: (i) $\sigma$ to be deterministic (since all sources of randomness are included in $E$), (ii) the sensor errors to be dependent on the relative pose of the robot relative to obstacles (e.g., modeling the fact that depth estimates are often further from ground-truth depth values as distance increases), and (iii) the modeling of correlations in sensor errors from different locations (e.g., capturing the fact that sensor errors from nearby robot locations can be highly correlated). Modeling time-varying sensor errors (i.e., different sensor errors from the robot state at different times) is not as immediate, but could potentially be incorporated by augmenting the state space $\S$ to include the time-step. 

In addition to errors in sensing, one can also account for uncertainty in the dynamics of the robot by using a robust planner (see~\citep{hsu_safety_2023} for an overview). In the experiments described in Section~\ref{sec:hardware}, we incorporate uncertainty by generating plans that prevent the robot from entering the inevitable collision set (cf. Section~\ref{sec:approach-planning}) even with bounded uncertainty in the dynamics.

\section{Loss Function for Fine-Tuning}~\label{appendix:fine-tuning loss}
{\centering
\includegraphics[width=0.7\linewidth]{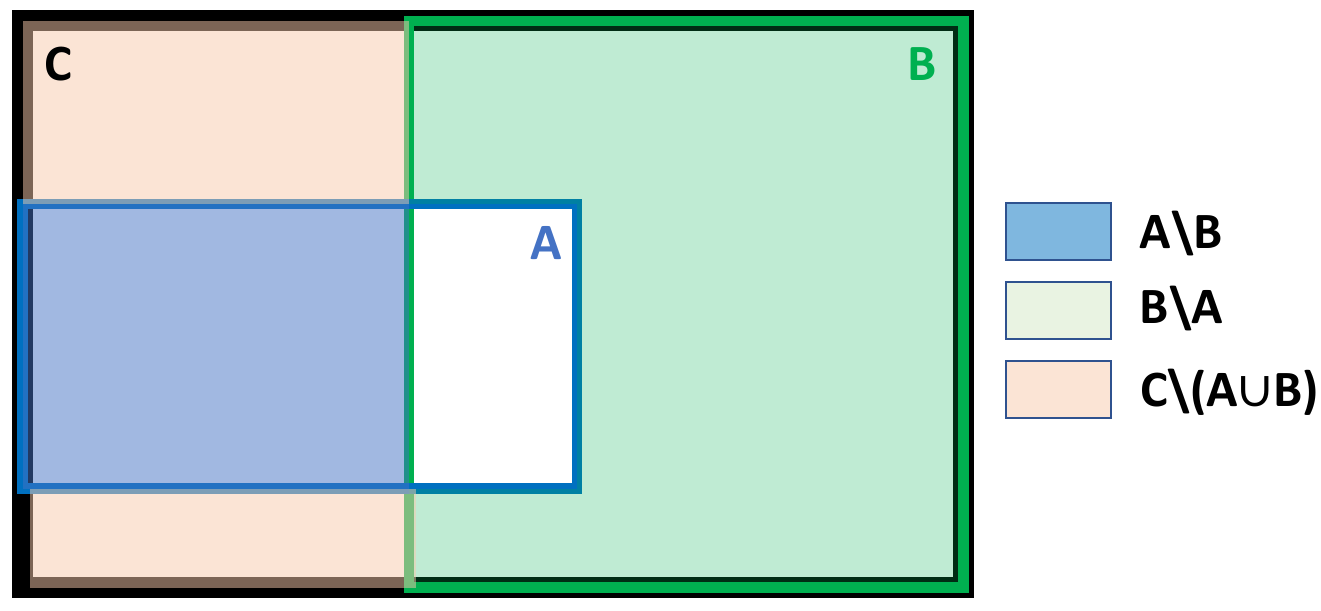}
\captionof{figure}{Visualization of different terms in the loss function for a single object setting.}\label{fig:loss}}

We use an almost-everywhere differentiable loss function for training. The loss function seeks to ensure that the predicted shape (e.g., bounding box) encloses the ground truth shape while also ensuring that the predicted shape is not too large.

Let's consider the simplest setting wherein we have one object in the scene and we are making a single prediction. In this case, $A$ denotes the (convex) ground-truth shape and $B$ denotes the (convex) predicted shape. Let $C$ denote the convex hull of $A$ and $B$. Our loss function is a weighted combination of three terms,
\begin{align*}
    L &\coloneqq \ w_1 l_1 + w_2 l_2 + w_3 l_3 \\
    &=  \ w_1 \frac{|A \backslash B|}{|A|} + w_2 \frac{|B \backslash A|}{|B|} + w_3 \frac{|C \backslash (A \cup B)|}{|C|}.
\end{align*}

The first term is the most important; it tries to ensure that $B$ encloses $A$. The second term tries to make sure that $B$ is not much larger than it needs to be, see Figure~\ref{fig:loss}. The first and second terms are sufficient if $A$ and $B$ are overlapping. However, if they do not overlap, there is no gradient information provided by the first two terms. Following \citep{Rezatofighi19}, we introduce a third loss term in order to provide gradient information when the shapes do not intersect. The loss terms $l_1, l_2, l_3$ are each bounded within $[0,1]$. Hence, if we choose $w_1,w_2,w_3$ such that $\sum_i w_i = 1$, then the overall loss is also bounded within $[0,1]$.

Now let's consider the setting wherein, $A$ denotes the union of multiple ground-truth bounding boxes (say we have $m$ objects in the scene)
and $B$ is the union of all the predicted bounding boxes (we predict $n$ boxes). We consider all the individual bounding box predictions $B_i, \forall i \in \{1,\dotsc n \}$ and associate the closest \textit{visible} ground-truth bounding box $A_i$ to each prediction. Now we can define $C_i$ as the convex hull of $A_i$ and $B_i$ and the resulting loss function, $L_i$,
\begin{equation*}
    L_i \coloneqq \ w_1 \frac{|A_i \backslash B_i|}{|A_i|} + w_2 \frac{|B_i \backslash A_i|}{|B_i|} + w_3 \frac{|C_i \backslash (A_i \cup B_i)|}{|C_i|}.
\end{equation*}
Hence, the overall loss is,
\begin{equation*}
    L = \frac{1}{n}\sum_{i=1}^{n}L_i.
\end{equation*}

Please refer to ~\citep[Appendix 4.3]{Rezatofighi19} for instructions on how to compute the loss analytically for axis-aligned bounding boxes. 

\section{Ablations}
We provide additional simulation results to illustrate the effects of: (1) closed-loop distribution shifts on safety 
wherein \pwc is robust to an increase in the level of closed-loop distribution shift while the baseline, CP-avg., is not which leads to higher collision rates for CP-avg., (2) the tradeoff in different partition sizes for fine-tuning using split-CP
\addendum{, (3) the effect of varying $\epsilon$ on the safety rate
, (4) impact of using different number of sampled configurations for calibration and online planning
, and (5) comparison against additional uncertainty-aware perception systems that use a heuristic notion of uncertainty
}

\smallskip
\noindent \textbf{Effects of closed-loop distribution shift on misdetections.}~\label{appendix:kl-divergence-results}
\addendum{In addition to the challenge of generalization, we highlight another challenge that any uncertainty quantification method for perception must tackle. 
Suppose we fix a policy $\pi^{\phi}$ (that uses perception system $\phi$) and collect a dataset of observations in different calibration environments from the states that result from applying $\pi^{\phi}$. We can use ground-truth bounding boxes in these environments to produce a calibrated perception system $\tilde{\phi}$ with a statistical assurance on correctness for the distribution of observations induced by $\pi^{\phi}$. However, if we now apply the policy $\pi^{\tilde{\phi}}$ using the \emph{calibrated} perception system $\tilde{\phi}$, the resulting distribution of states will be \emph{different} from the distribution that forms the calibration dataset, thus invalidating the statistical assurance. We refer to this challenge as \emph{closed-loop distribution shift}, which is similar to challenges that arise in offline reinforcement learning \citep{Levine2020OfflineRL} and imitation learning \citep{pmlr-v15-ross11a}.} 

To illustrate the effect of closed-loop distribution shifts on misdetections, we used exactly the same setup described above to obtain the simulation results in Figure~\ref{fig:all-bars}. We changed the planner cost to have a different weighting on the cost-to-go. 
For one setting, we chose a weight $w=1$ on the cost-to-go, which is the same as the weighting on the cost-to-come. In another setting, we chose a weight $w=10$ on the cost-to-go, and hence a $10\times$ more emphasis on the cost-to-go compared to the cost-to-come. 
Table~\ref{tab:results_KL} shows the KL-divergence between the states visited by the planner and the sampling distribution of states for calibration as a measure of the closed-loop distribution shift. Increasing closed-loop shifts lead to higher misdetections. One can see that a simple change in the planner parameters can lead to potentially large changes in the safety rates for CP-avg. The closed-loop shift we may see in practice is unknown apriori. Hence, it is difficult to make any statements on the planner safety in closed-loop despite using CP for calibration of the perception system. \pwc, on the other hand, is robust to the closed-loop shifts and can still satisfy the misdetection and safety assurance regardless of the planner parameters used.

\begin{table}[h]
\centering
\caption{{A comparison of the effect of changing the planner parameters on CP-avg. and \pwc. }}
\label{tab:results_KL}
\begin{tabular}{ p{0.3\linewidth} p{0.13\linewidth} p{0.13\linewidth} p{0.15\linewidth}}
\hline
\textbf{Method}  & \textbf{Collision} & \textbf{Mis-detection} & \textbf{KL-divergence}  \\ 
 \hline
 \hline
 CP-avg. ($w=1$) & $14 \% $ & $54\%$ & $2.09$    \\
 \hline  CP-avg. ($w=10$) & $2\% $ & $64\%$ & $2.72$    \\
 \hline
  \pwc ($w=1$) & $0\%$ & $0\%$  & $1.48$  \\
   \hline
  \pwc  ($w=10$) & ${0  \%}$ & $2\%$ & 2.04\\
   \hline
\end{tabular}
\end{table}

\smallskip
\noindent\textbf{Effect of finetuning dataset size. }~\label{appendix:fine-tuning-results}
Upon collecting a calibration dataset of about $400$ environments, as described in the experiment setup in Section~\ref{sec:simulations}, we may choose to use a smaller subset of the calibration dataset to further finetune the pre-trained perception model to perform better in the types of environments we are interested in deploying the robot in. We consider the effect of different dataset split sizes for finetuning and then calibration. Using a larger set of environments for finetuning $|D_{\text{tune}}|$ may result in a better tuned model, but will leave fewer environments for calibration, $|D_{\text{cal}}|$, resulting in a more conservative ${\hat{\epsilon}}$ and ${\hat{q}_{1-\epsilon}}$ that satisfies the dataset-conditional guarantee~\eqref{eq:dataset_conditional_cp}, and vice versa. This trade-off is seen in Table~\ref{tab:results_fine}, where we observe the best performance when we have an equal split between finetuning and calibration.

\begin{table*}[h]
\centering
\caption{A comparison of the effect of various partition sizes for fine-tuning and calibration for \pwc.}
\begin{tabular}{ccccc}
    \hline
    \textbf{Split size ($|D_\text{tune}| + |D_\text{cal}|$)} & \textbf{$\hat{q}_{0.85}$ (in m)} & \textbf{Collision} & \textbf{Misdetection} & \textbf{Goal Reached} \\
    \hline
    \hline
    $100 + 300$ & $0.68$ & $0\%$ & $1\%$ & $89\%$ \\
    \hline
    $200 + 200$ & $\boldsymbol{0.64}$ & $0\%$ & $\boldsymbol{1\%}$ & $\boldsymbol{94\%}$ \\
    \hline
    $300 + 100$ & $0.93$ & $0\%$ & $2\%$ & $76\%$ \\
    \hline
\end{tabular}
\label{tab:results_fine}
\end{table*}

\smallskip
\noindent\textbf{Effects of varying $\epsilon$ on safety rate.}~\label{appendix:safety-cost-results}
We compare our method, \pwc, to the baseline CP-avg. We vary the allowable safety rate $\epsilon$ for each method, and compute the rate of safety in $100$ test environments. As seen in Table~\ref{tab:results_safety} and Figure~\ref{fig:missRate}, our method not only guarantees that the rate of misdetections to be bounded, but also the safety rate. The safety rate of \pwc is also consistently better than that of CP-avg.

\begin{table}[h]
\centering
\caption{\addendum{A comparison of the safety rates of CP-avg. and \pwc when we vary the confidence threshold $\epsilon$. }}
\label{tab:results_safety}
\begin{tabular}{ccc}
\hline
$\epsilon$ & \textbf{CP-avg.} & \textbf{\pwc}  \\ 
 \hline
 \hline
 $0.20$ & $95\%$ & $\mathbf{100\%}$   \\
 \hline  $0.10$ & $98\%$ & $\mathbf{99\%}$    \\
 \hline
  $0.15$ & $99\%$  & $\mathbf{100\%}$  \\
   \hline
  $0.10$ & $98\%$ & $\mathbf{100\%}$\\
   \hline
   $0.05$ & $98\%$ & $\mathbf{100\%}$\\
   \hline
\end{tabular}
\end{table}

\begin{table}[h]
\centering
\caption{\addendum{A comparison of the CP inflation $\hat{q}_{0.85}$ when we vary the number of sampled configurations. }}
\label{tab:results_samples}
\begin{tabular}{ p{0.17\linewidth} p{0.14\linewidth} p{0.13\linewidth} p{0.13\linewidth} p{0.18\linewidth}}
\hline
$\#$ samples & $\mathbf{\hat{q}_{0.85}}$ & \textbf{Collision}  & \textbf{Mis-detection}&  \textbf{Goal Reached} \\ 
 \hline
 \hline  $1050$ &$0.7086$ & $0\%$ & $1\%$ & $47\%$\\
 \hline
  $1500$ &$\mathbf{0.6910}$ & $0\%$ & $\mathbf{0\%}$ & $57\%$\\
   \hline
  $2000$ &$0.75$ & $0\%$ & $7\%$ & $\mathbf{90\%}$\\
   \hline
\end{tabular}
\end{table}

\smallskip
\noindent\textbf{Effect of varying the number of sampled configurations.}~\label{appendix:samples-results}
For our experiments, we used a fixed set of $2000$ sampled configurations. However, depending on the planner configuration requirements and desired speed of computation, the user may decide to have a different number of configuration samples for calibration and planning. We study the change in the CP inflation, $\hat{q}_{0.85}$, the resulting collision, misdetection, and task completion (reaching goal) rates. As we can see in Table~\ref{tab:results_samples}, in our case, we have far fewer misdetections with fewer samples, but we also observe a decrease in number of times the robot reaches the goal. We suspect that with fewer samples of configurations (consisting of $x,y, v_x, v_y$), it is harder for the sampling-based motion planner to find feasible paths. On the other hand, we also observe a less conservative $\hat{q}_{0.85}$ when we use fewer samples; this is presumably also a result of using fewer samples to compute the non-conformity score that comprises of the worst-case perception error across all configurations.

\smallskip
\noindent\textbf{Comparison to heuristic inflation.}~\label{appendix:heuristic-results}
We compare \pwc to the baseline method of inflating the bounding box predictions based on some heuristic confidence level, i.e., we scale the bounding box with 1 - confidence (so we scale the boxes where we are less confident by a larger amount). As shown in Table~\ref{tab:results_heuristics}, While this baseline demonstrates a higher completion rate, both the collision rate and the misdetection rate increase significantly, leading to unsafe situations. Further, while our method provides a statistical safety guarantee, the baseline method does not admit any formal assurance.

\begin{table}[h]
\centering
\caption A comparison of the effects of using heuristic inflation versus \pwc.
\label{tab:results_heuristics}
\begin{tabular}{cccc}
\hline
\textbf{Method} & \textbf{Collision}  & \textbf{Misdetection}&  \textbf{Goal Reached} \\ 
 \hline
 \hline  \pwc & $\mathbf{0}\%$ & $\mathbf{7}\%$ & $90\%$\\
 \hline
  Heuristic  & $3\%$ & $67\%$  & $\mathbf{97}\%$\\
   \hline
\end{tabular}
\end{table}

\section{Experiment Setup}
\label{appendix:hardware}

\subsection{System Identification}
\label{appendix:sysid}
To perform system identification of the Unitree Go1 quadruped robot, we collected trajectories using a Vicon motion capture system. We then used MATLAB's system identification toolbox \citep{MATLAB}. Specifically, we provided an initial linear ODE grey box model guess and then used prediction error minimization (PEM) for refinement. The resulting system is shown in~\eqref{eqn:sys_id} where $x$ and $y$ describe the positional state of the robot in the environment, $v_x$ and $v_y$ describe the respective velocities, and $u_x$ and $u_y$ describe the respective commanded velocities. 

\begin{table*}[b]
\centering
\begin{minipage}{0.75\linewidth}
    \begin{equation}
    \begin{bmatrix}
        \dot x \\
        \dot y \\
        \dot v_x \\
        \dot v_y
    \end{bmatrix} = 
    \begin{bmatrix}
    0 & 0 & 1 & 0 \\
    0 & 0 & 0 & 1 \\
    0 & 0 & -2.5170 & 0.1353 \\
    0 & 0 & -0.5197 & -3.9680 
    \end{bmatrix}
    \begin{bmatrix}
    x \\
    y \\
    v_x \\
    v_y 
    \end{bmatrix}
    +
    \begin{bmatrix}
    0 & 0 \\
    0 & 0 \\
    2.3350 & 0 \\
    0 & 4.6510
    \end{bmatrix}
    \begin{bmatrix}
       u_x \\
       u_y
    \end{bmatrix}
\label{eqn:sys_id}
\end{equation}
\end{minipage}
\end{table*}

\subsection{Chair Test Dataset} 
\label{appendix:chairs}
Our test dataset of chairs for the first set of experiments conducted in Section \ref{sec:hardware} included 8 chairs with diverse sizes and geometries unseen in training and calibration for the perception system. Test chairs are shown below in Figure \ref{fig:chair_test}.

\begin{figure*}[h]
  \centering
  \begin{minipage}{0.6\linewidth}
    \includegraphics[width=\linewidth]{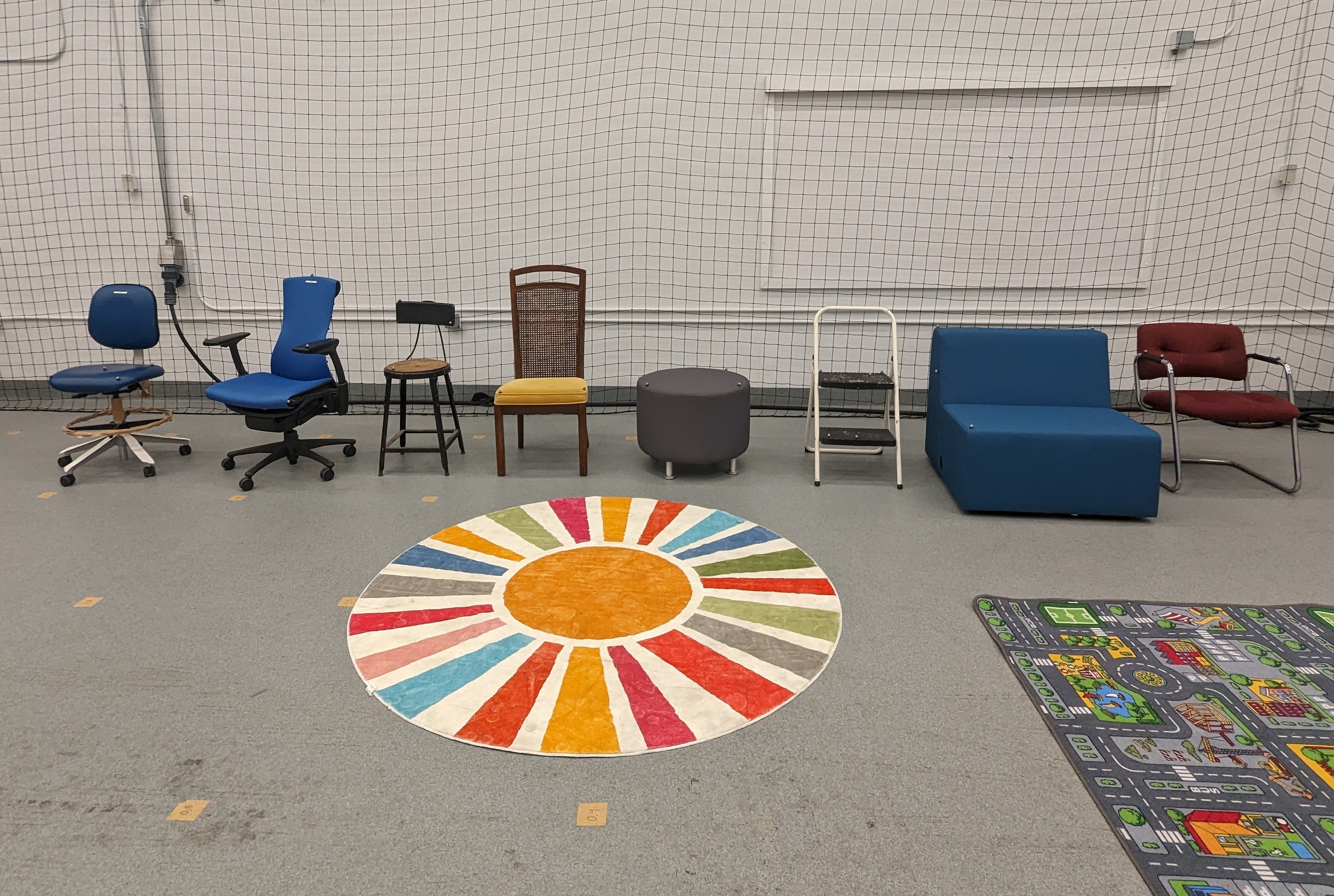} 
    \caption{New, unseen test chairs used in original hardware experiments. In the fast experiments, only chairs 1, 2, and 5 (left to right) were used.}
    \label{fig:chair_test}
  \end{minipage}
\end{figure*}

\subsection{Environments} 
\label{appendix:envs}
As described in Section \ref{sec:hardware}, in the first set of experiments, the robot was tested in 30 unique environments with varying furniture configurations and goals. The following 30 figures show an image of each configuration, accompanied by a bird's-eye map of the obstacle and goal locations. 

\begin{figure*}[h]
\centering
\begin{minipage}{\linewidth}
    \centering
    \begin{minipage}{0.28\linewidth}
        \centering
        \includegraphics[width=\linewidth]{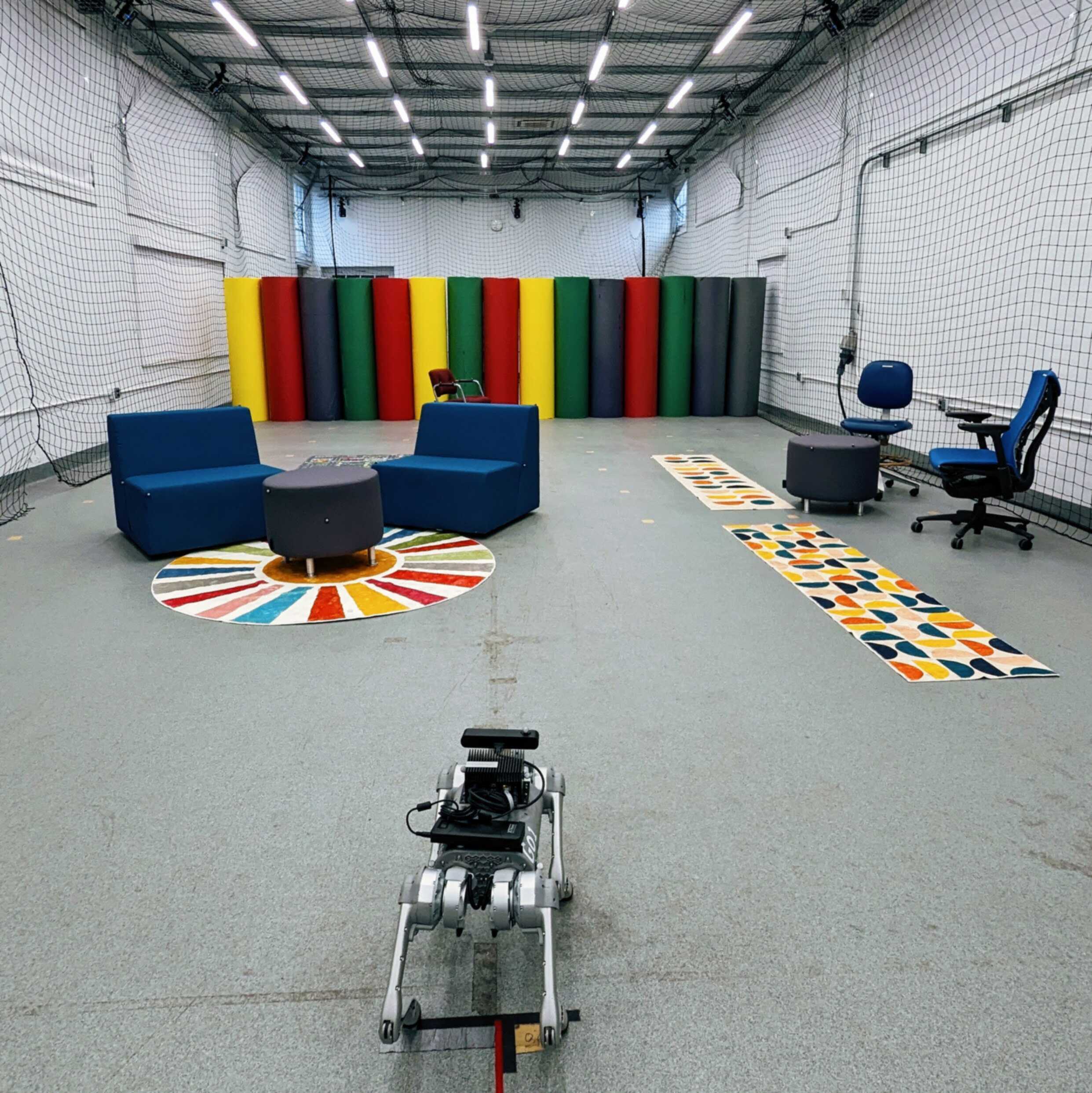}
    \end{minipage}
    \hfill
    \begin{minipage}{0.28\linewidth}
        \centering
        \includegraphics[width=\linewidth]{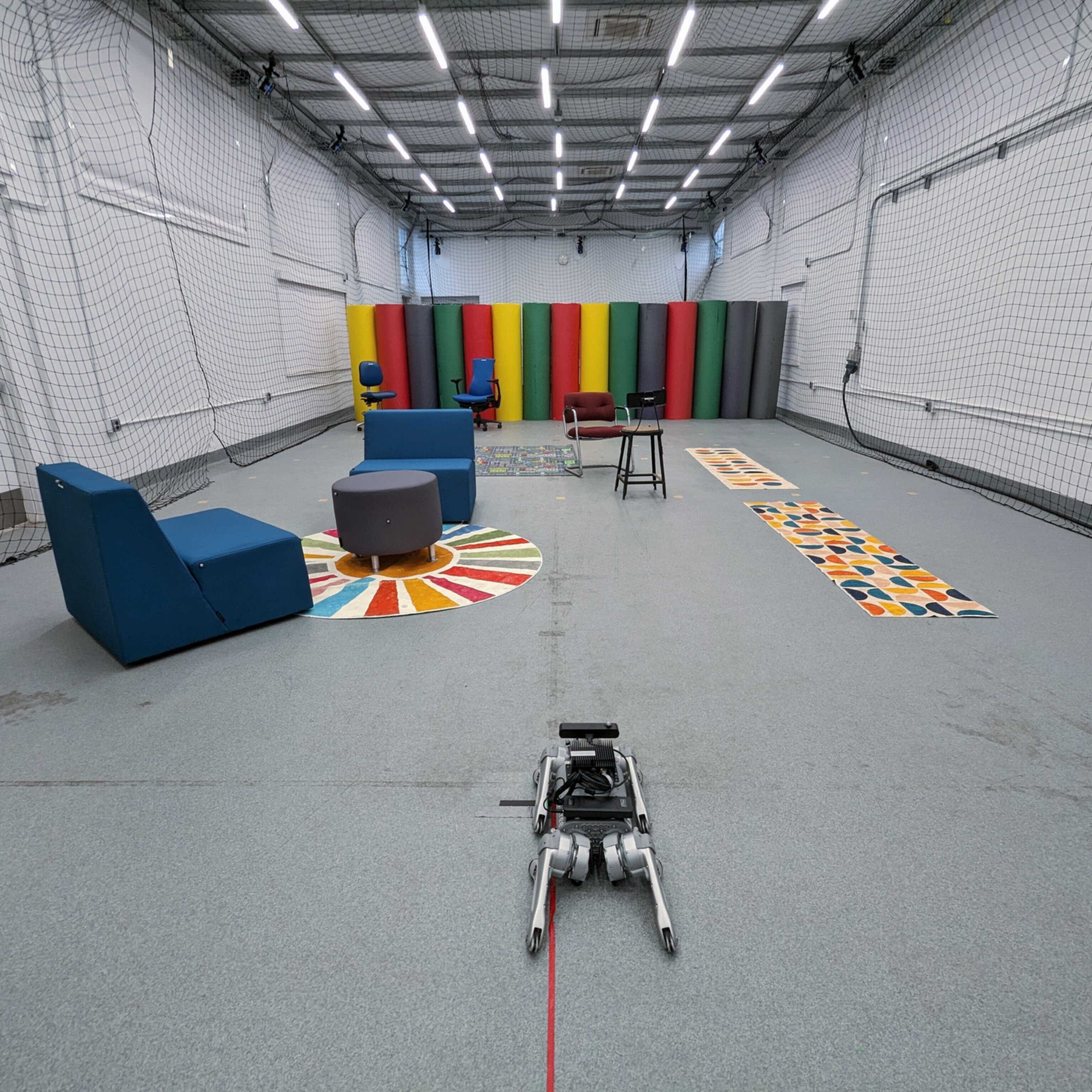}
    \end{minipage}
    \hfill
    \begin{minipage}{0.28\linewidth}
        \centering
        \includegraphics[width=\linewidth]{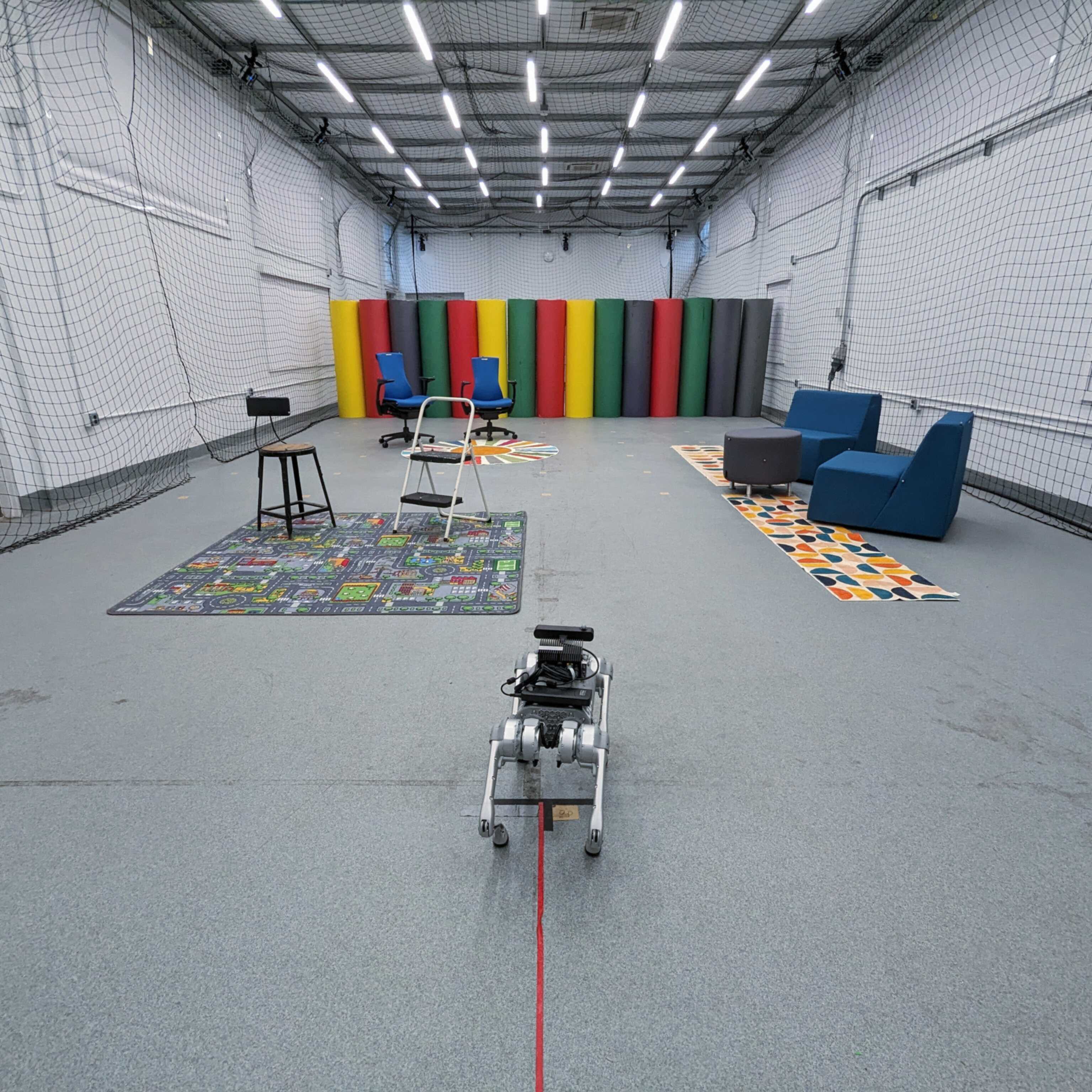}
    \end{minipage}
\end{minipage}
\vspace{12pt}
\begin{minipage}{\linewidth}
    \centering
    \begin{minipage}{0.28\linewidth}
        \centering
        \includegraphics[width=\linewidth]{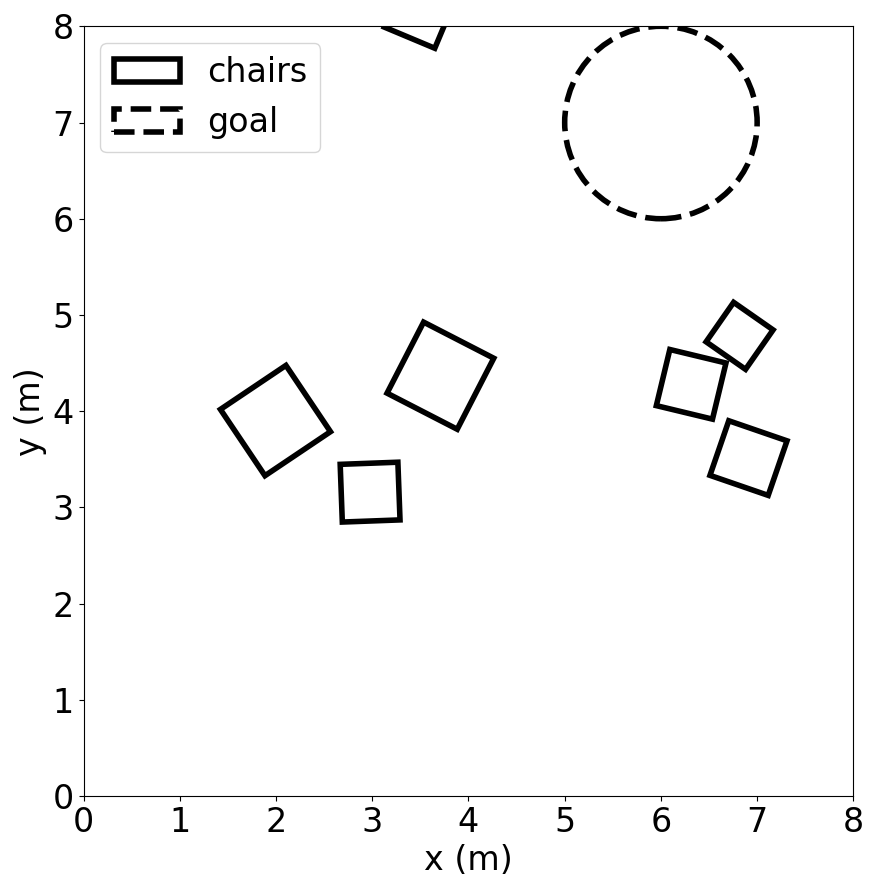}
        \caption*{(1) Environment 1}
    \end{minipage}
    \hfill
    \begin{minipage}{0.28\linewidth}
        \centering
        \includegraphics[width=\linewidth]{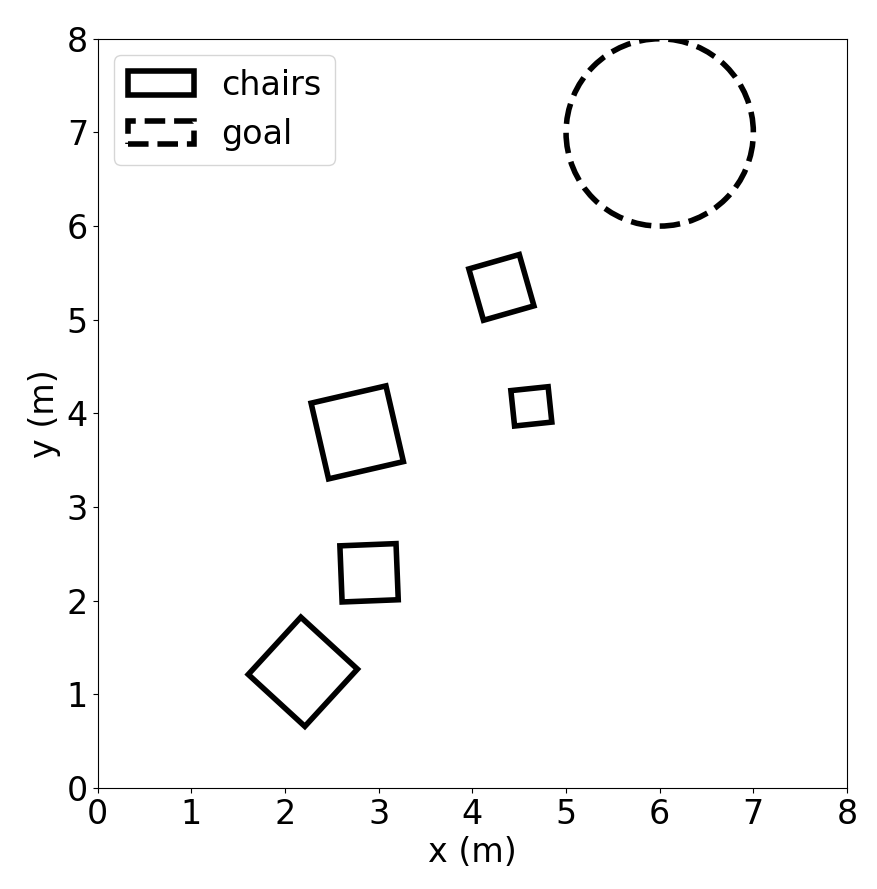}
        \caption*{(2) Environment 2}
    \end{minipage}
    \hfill
    \begin{minipage}{0.28\linewidth}
        \centering
        \includegraphics[width=\linewidth]{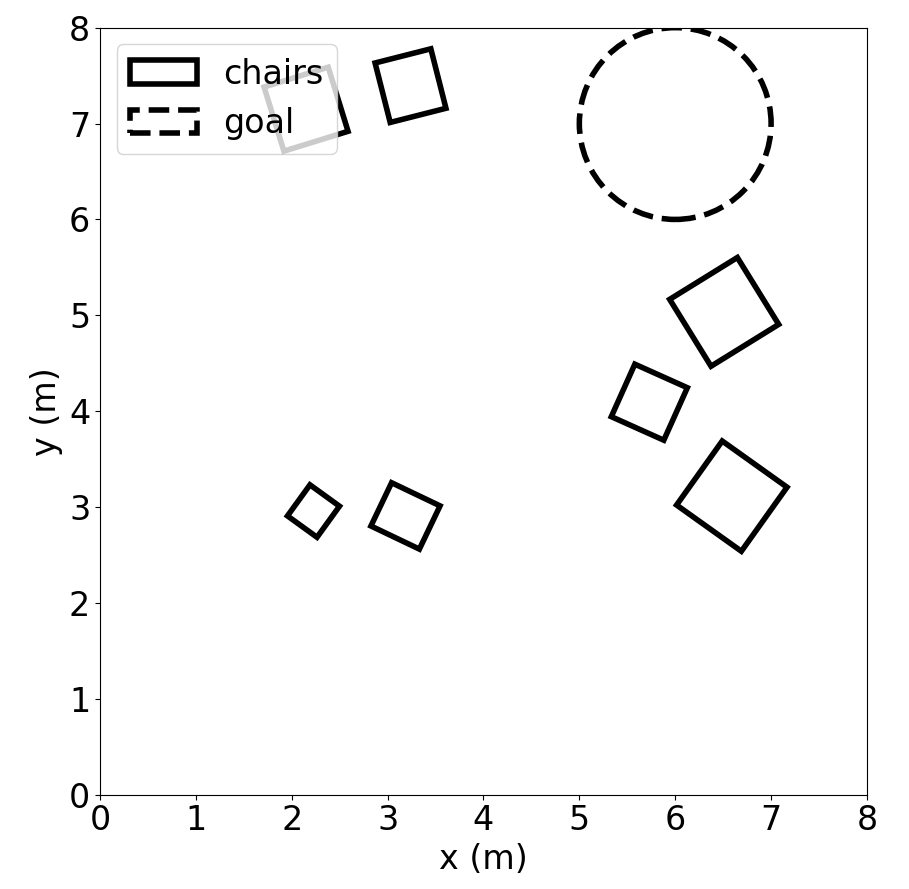}
        \caption*{(3) Environment 3}
    \end{minipage}
\end{minipage}
\end{figure*}

\begin{figure*}[h]
\centering
\begin{minipage}{\linewidth}
    \centering
    \begin{minipage}{0.28\linewidth}
        \centering
        \includegraphics[width=\linewidth]{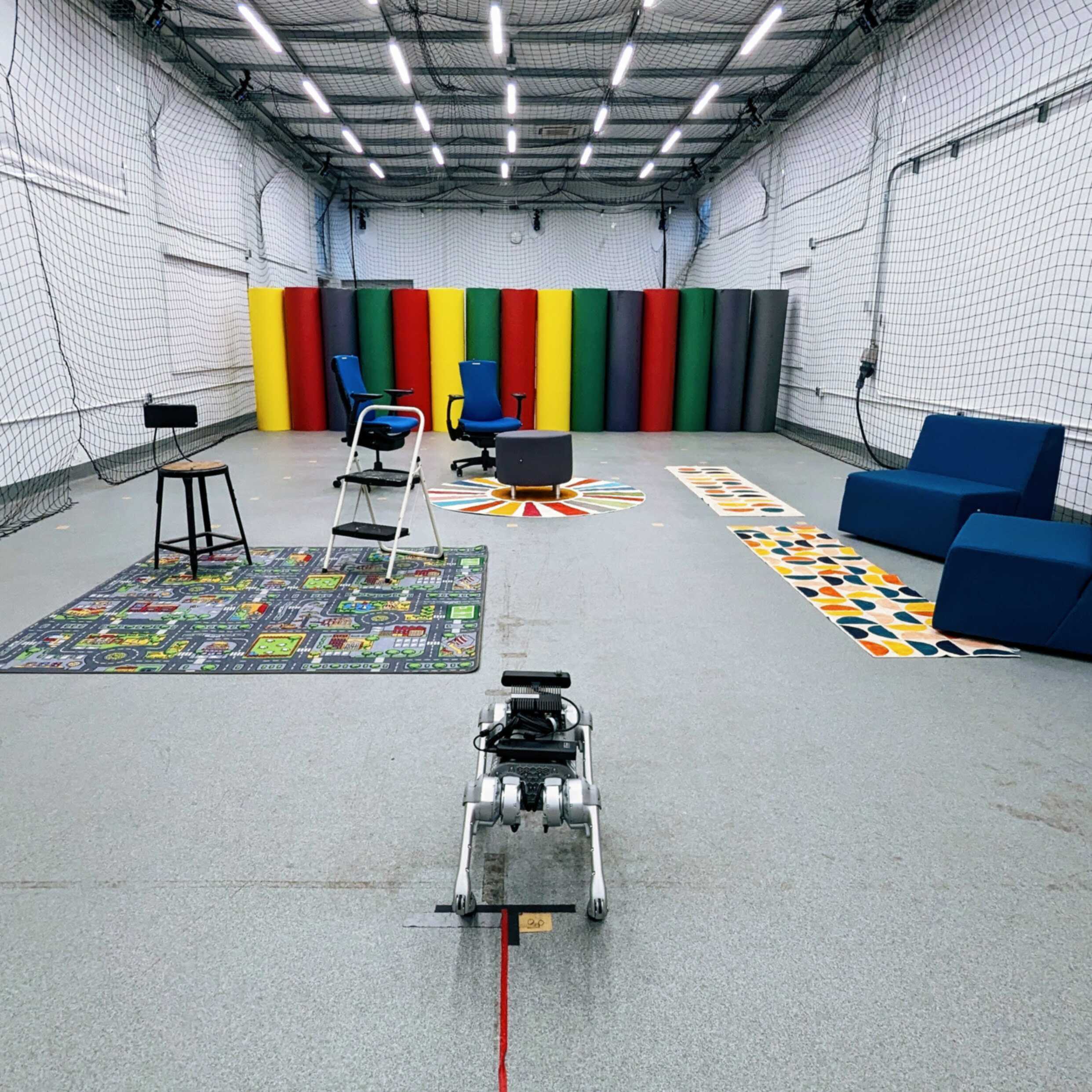}
    \end{minipage}
    \hfill
    \begin{minipage}{0.28\linewidth}
        \centering
        \includegraphics[width=\linewidth]{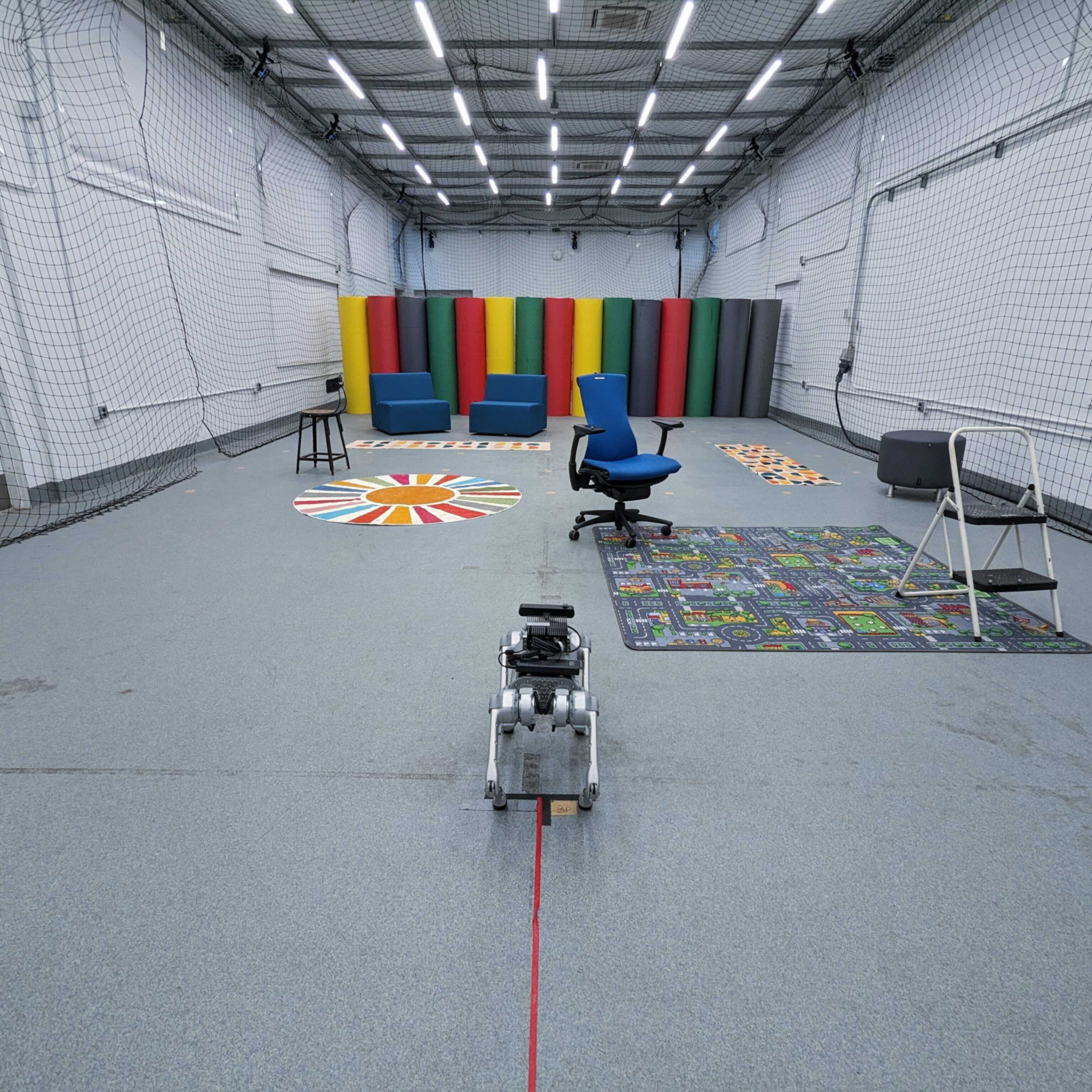}
    \end{minipage}
    \hfill
    \begin{minipage}{0.28\linewidth}
        \centering
        \includegraphics[width=\linewidth]{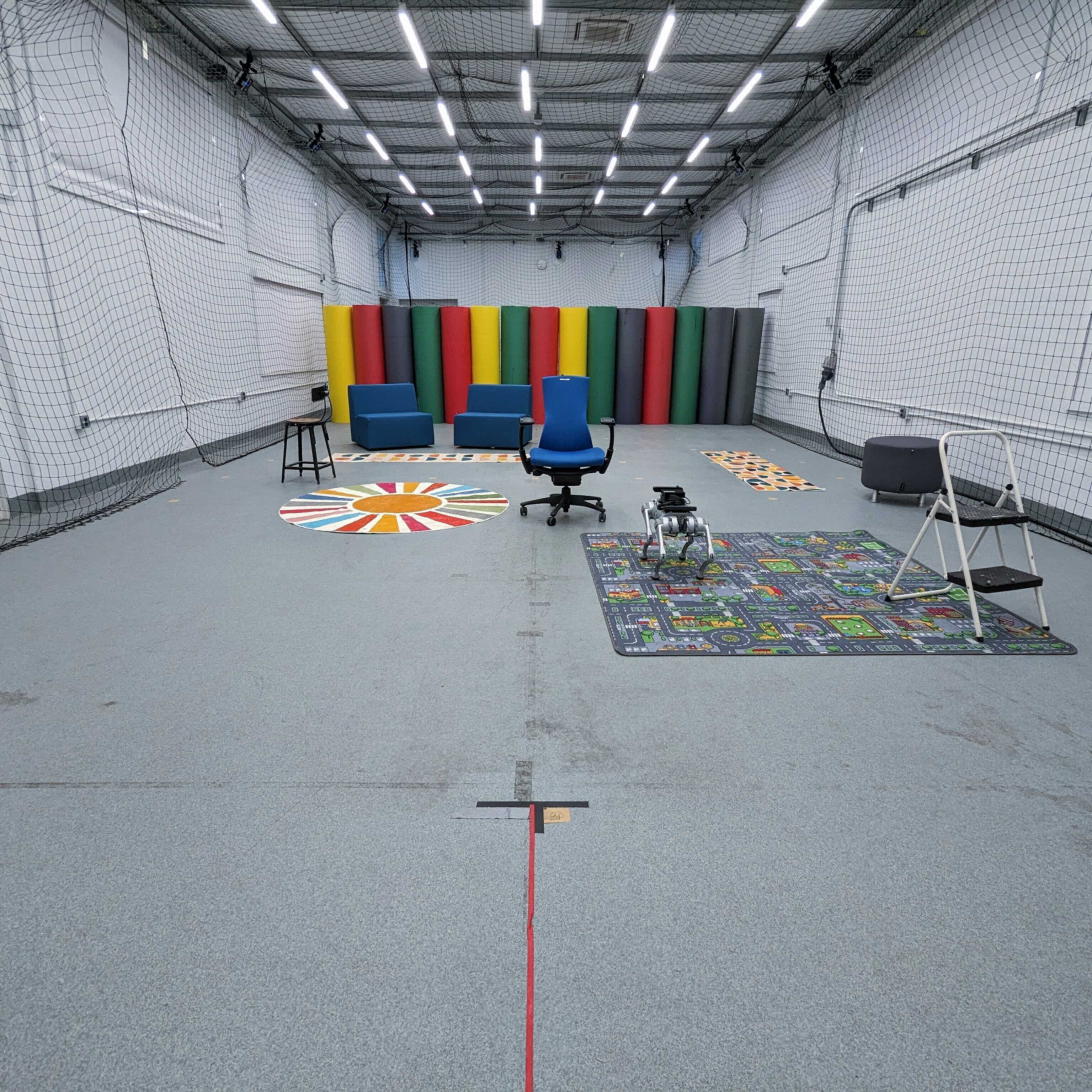}
    \end{minipage}
\end{minipage}
\vspace{12pt}
\begin{minipage}{\linewidth}
    \centering
    \begin{minipage}{0.28\linewidth}
        \centering
        \includegraphics[width=\linewidth]{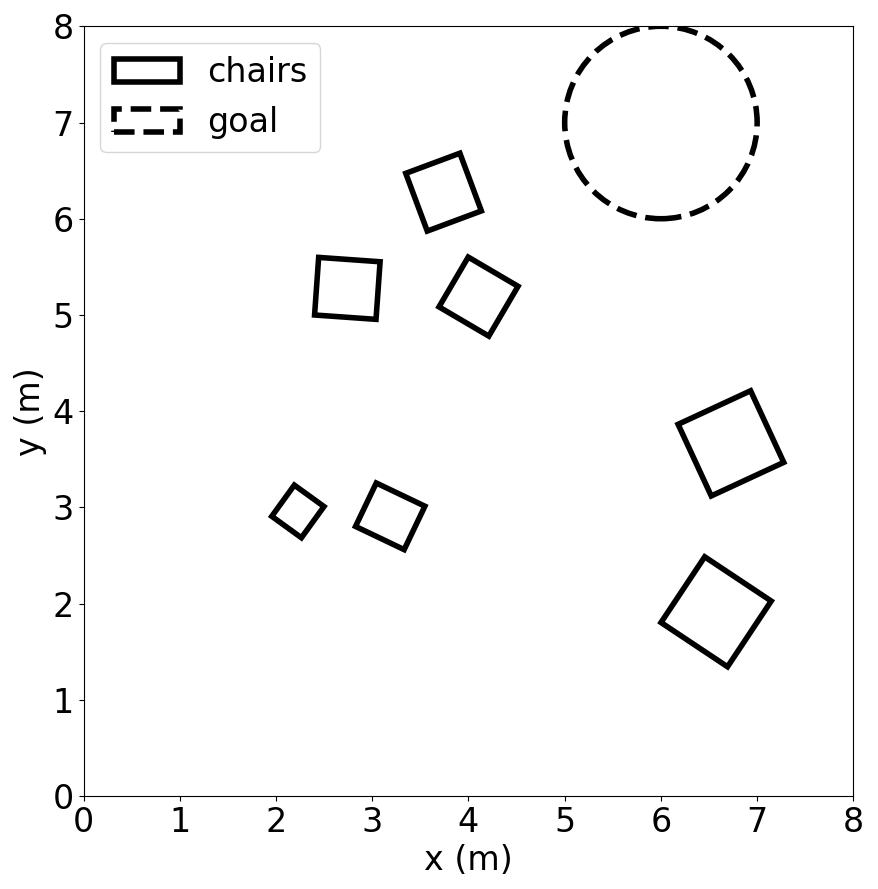}
        \caption*{(4) Environment 4}
    \end{minipage}
    \hfill
    \begin{minipage}{0.28\linewidth}
        \centering
        \includegraphics[width=\linewidth]{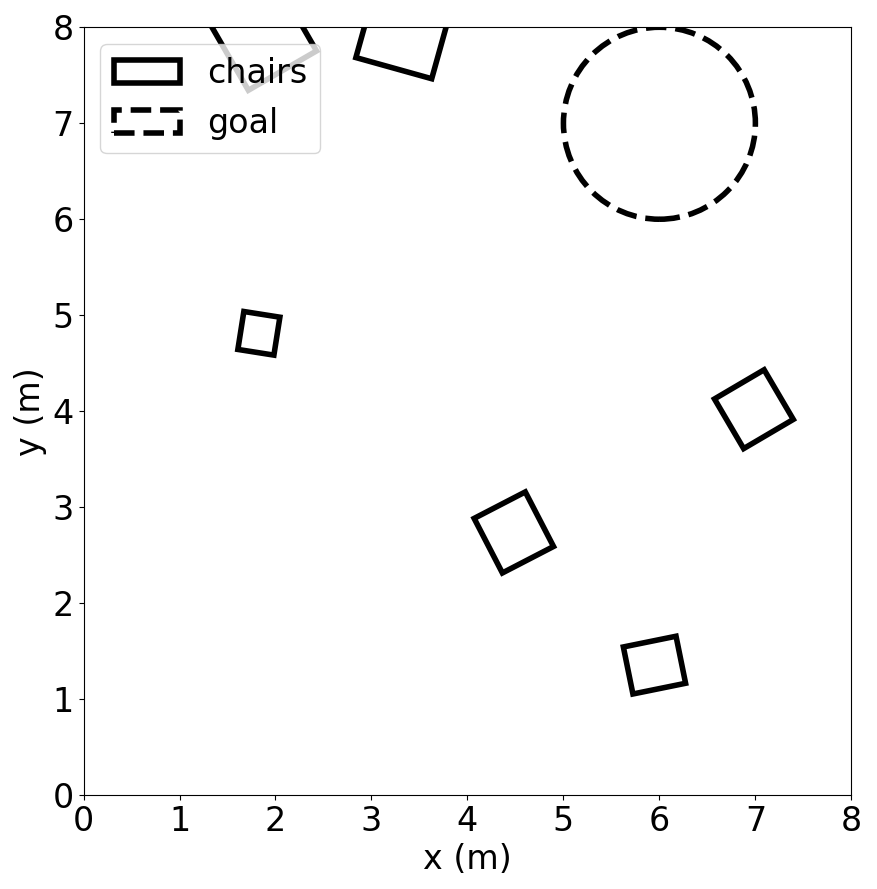}
        \caption*{(5) Environment 5}
    \end{minipage}
    \hfill
    \begin{minipage}{0.28\linewidth}
        \centering
        \includegraphics[width=\linewidth]{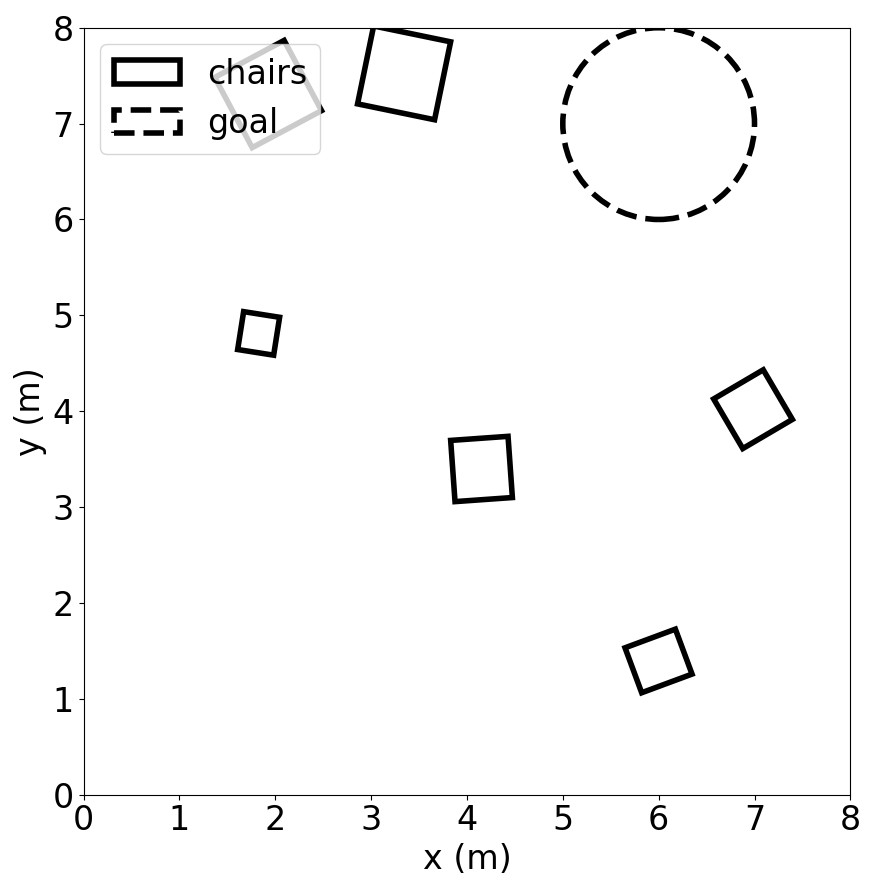}
        \caption*{(6) Environment 6}
    \end{minipage}
\end{minipage}
\end{figure*}

\begin{figure*}[h]
\centering
\begin{minipage}{\linewidth}
    \centering
    \begin{minipage}{0.28\linewidth}
        \centering
        \includegraphics[width=\linewidth]{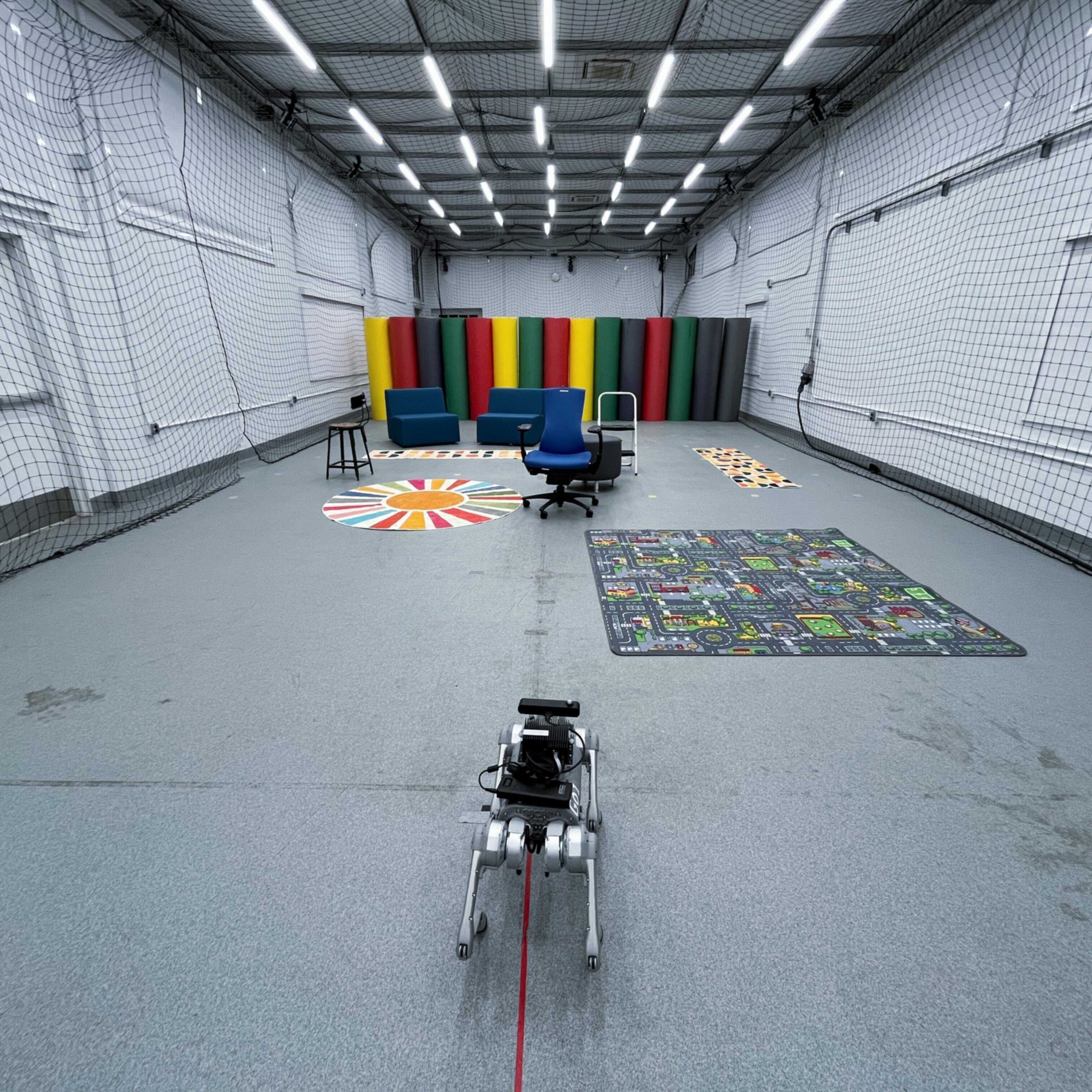}
    \end{minipage}
    \hfill
    \begin{minipage}{0.28\linewidth}
        \centering
        \includegraphics[width=\linewidth]{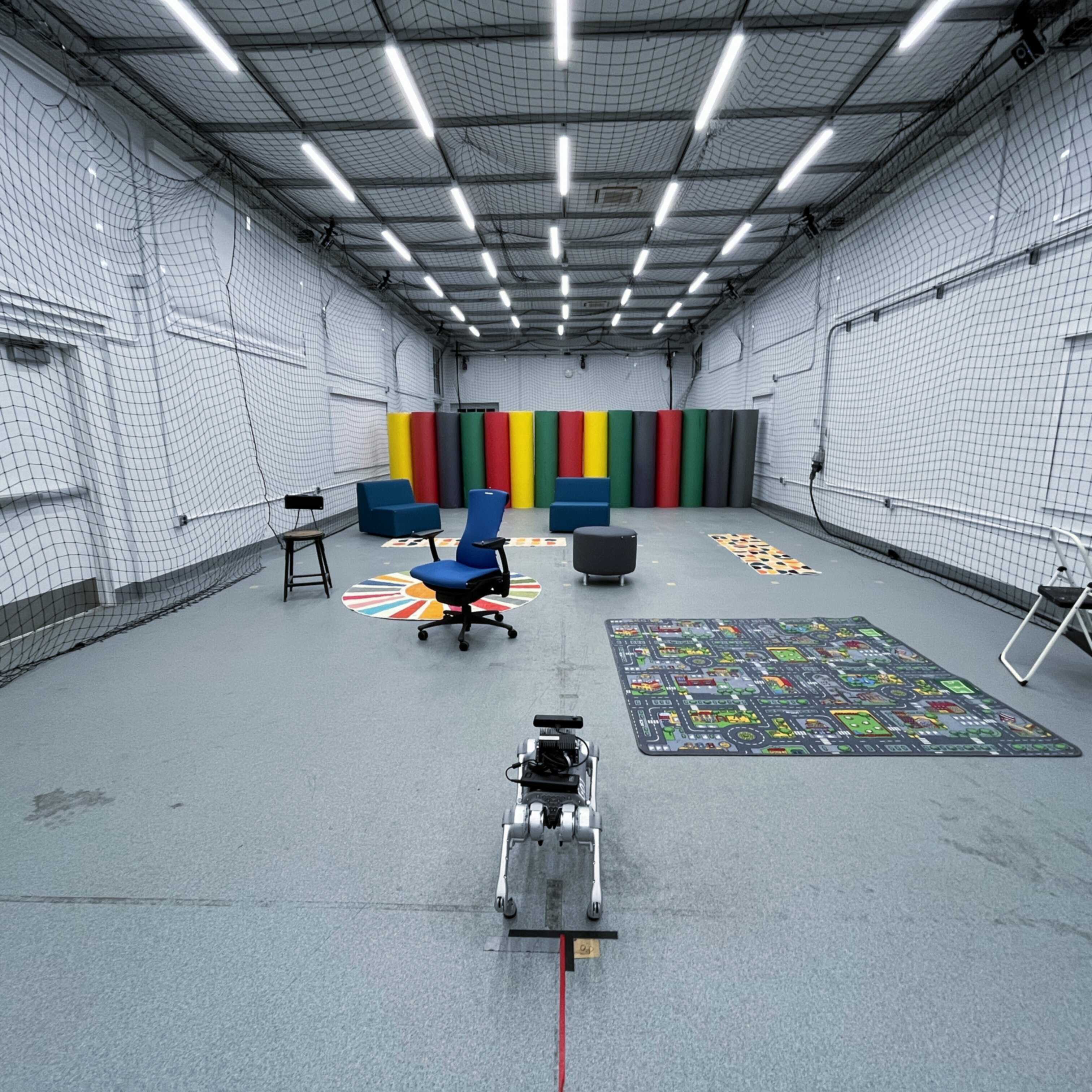}
    \end{minipage}
    \hfill
    \begin{minipage}{0.28\linewidth}
        \centering
        \includegraphics[width=\linewidth]{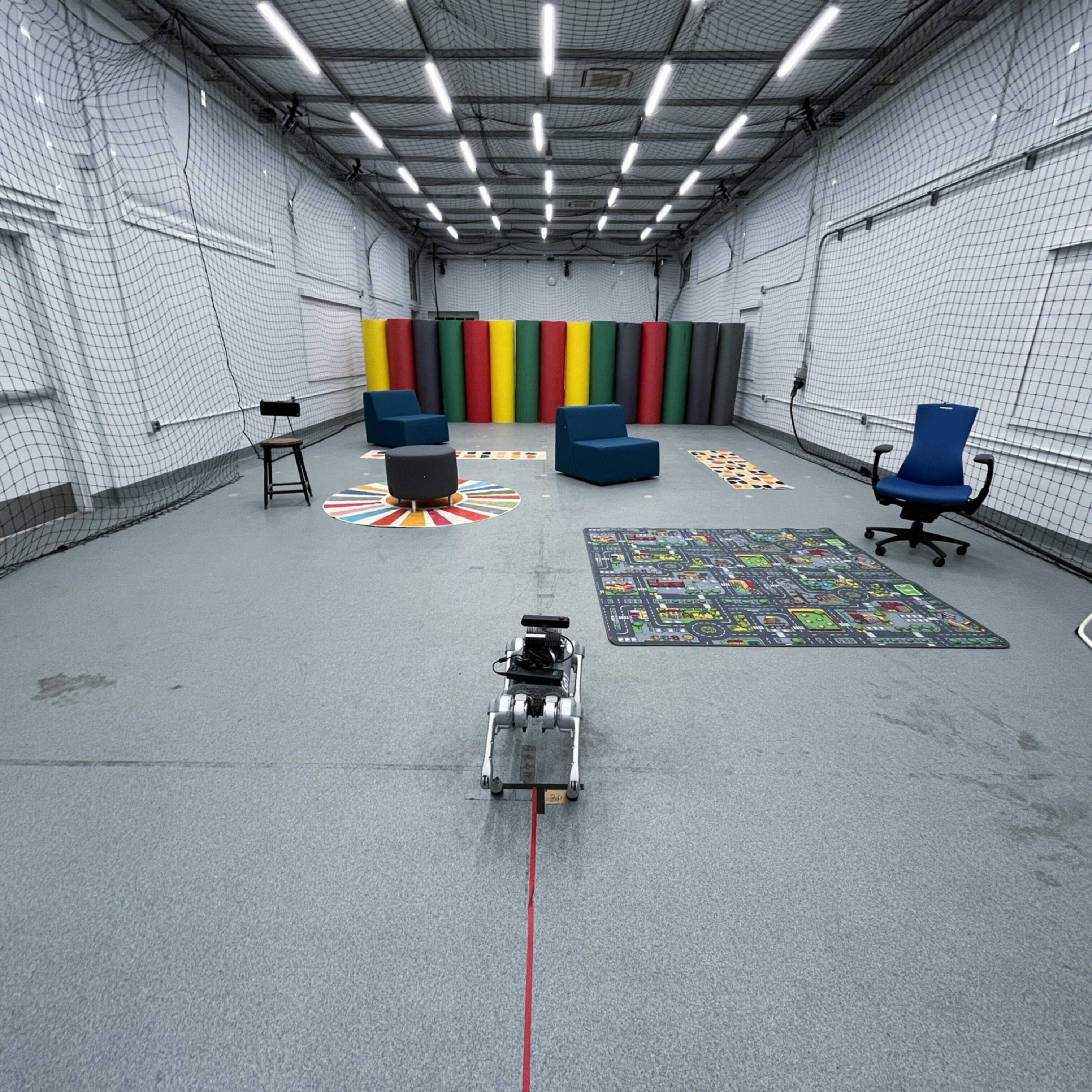}
    \end{minipage}
\end{minipage}
\vspace{12pt}
\begin{minipage}{\linewidth}
    \centering
    \begin{minipage}{0.28\linewidth}
        \centering
        \includegraphics[width=\linewidth]{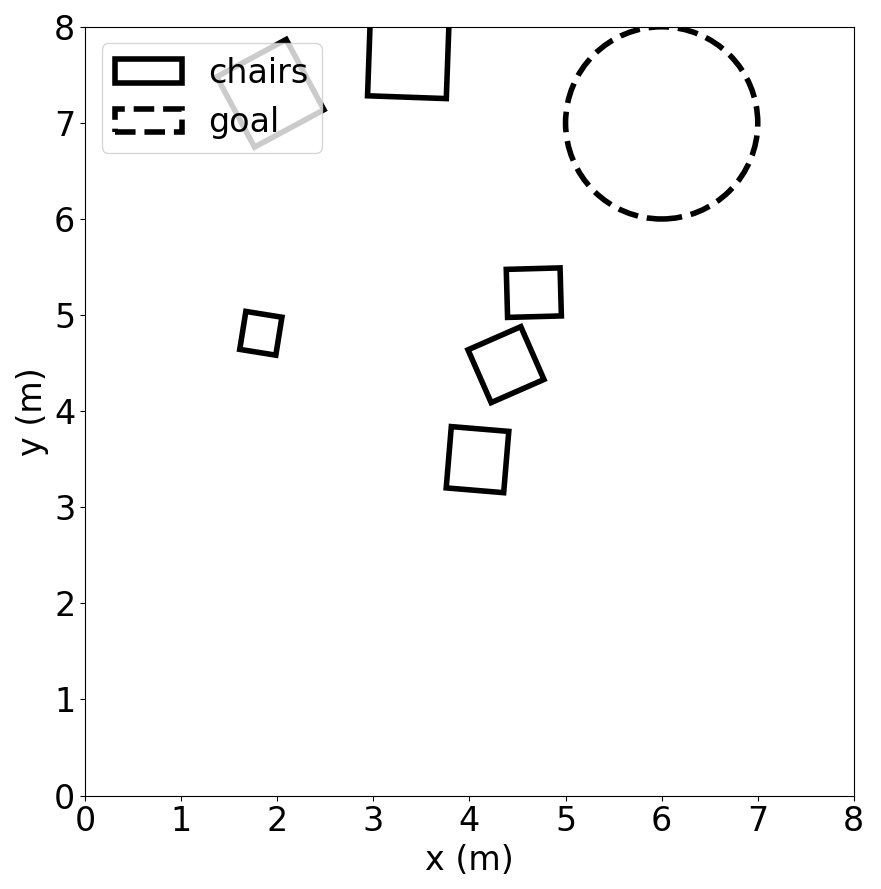}
        \caption*{(7) Environment 7}
    \end{minipage}
    \hfill
    \begin{minipage}{0.28\linewidth}
        \centering
        \includegraphics[width=\linewidth]{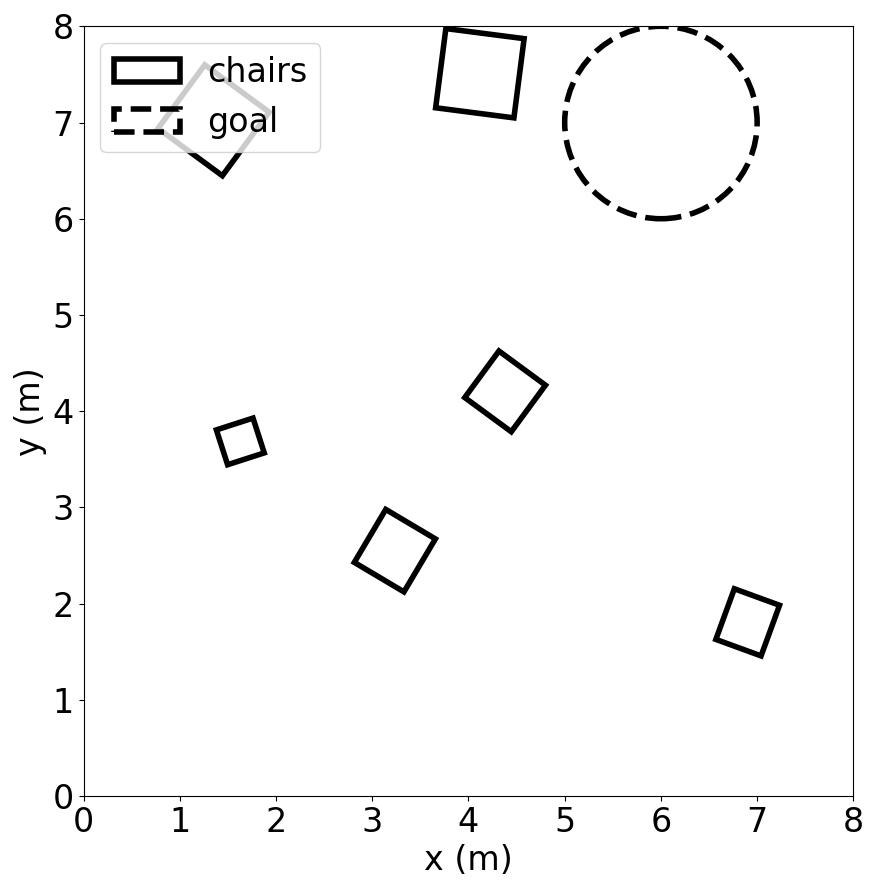}
        \caption*{(8) Environment 8}
    \end{minipage}
    \hfill
    \begin{minipage}{0.28\linewidth}
        \centering
        \includegraphics[width=\linewidth]{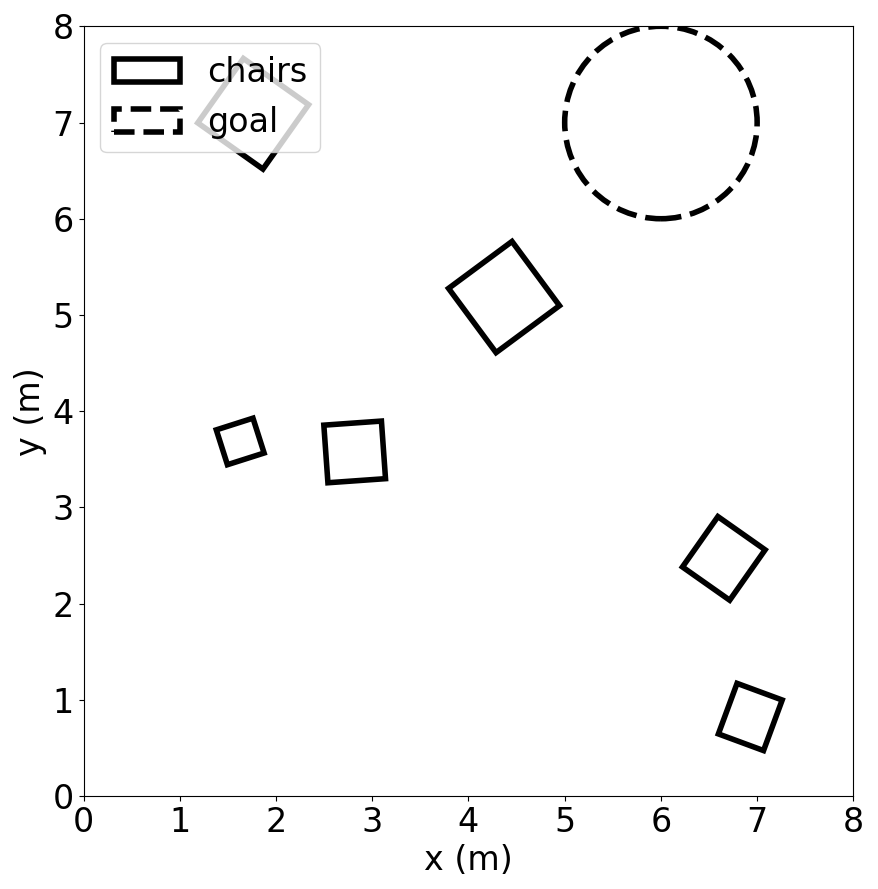}
        \caption*{(9) Environment 9}
    \end{minipage}
\end{minipage}
\end{figure*}

\begin{figure*}[h]
\centering
\begin{minipage}{\linewidth}
    \centering
    \begin{minipage}{0.28\linewidth}
        \centering
        \includegraphics[width=\linewidth]{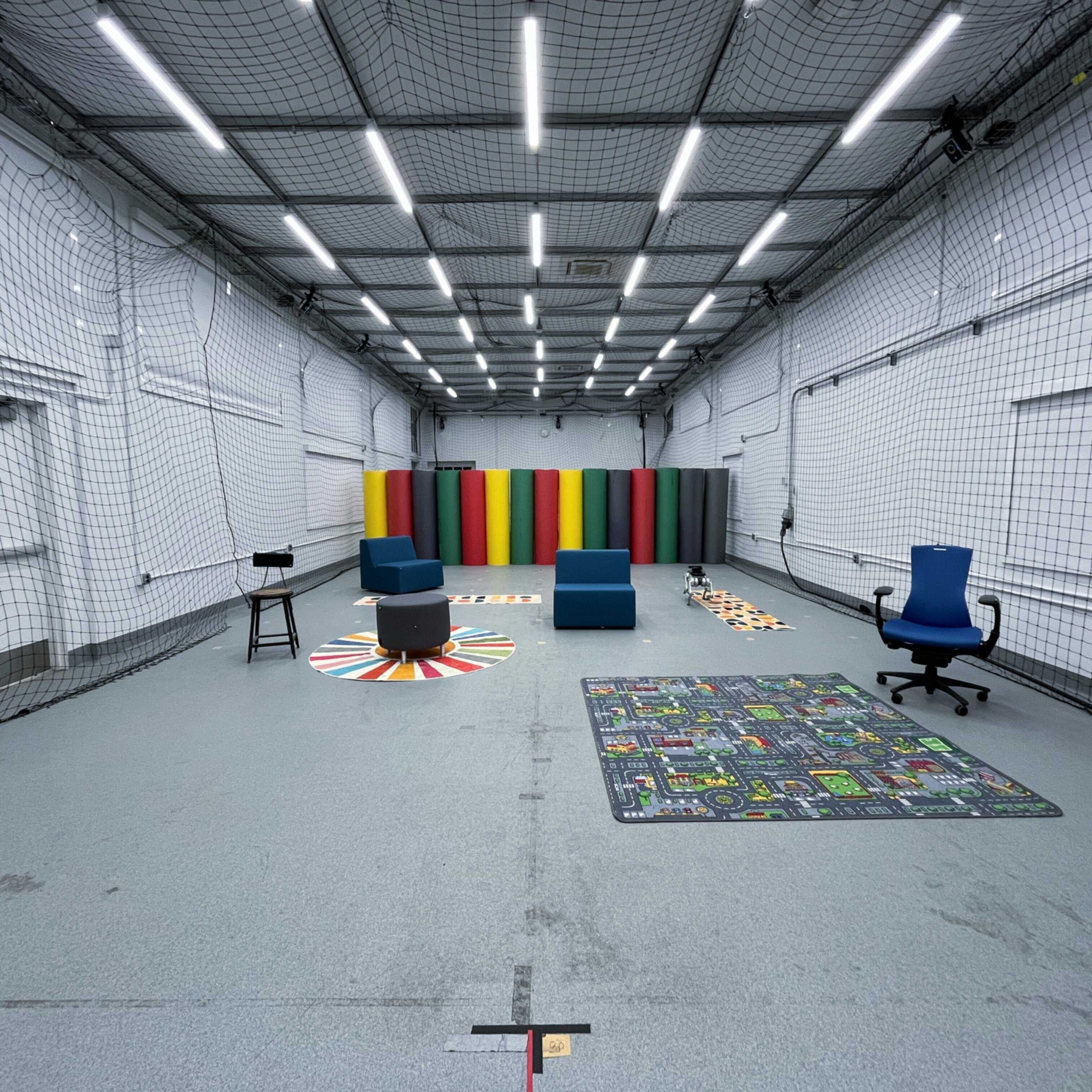}
    \end{minipage}
    \hfill
    \begin{minipage}{0.28\linewidth}
        \centering
        \includegraphics[width=\linewidth]{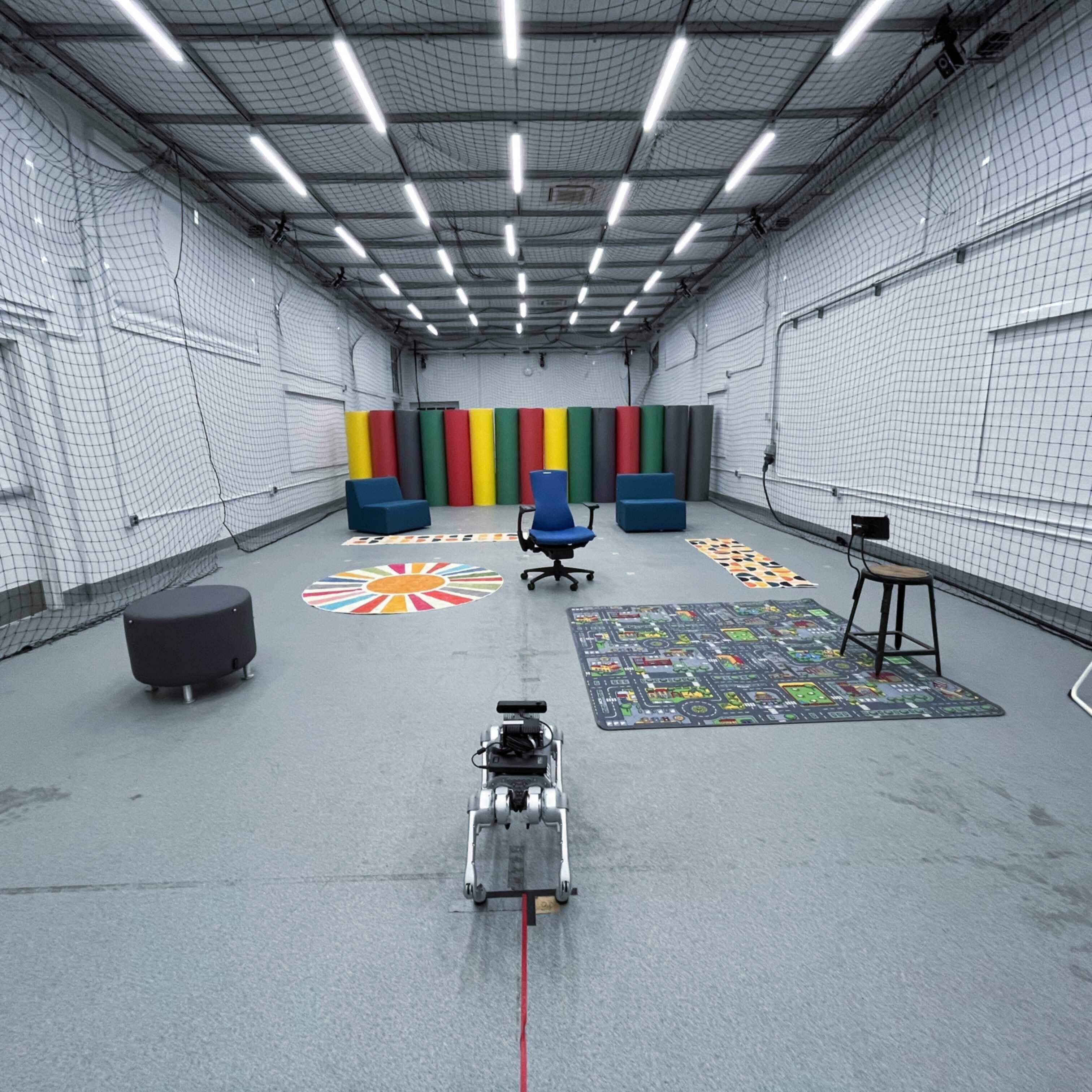}
    \end{minipage}
    \hfill
    \begin{minipage}{0.28\linewidth}
        \centering
        \includegraphics[width=\linewidth]{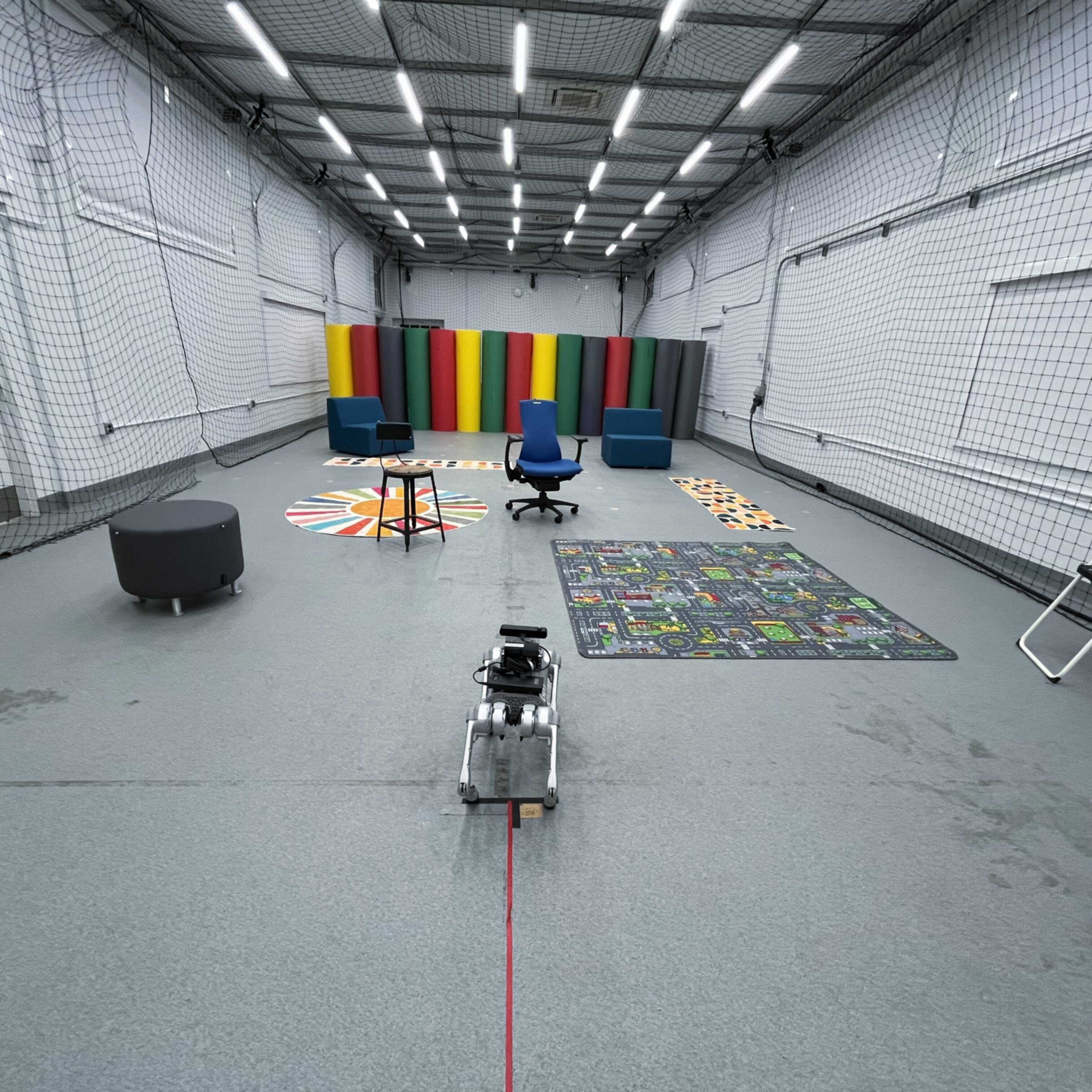}
    \end{minipage}
\end{minipage}
\vspace{12pt}
\begin{minipage}{\linewidth}
    \centering
    \begin{minipage}{0.28\linewidth}
        \centering
        \includegraphics[width=\linewidth]{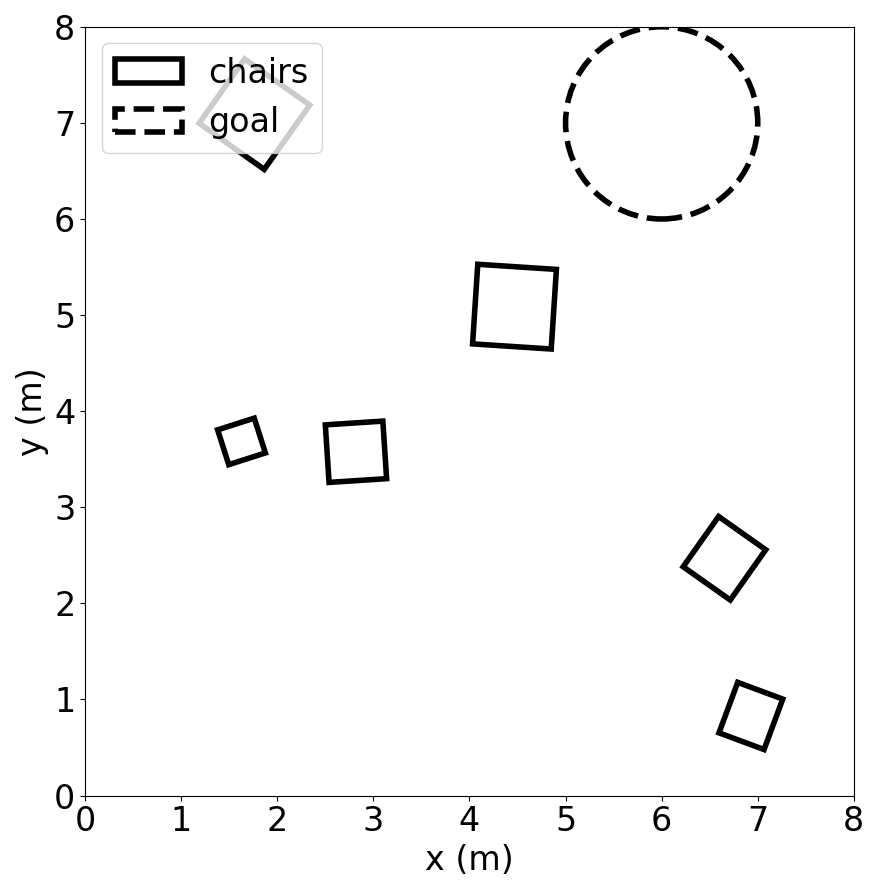}
        \caption*{(10) Environment 10}
    \end{minipage}
    \hfill
    \begin{minipage}{0.28\linewidth}
        \centering
        \includegraphics[width=\linewidth]{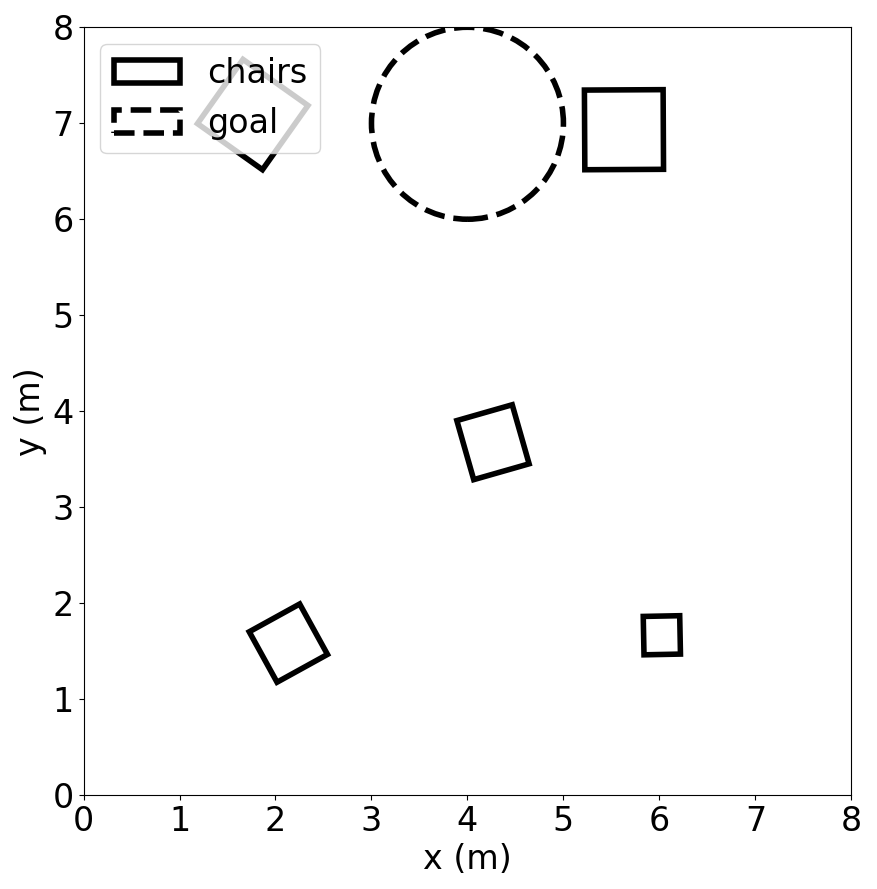}
        \caption*{(11) Environment 11}
    \end{minipage}
    \hfill
    \begin{minipage}{0.28\linewidth}
        \centering
        \includegraphics[width=\linewidth]{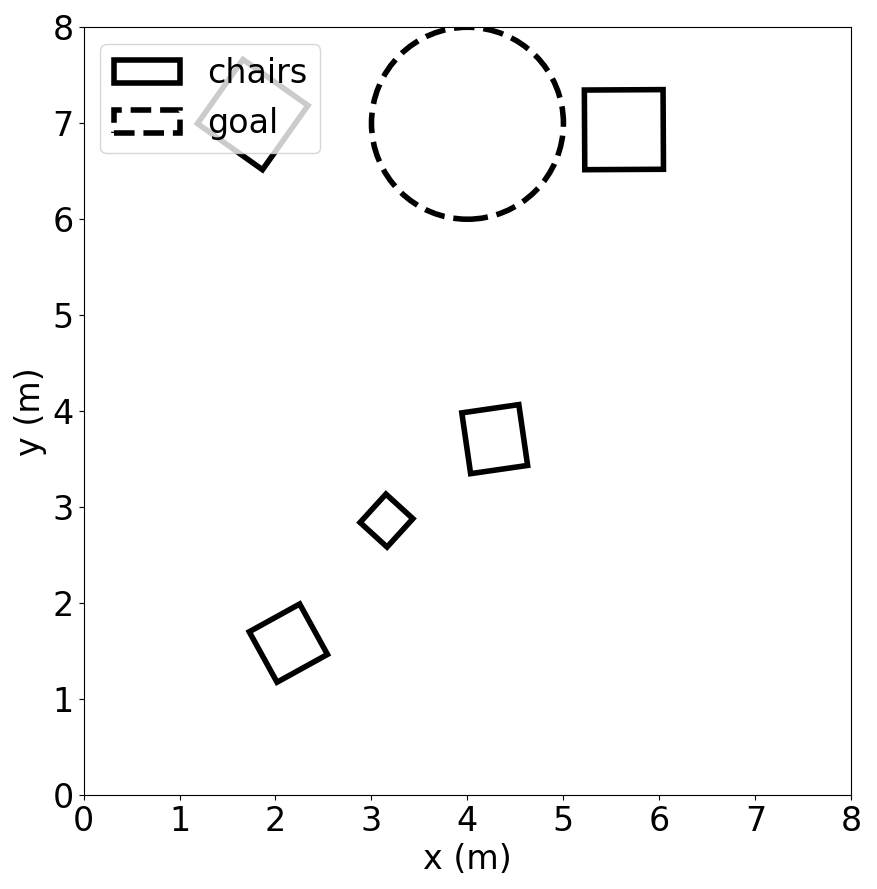}
        \caption*{(12) Environment 12}
    \end{minipage}
\end{minipage}
\end{figure*}

\begin{figure*}[h]
\centering
\begin{minipage}{\linewidth}
    \centering
    \begin{minipage}{0.28\linewidth}
        \centering
        \includegraphics[width=\linewidth]{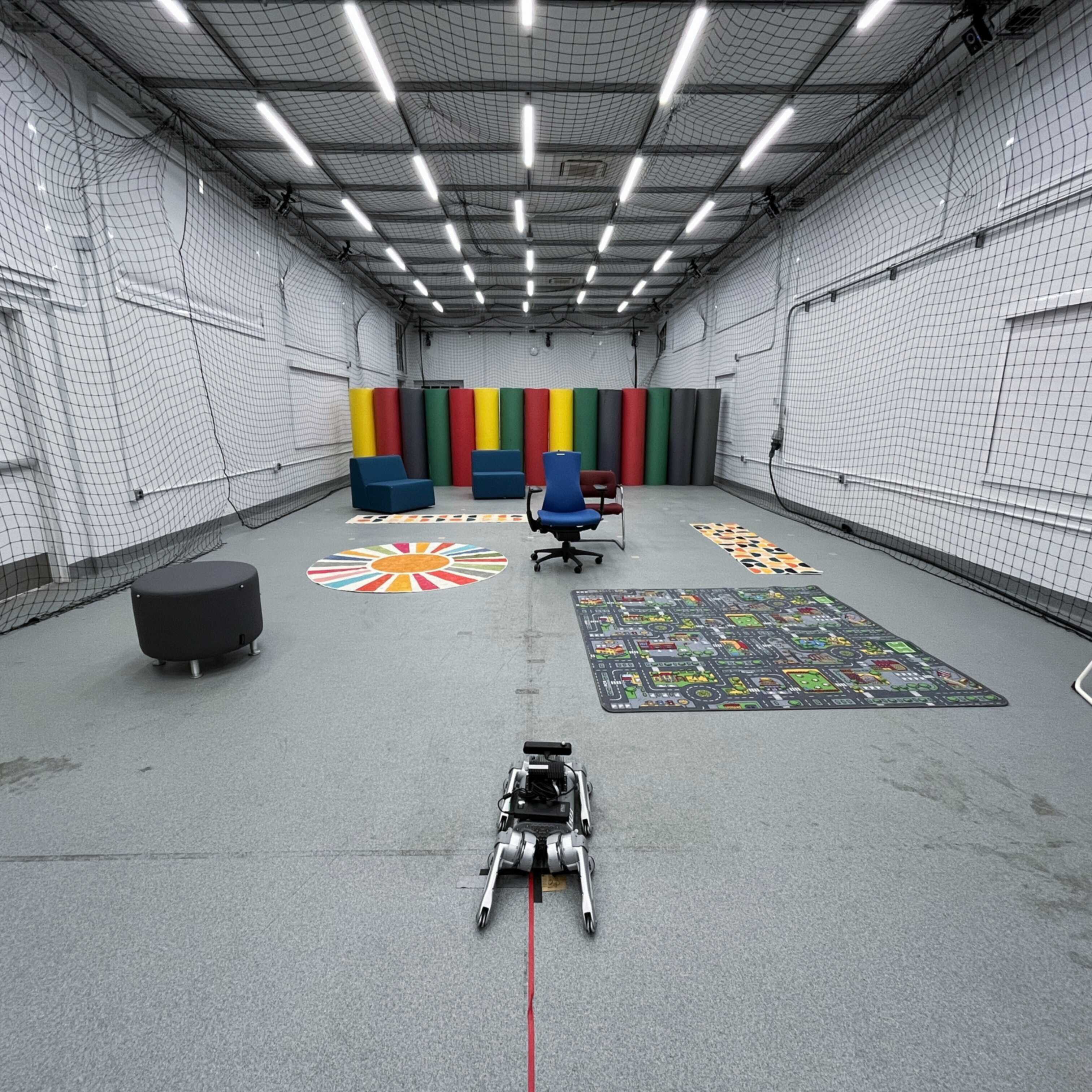}
    \end{minipage}
    \hfill
    \begin{minipage}{0.28\linewidth}
        \centering
        \includegraphics[width=\linewidth]{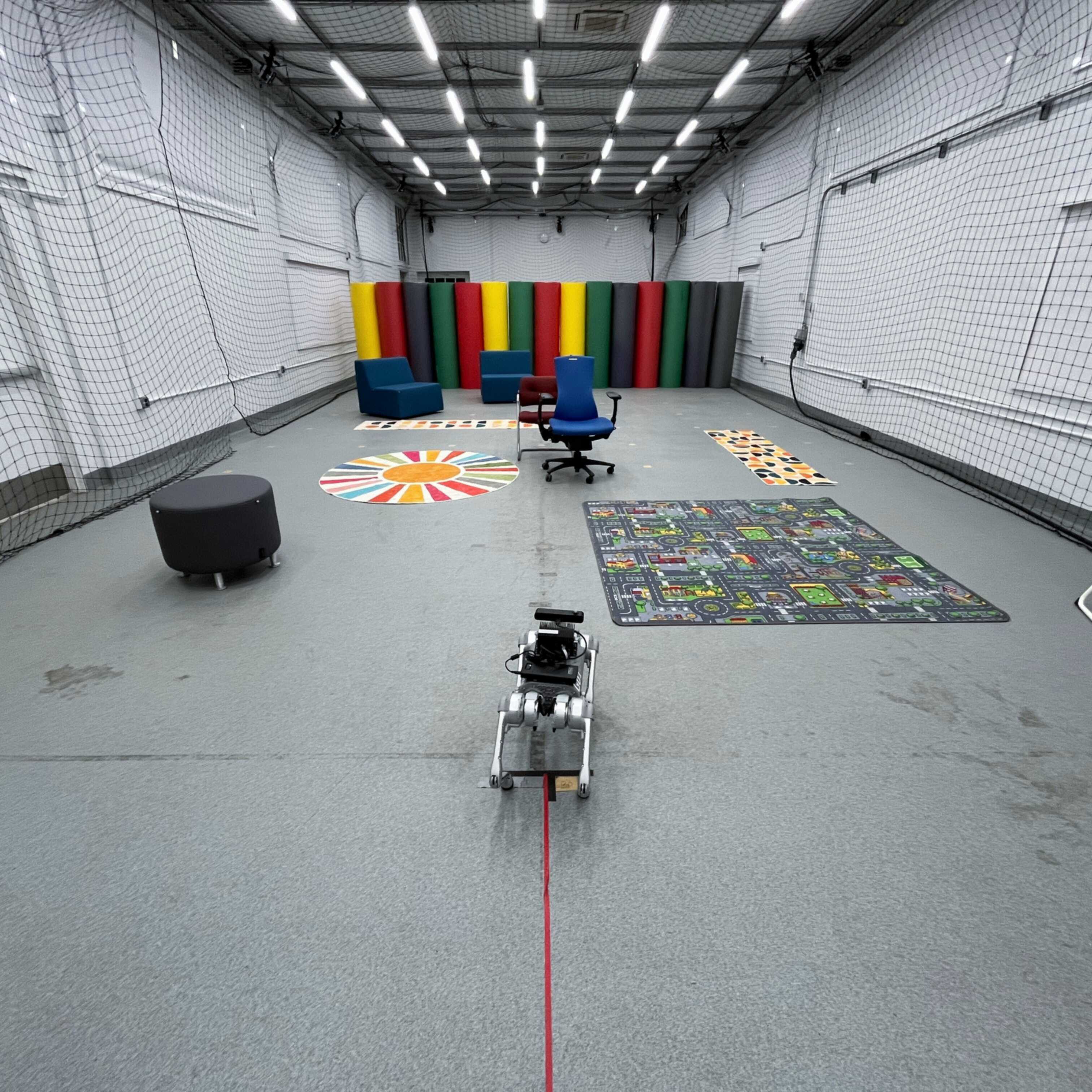}
    \end{minipage}
    \hfill
    \begin{minipage}{0.28\linewidth}
        \centering
        \includegraphics[width=\linewidth]{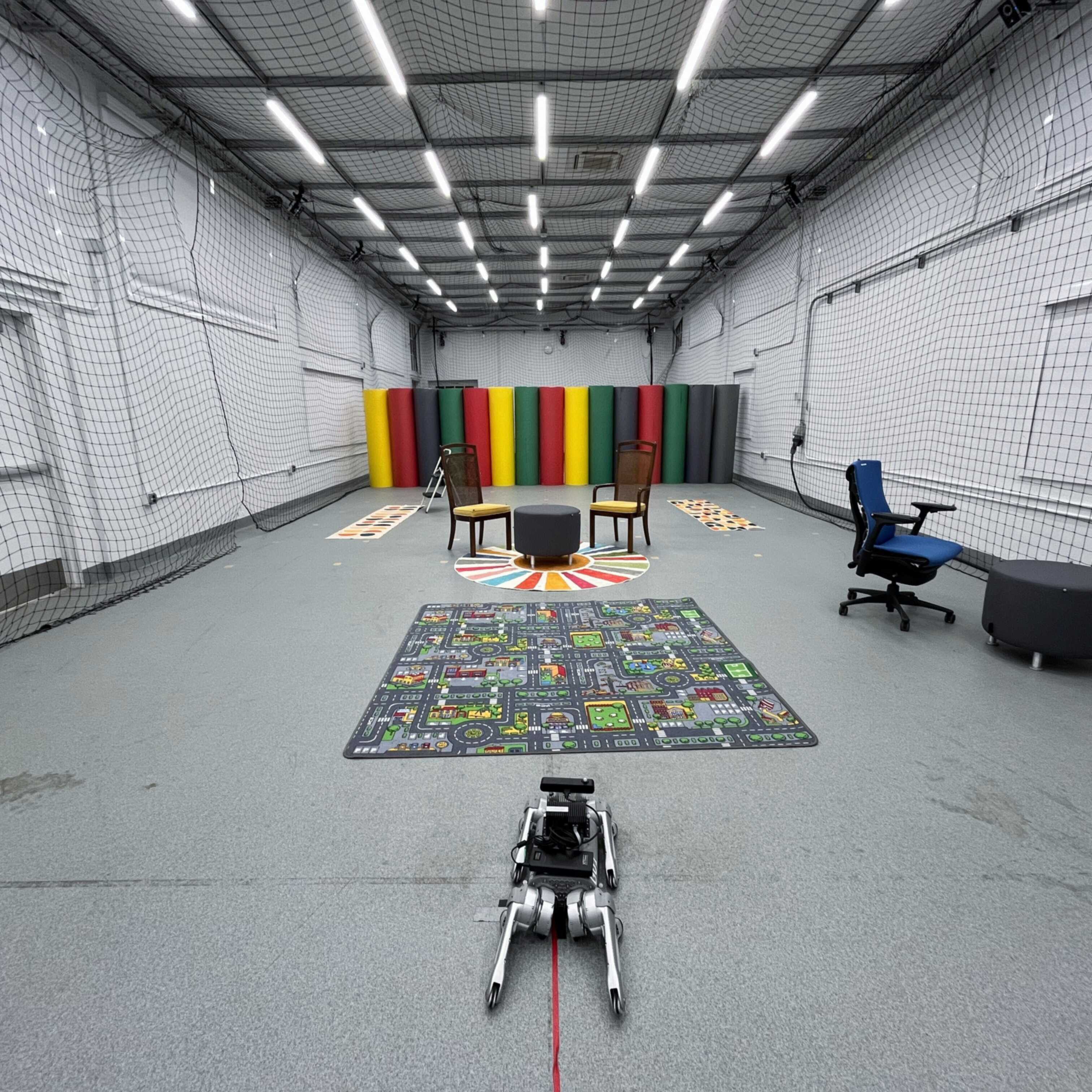}
    \end{minipage}
\end{minipage}
\vspace{12pt}
\begin{minipage}{\linewidth}
    \centering
    \begin{minipage}{0.28\linewidth}
        \centering
        \includegraphics[width=\linewidth]{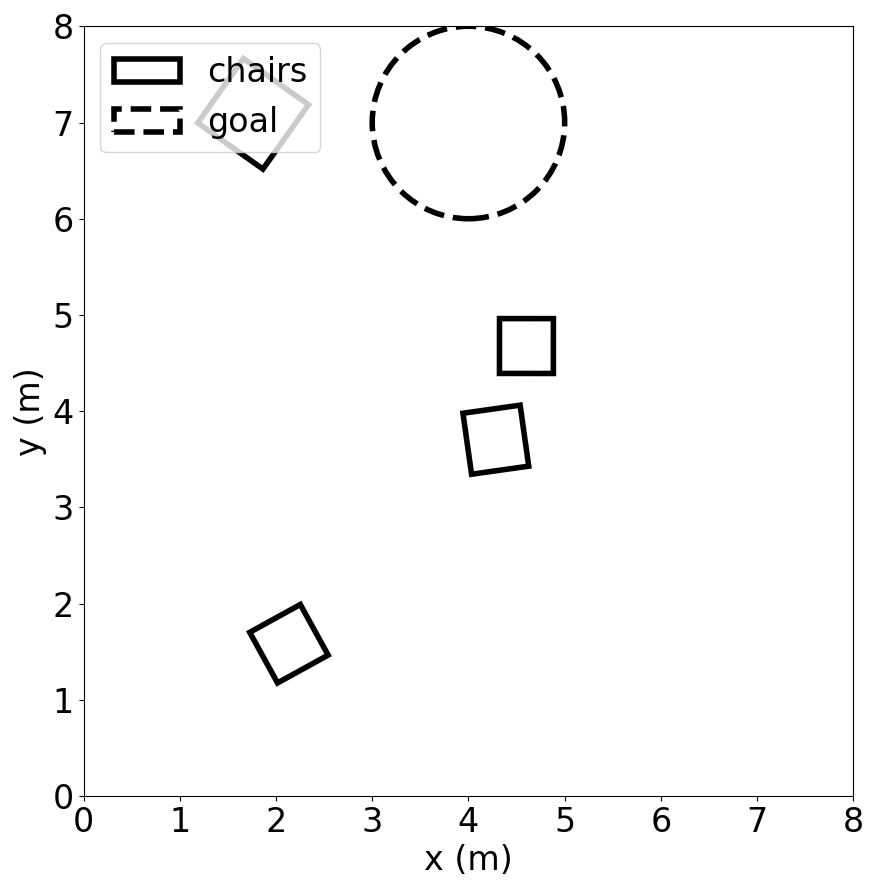}
        \caption*{(13) Environment 13}
    \end{minipage}
    \hfill
    \begin{minipage}{0.28\linewidth}
        \centering
        \includegraphics[width=\linewidth]{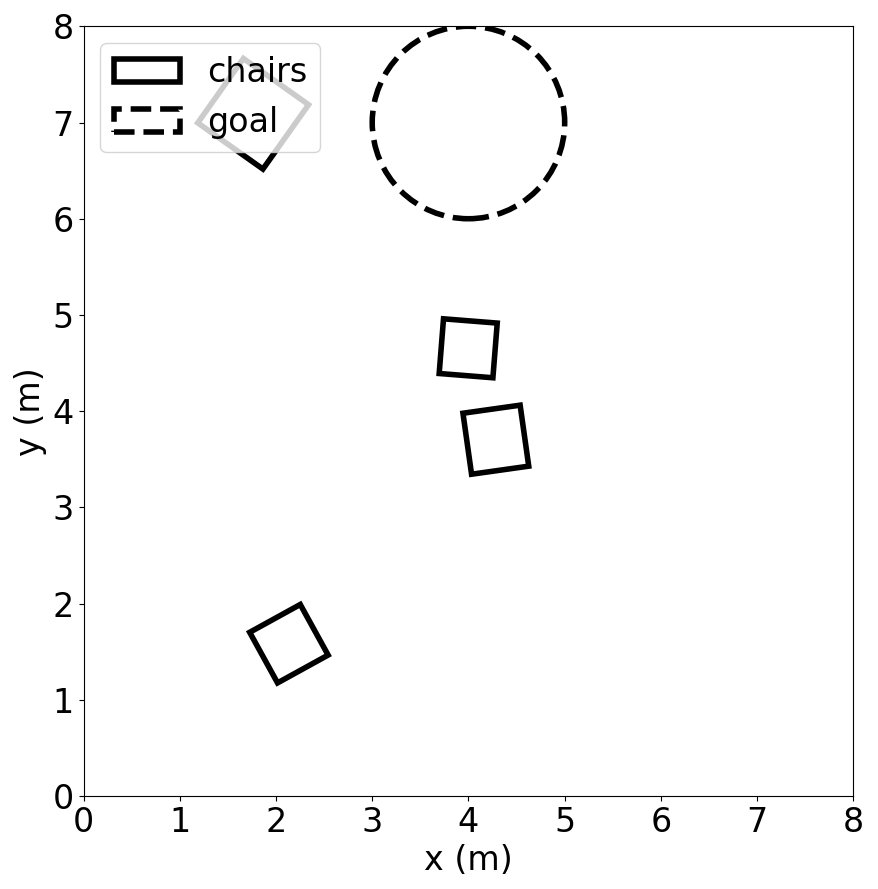}
        \caption*{(14) Environment 14}
    \end{minipage}
    \hfill
    \begin{minipage}{0.28\linewidth}
        \centering
        \includegraphics[width=\linewidth]{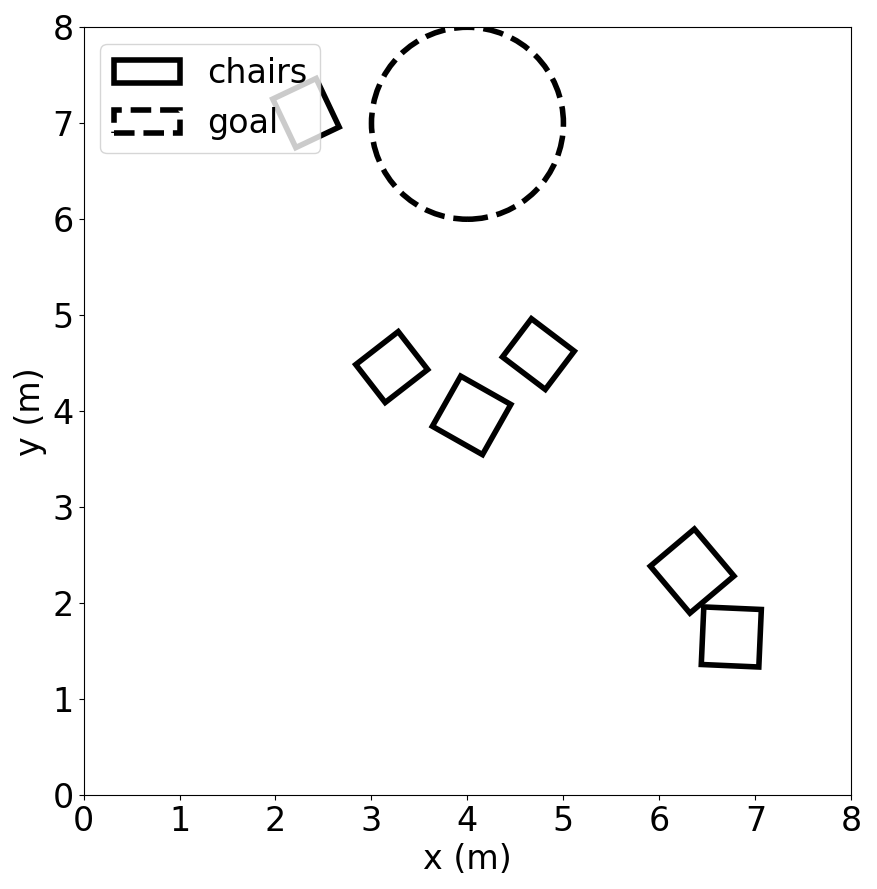}
        \caption*{(15) Environment 15}
    \end{minipage}
\end{minipage}
\end{figure*}

\begin{figure*}[h]
\centering
\begin{minipage}{\linewidth}
    \centering
    \begin{minipage}{0.28\linewidth}
        \centering
        \includegraphics[width=\linewidth]{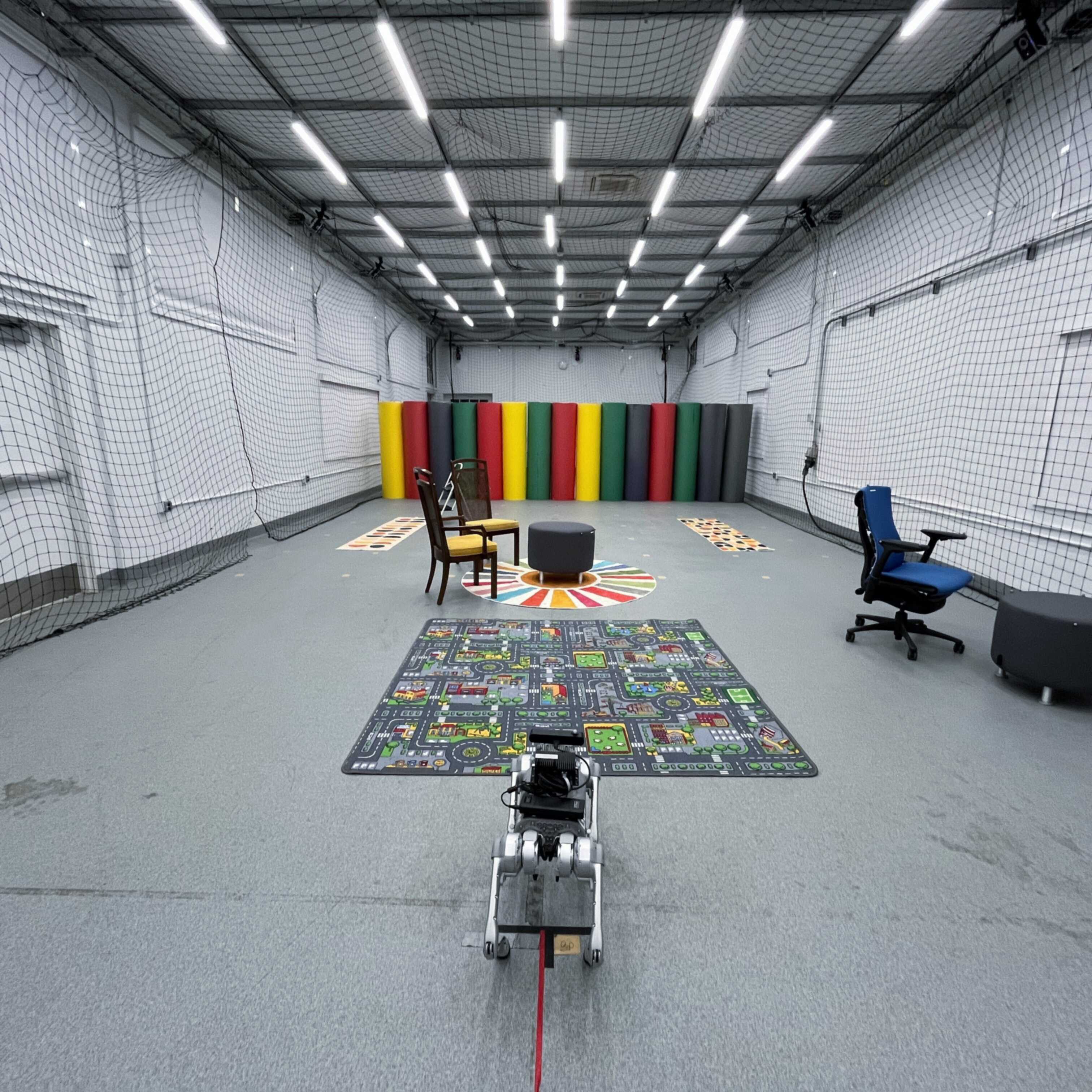}
    \end{minipage}
    \hfill
    \begin{minipage}{0.28\linewidth}
        \centering
        \includegraphics[width=\linewidth]{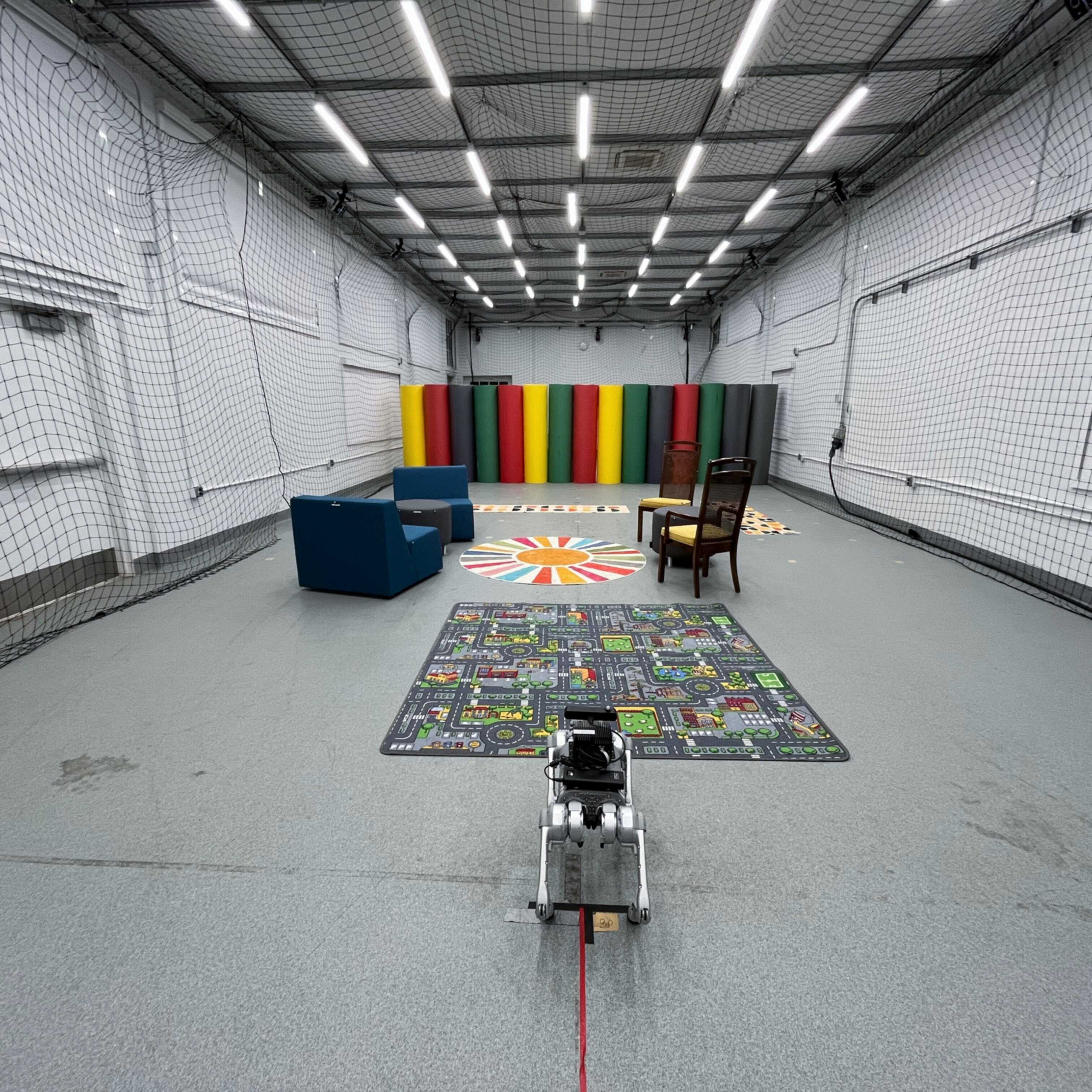}
    \end{minipage}
    \hfill
    \begin{minipage}{0.28\linewidth}
        \centering
        \includegraphics[width=\linewidth]{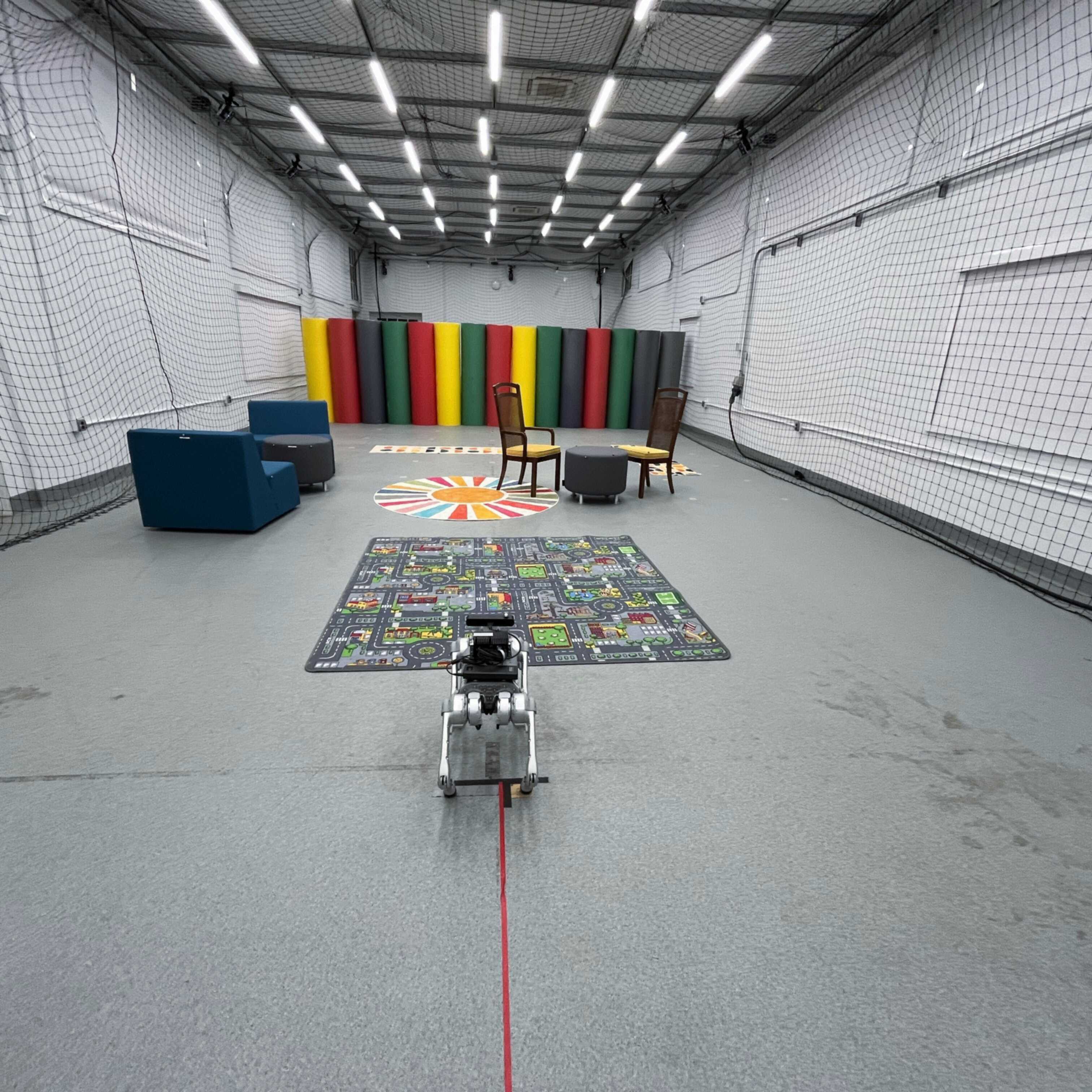}
    \end{minipage}
\end{minipage}
\vspace{12pt}
\begin{minipage}{\linewidth}
    \centering
    \begin{minipage}{0.28\linewidth}
        \centering
        \includegraphics[width=\linewidth]{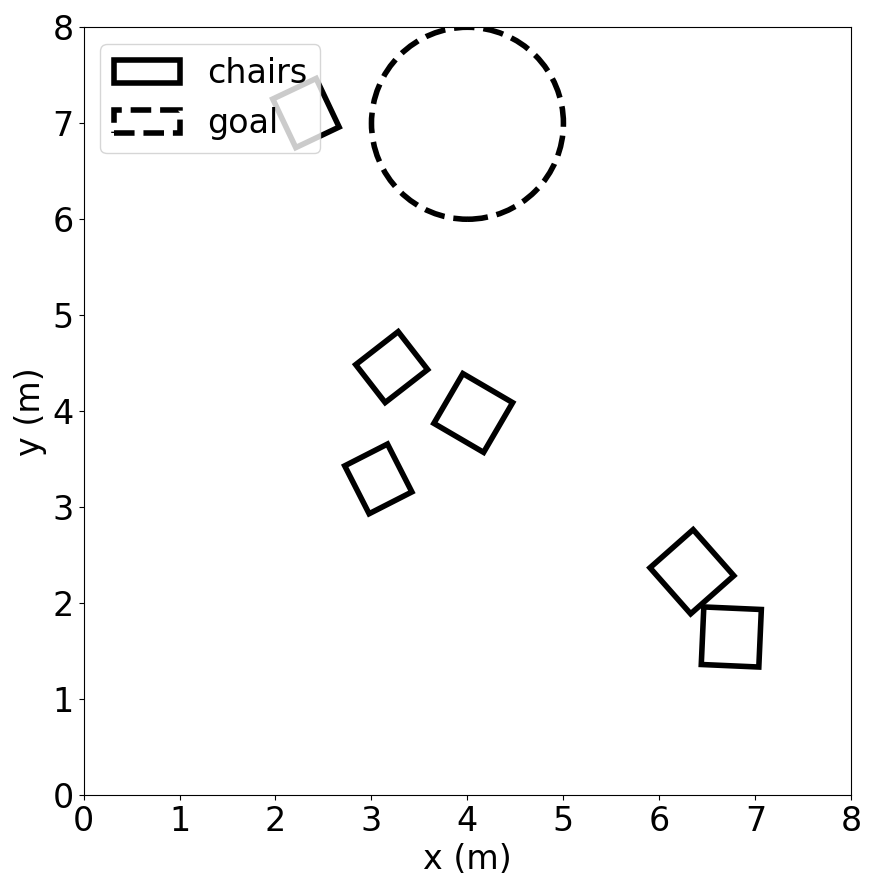}
        \caption*{(16) Environment 16}
    \end{minipage}
    \hfill
    \begin{minipage}{0.28\linewidth}
        \centering
        \includegraphics[width=\linewidth]{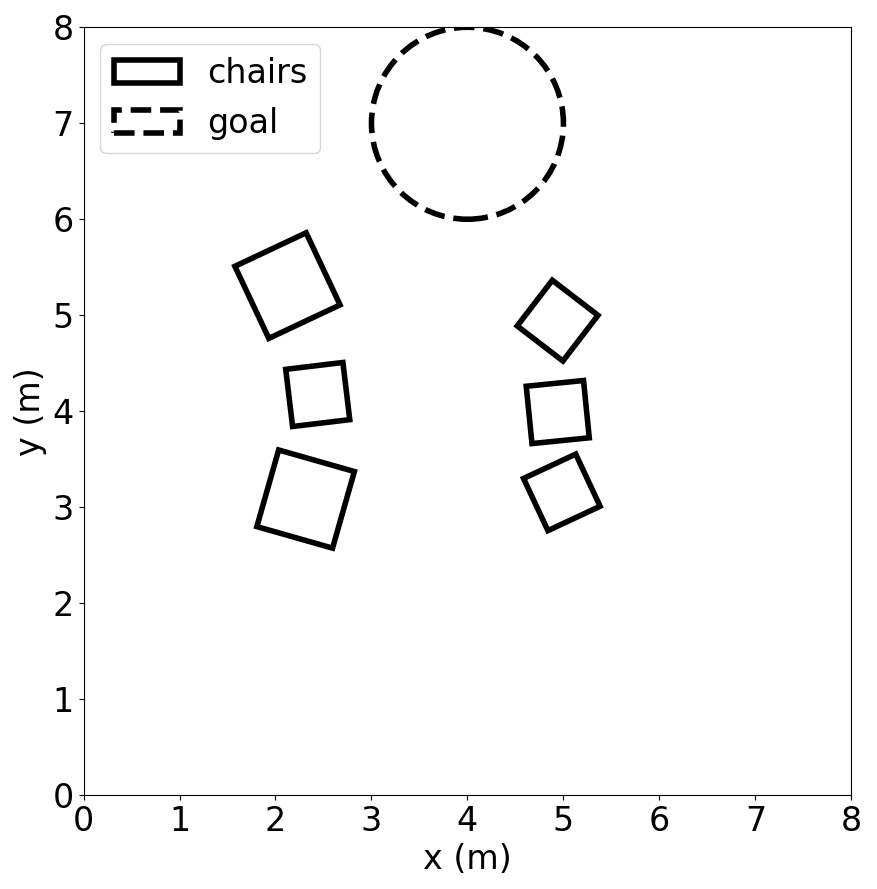}
        \caption*{(17) Environment 17}
    \end{minipage}
    \hfill
    \begin{minipage}{0.28\linewidth}
        \centering
        \includegraphics[width=\linewidth]{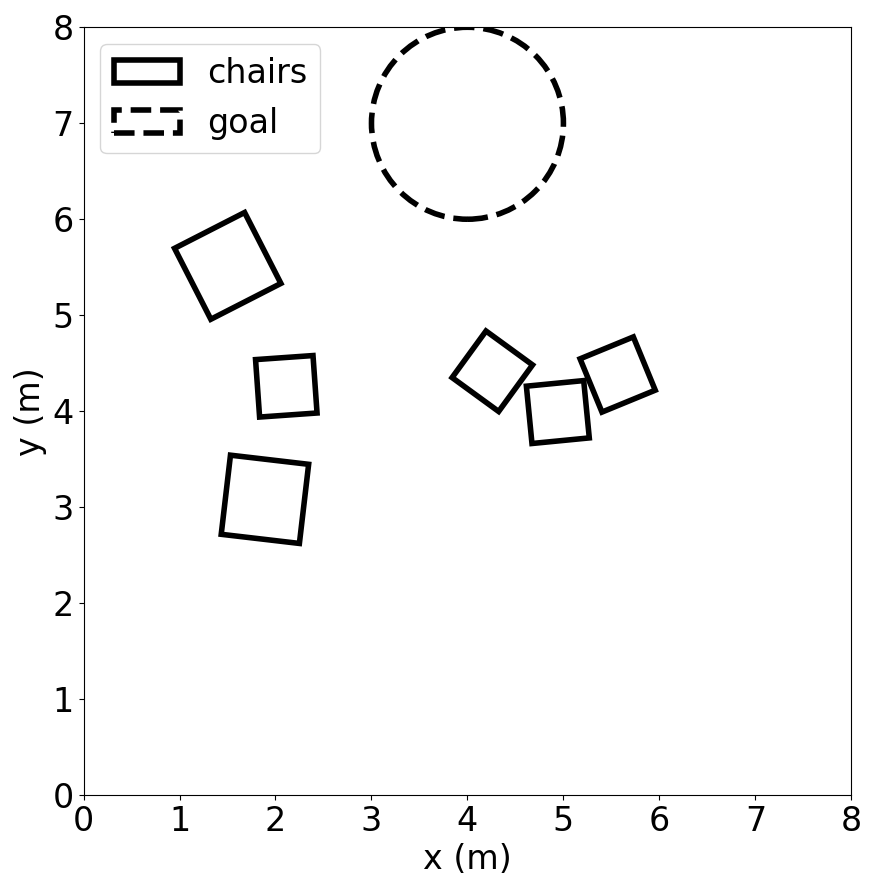}
        \caption*{(18) Environment 18}
    \end{minipage}
\end{minipage}
\end{figure*}

\begin{figure*}[h]
\centering
\begin{minipage}{\linewidth}
    \centering
    \begin{minipage}{0.28\linewidth}
        \centering
        \includegraphics[width=\linewidth]{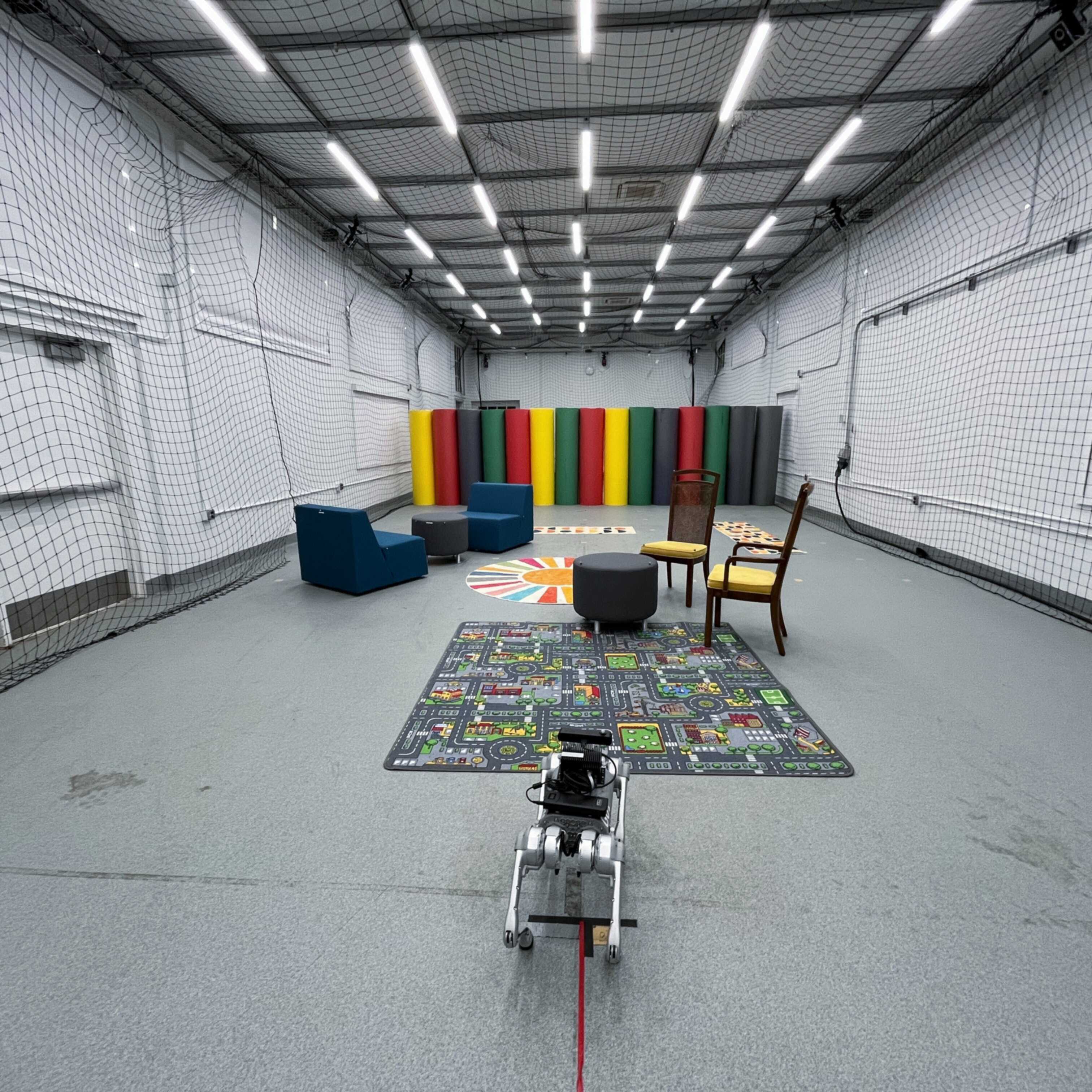}
    \end{minipage}
    \hfill
    \begin{minipage}{0.28\linewidth}
        \centering
        \includegraphics[width=\linewidth]{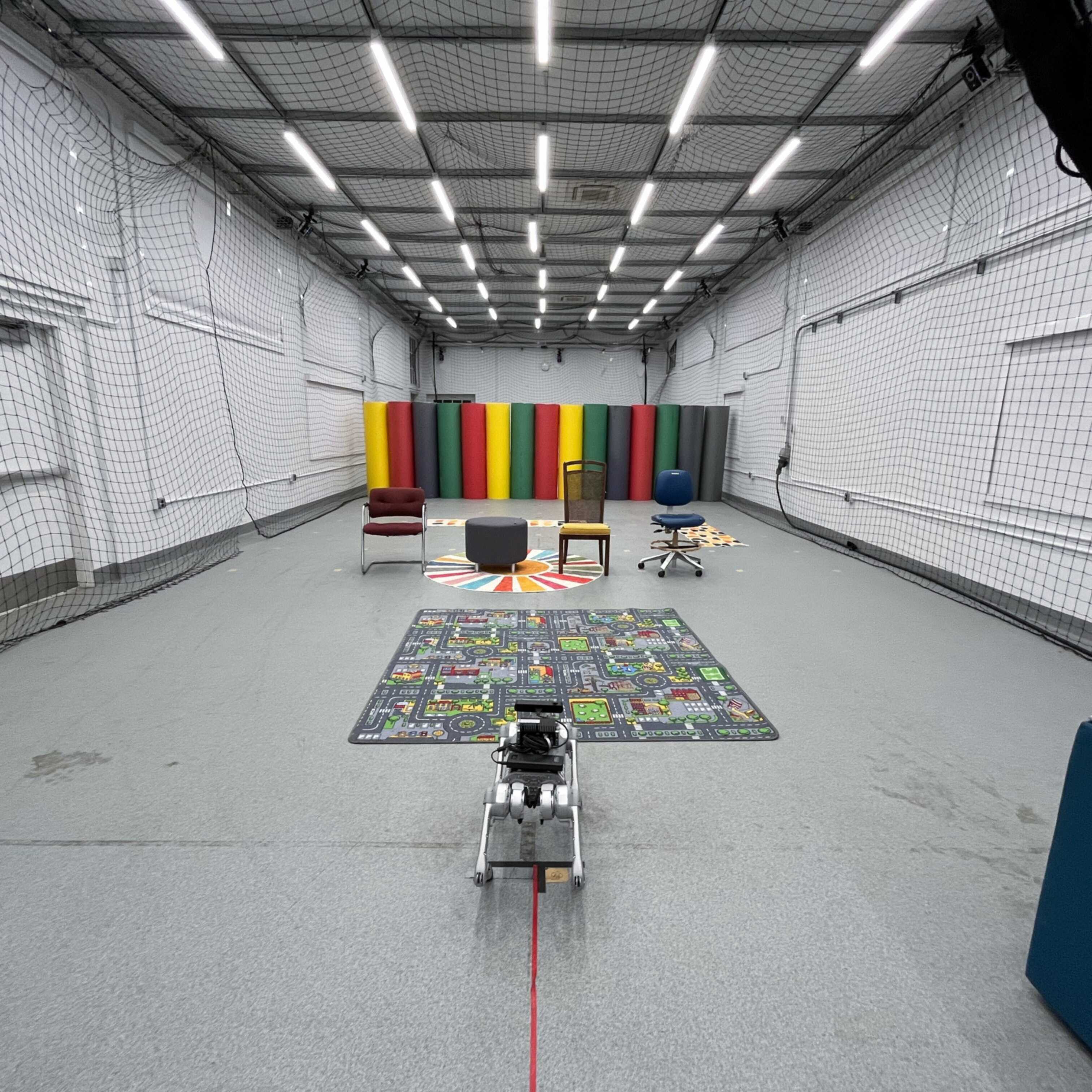}
    \end{minipage}
    \hfill
    \begin{minipage}{0.28\linewidth}
        \centering
        \includegraphics[width=\linewidth]{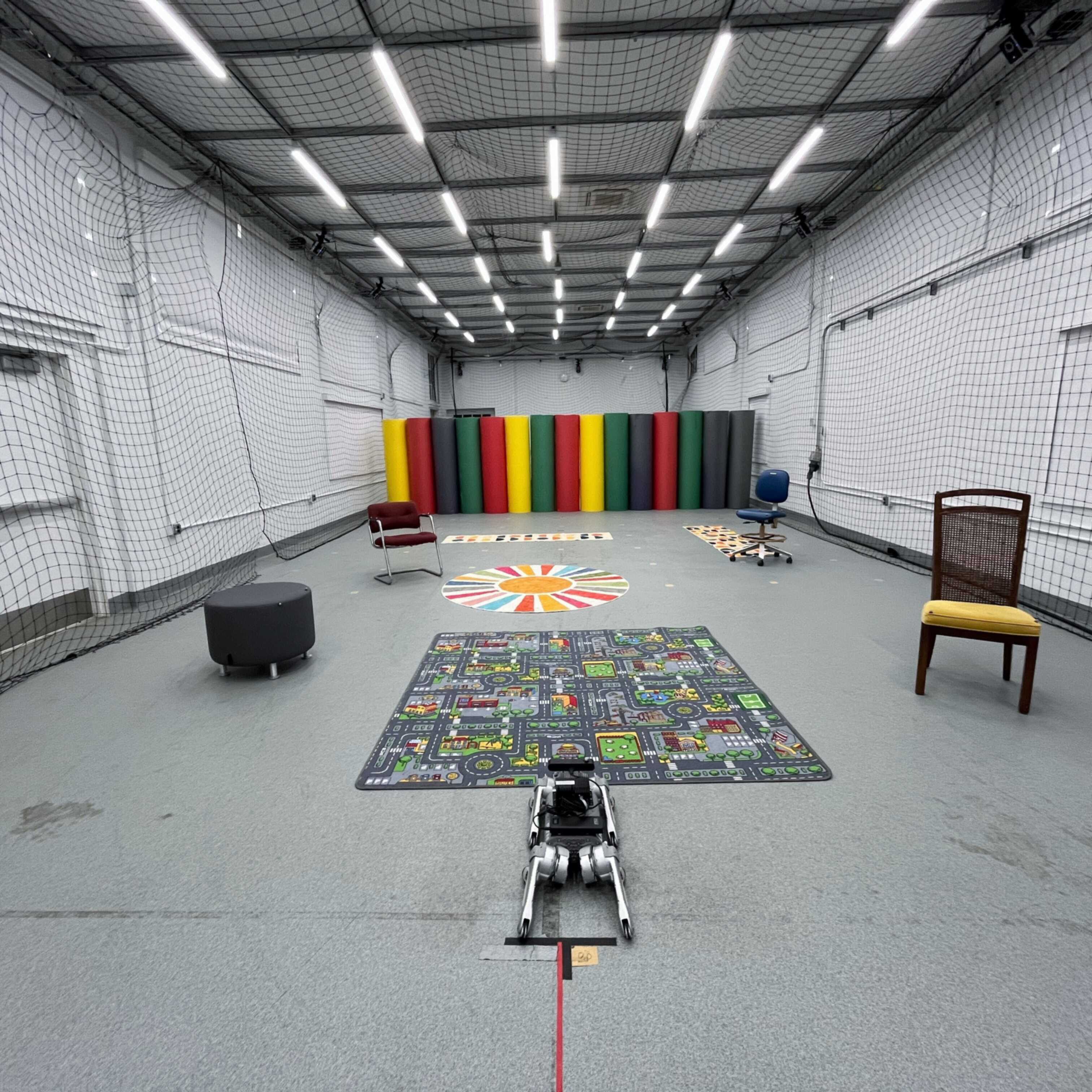}
    \end{minipage}
\end{minipage}
\vspace{12pt}
\begin{minipage}{\linewidth}
    \centering
    \begin{minipage}{0.28\linewidth}
        \centering
        \includegraphics[width=\linewidth]{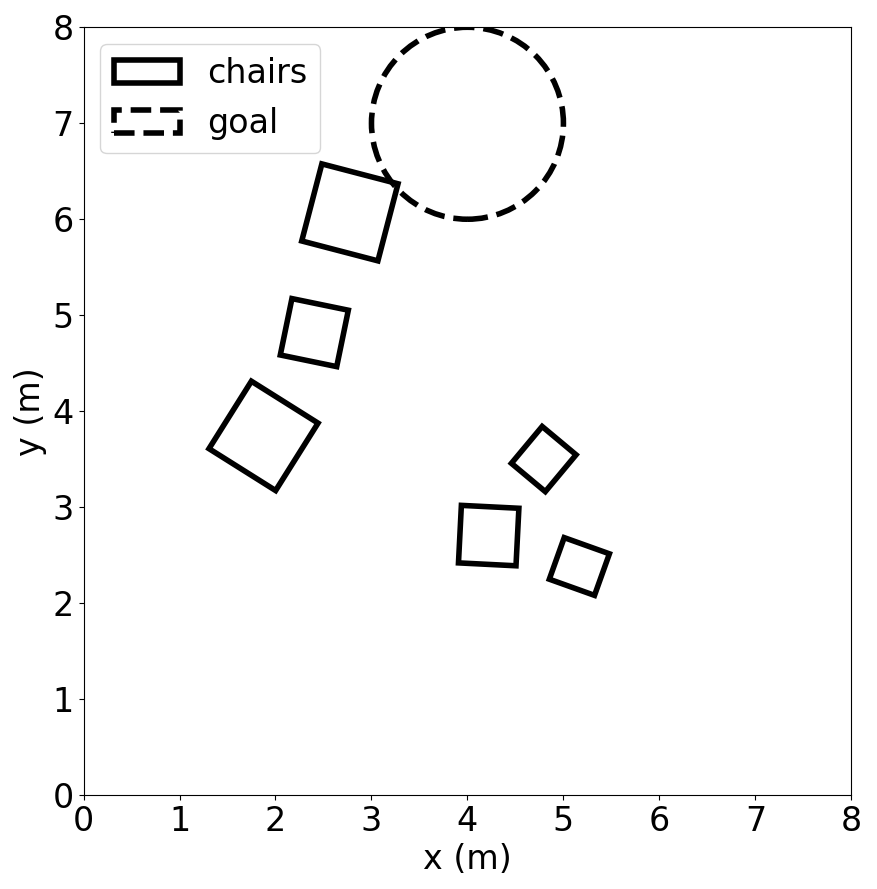}
        \caption*{(19) Environment 19}
    \end{minipage}
    \hfill
    \begin{minipage}{0.28\linewidth}
        \centering
        \includegraphics[width=\linewidth]{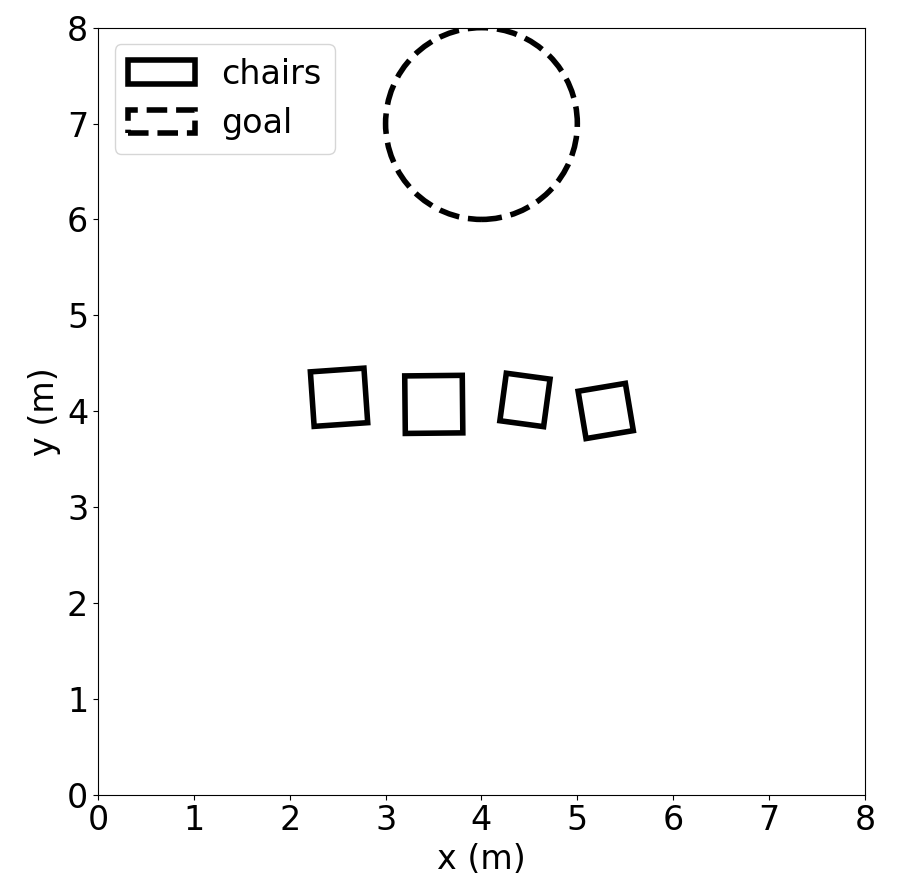}
        \caption*{(20) Environment 20}
    \end{minipage}
    \hfill
    \begin{minipage}{0.28\linewidth}
        \centering
        \includegraphics[width=\linewidth]{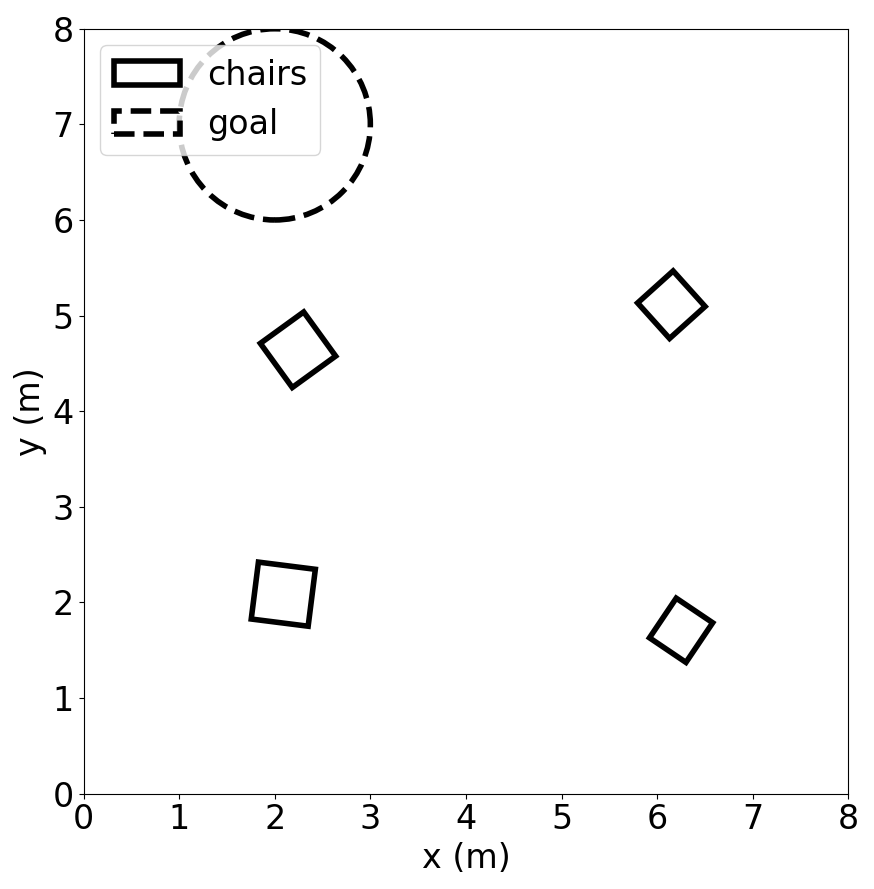}
        \caption*{(21) Environment 21}
    \end{minipage}
\end{minipage}
\end{figure*}

\begin{figure*}[h]
\centering
\begin{minipage}{\linewidth}
    \centering
    \begin{minipage}{0.28\linewidth}
        \centering
        \includegraphics[width=\linewidth]{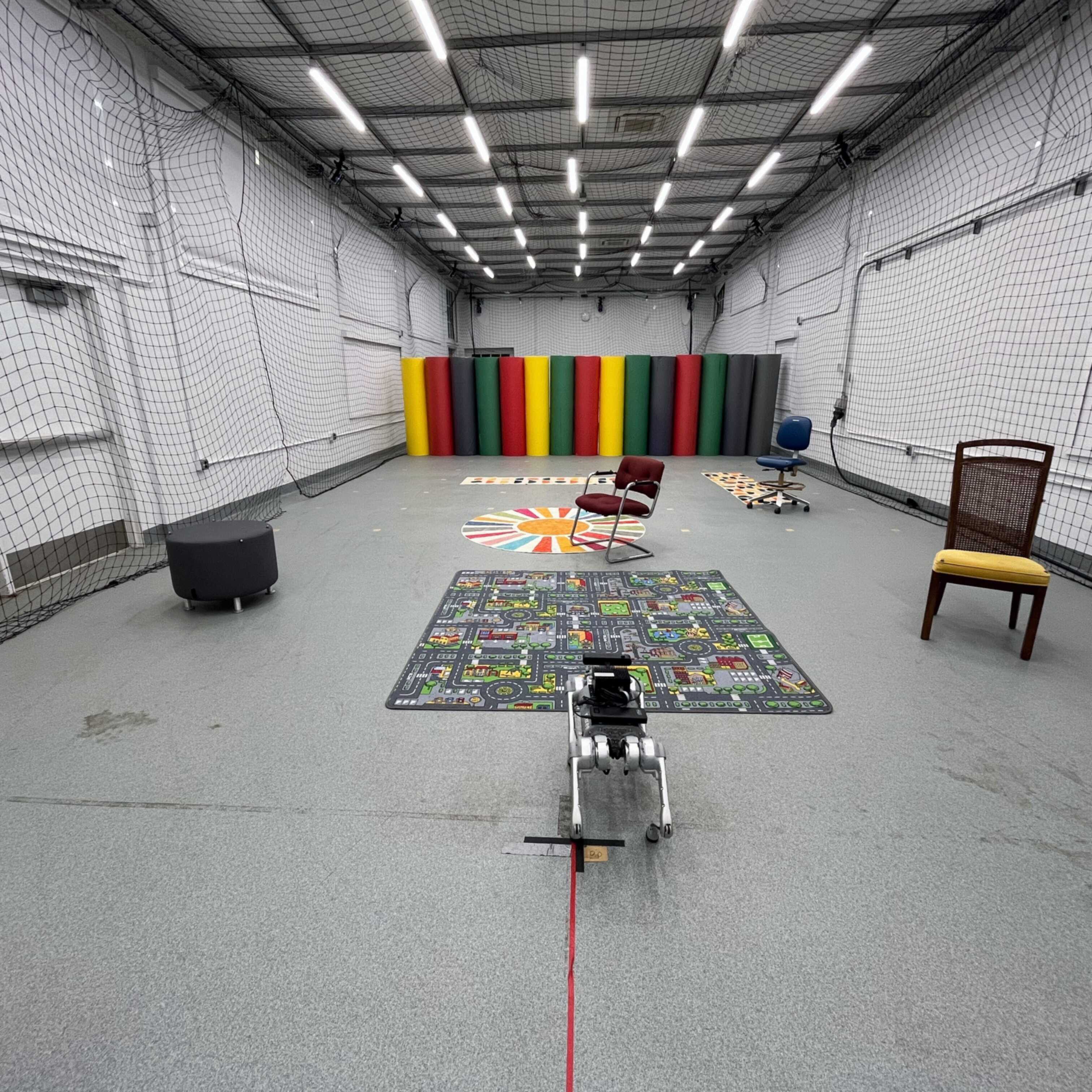}
    \end{minipage}
    \hfill
    \begin{minipage}{0.28\linewidth}
        \centering
        \includegraphics[width=\linewidth]{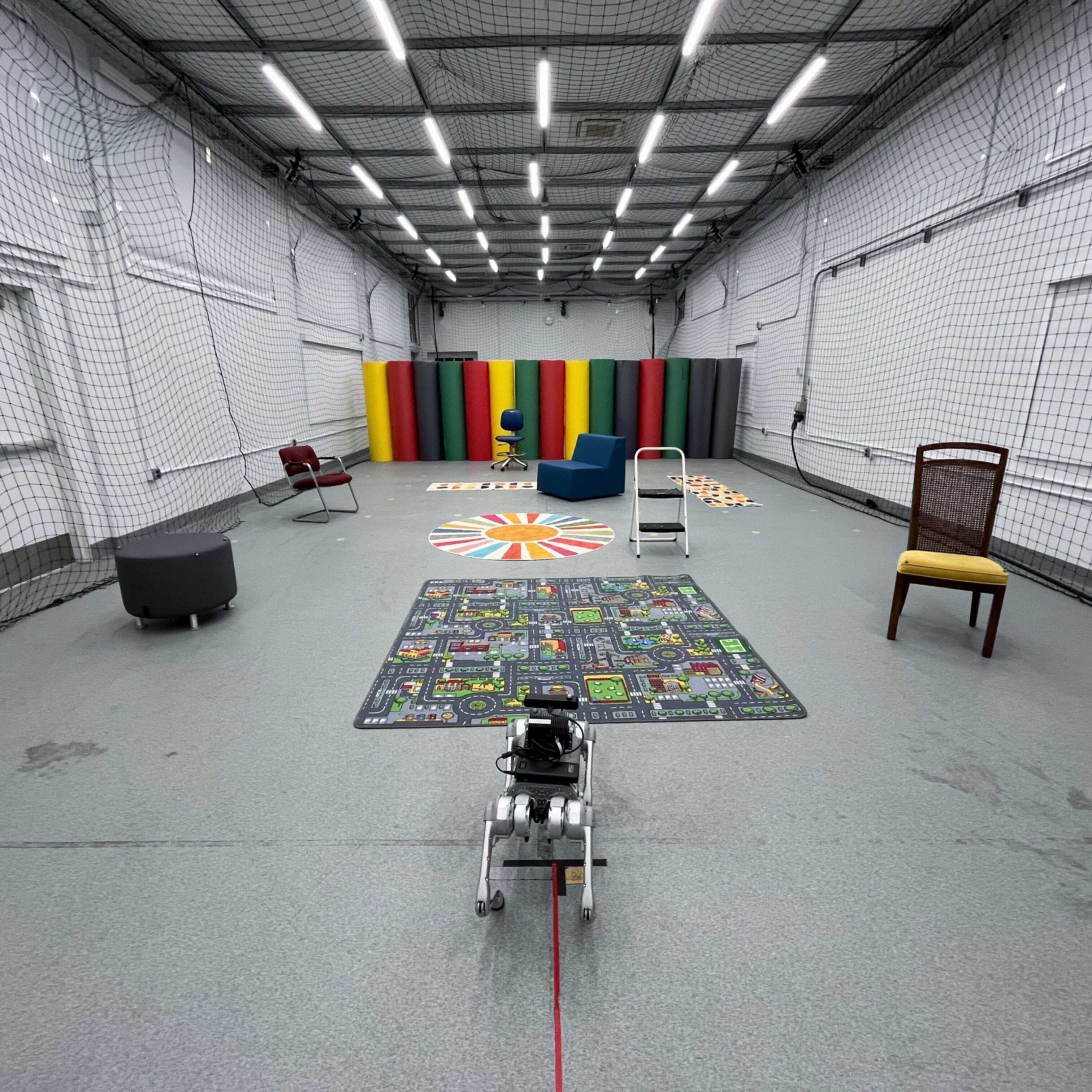}
    \end{minipage}
    \hfill
    \begin{minipage}{0.28\linewidth}
        \centering
        \includegraphics[width=\linewidth]{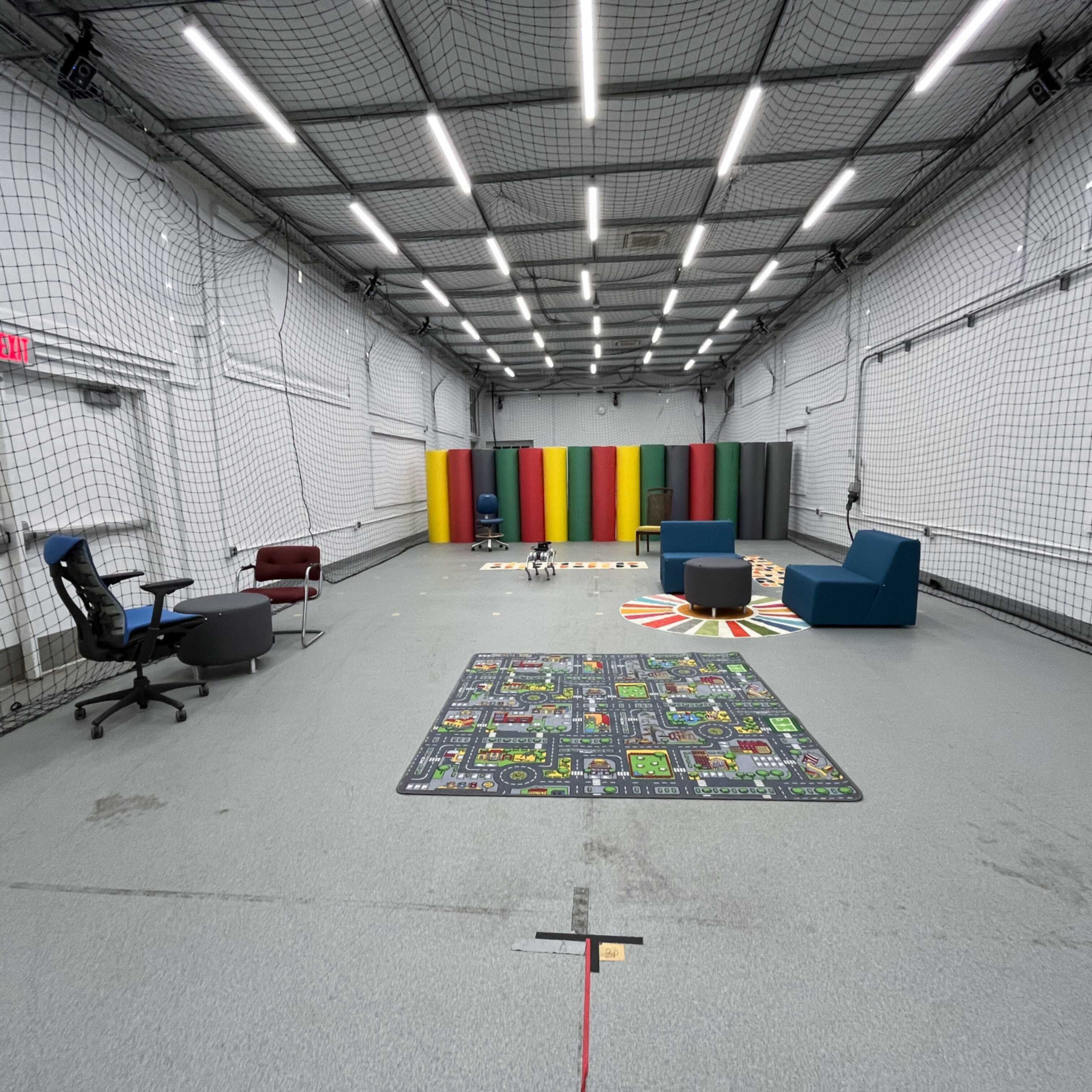}
    \end{minipage}
\end{minipage}
\vspace{12pt}
\begin{minipage}{\linewidth}
    \centering
    \begin{minipage}{0.28\linewidth}
        \centering
        \includegraphics[width=\linewidth]{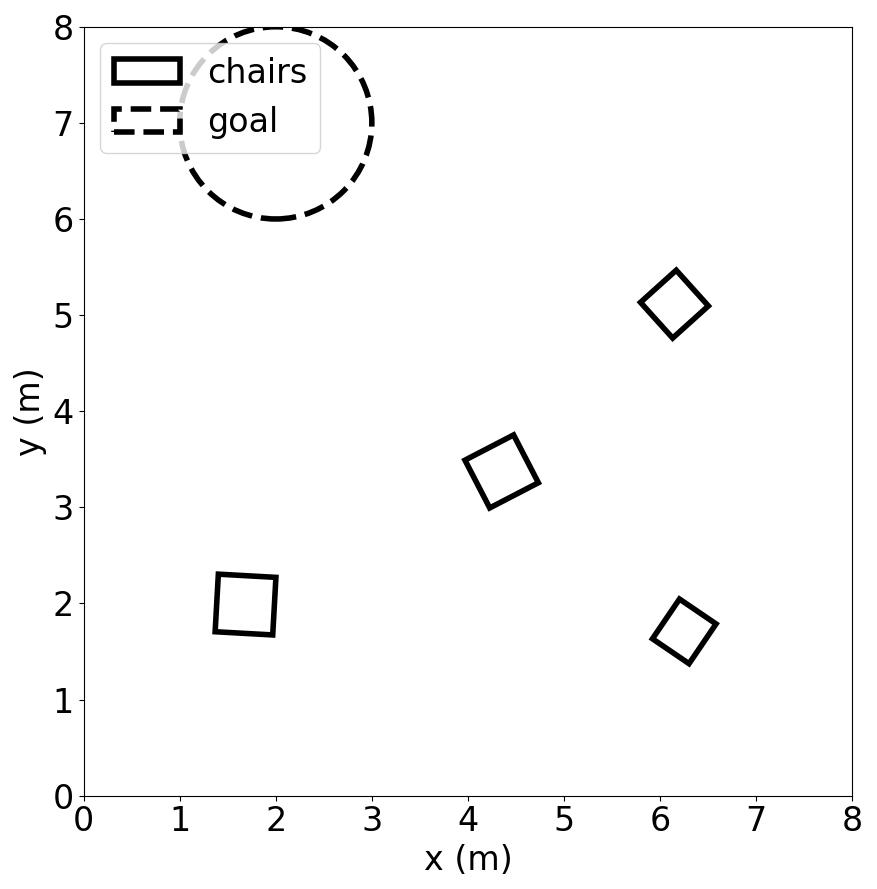}
        \caption*{(22) Environment 22}
    \end{minipage}
    \hfill
    \begin{minipage}{0.28\linewidth}
        \centering
        \includegraphics[width=\linewidth]{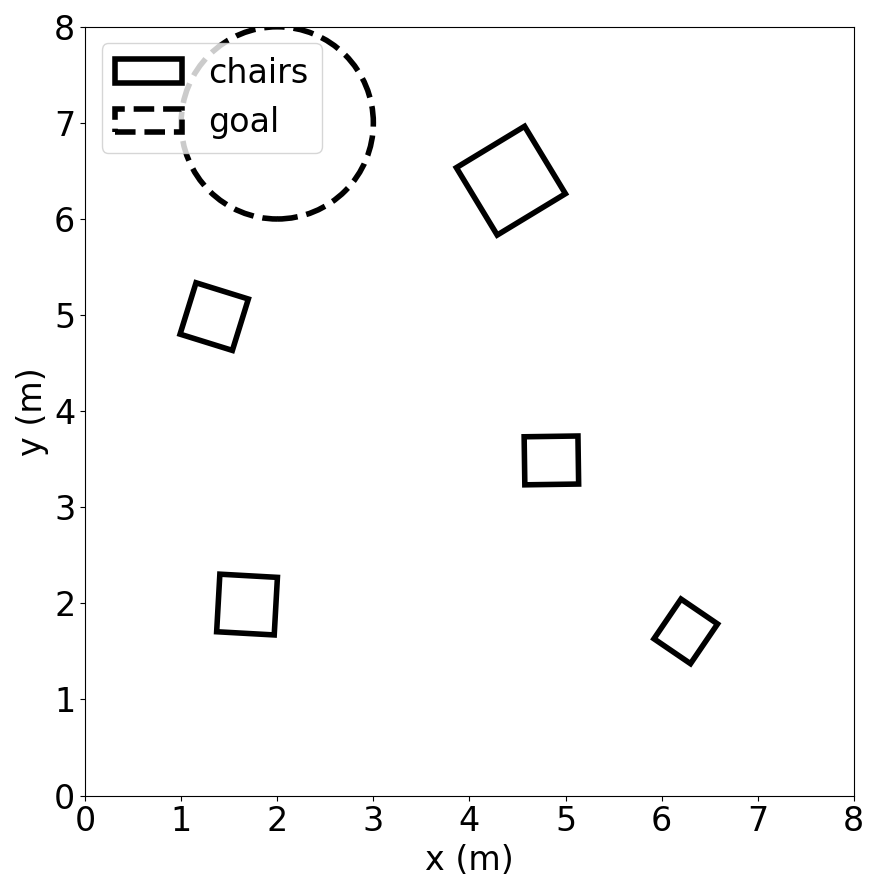}
        \caption*{(23) Environment 23}
    \end{minipage}
    \hfill
    \begin{minipage}{0.28\linewidth}
        \centering
        \includegraphics[width=\linewidth]{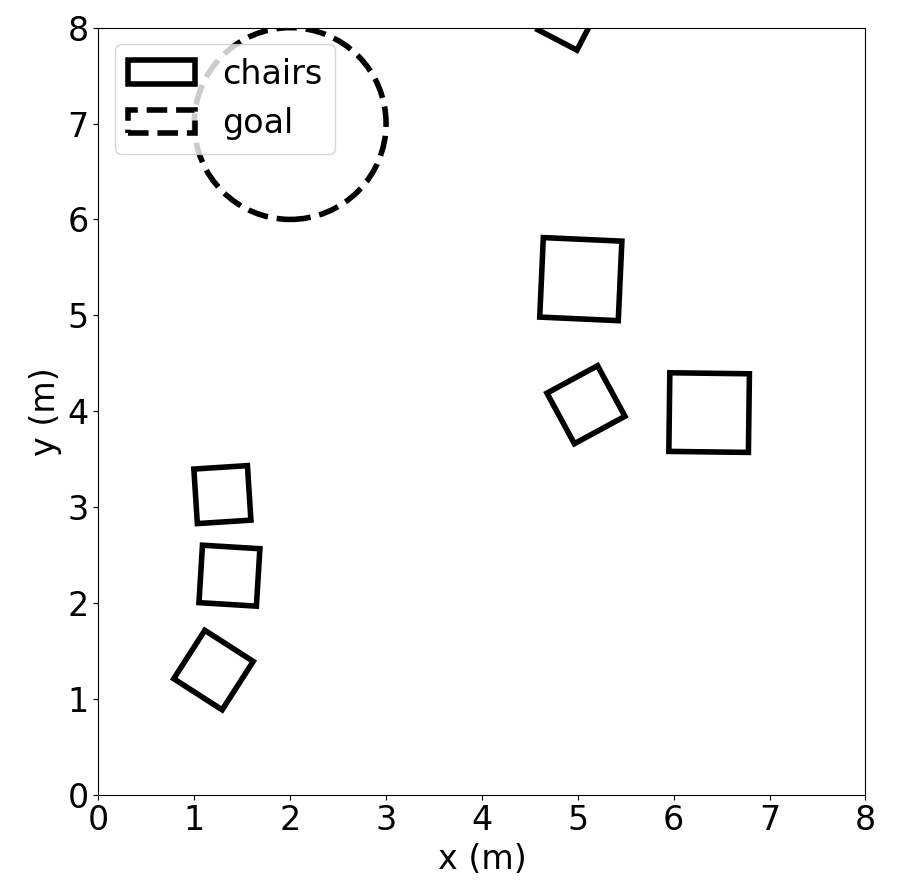}
        \caption*{(24) Environment 24}
    \end{minipage}
\end{minipage}
\end{figure*}

\begin{figure*}[h]
\centering
\begin{minipage}{\linewidth}
    \centering
    \begin{minipage}{0.28\linewidth}
        \centering
        \includegraphics[width=\linewidth]{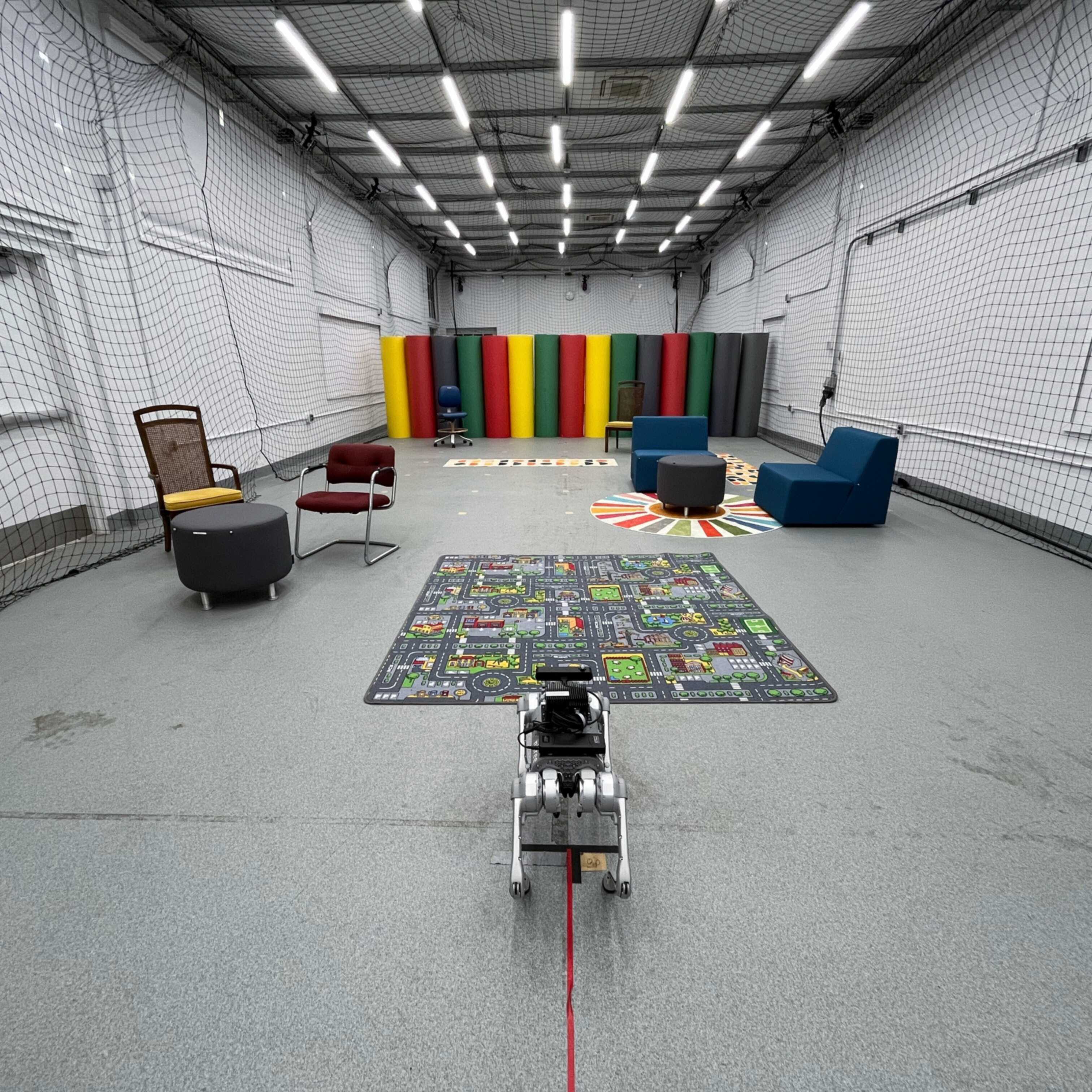}
    \end{minipage}
    \hfill
    \begin{minipage}{0.28\linewidth}
        \centering
        \includegraphics[width=\linewidth]{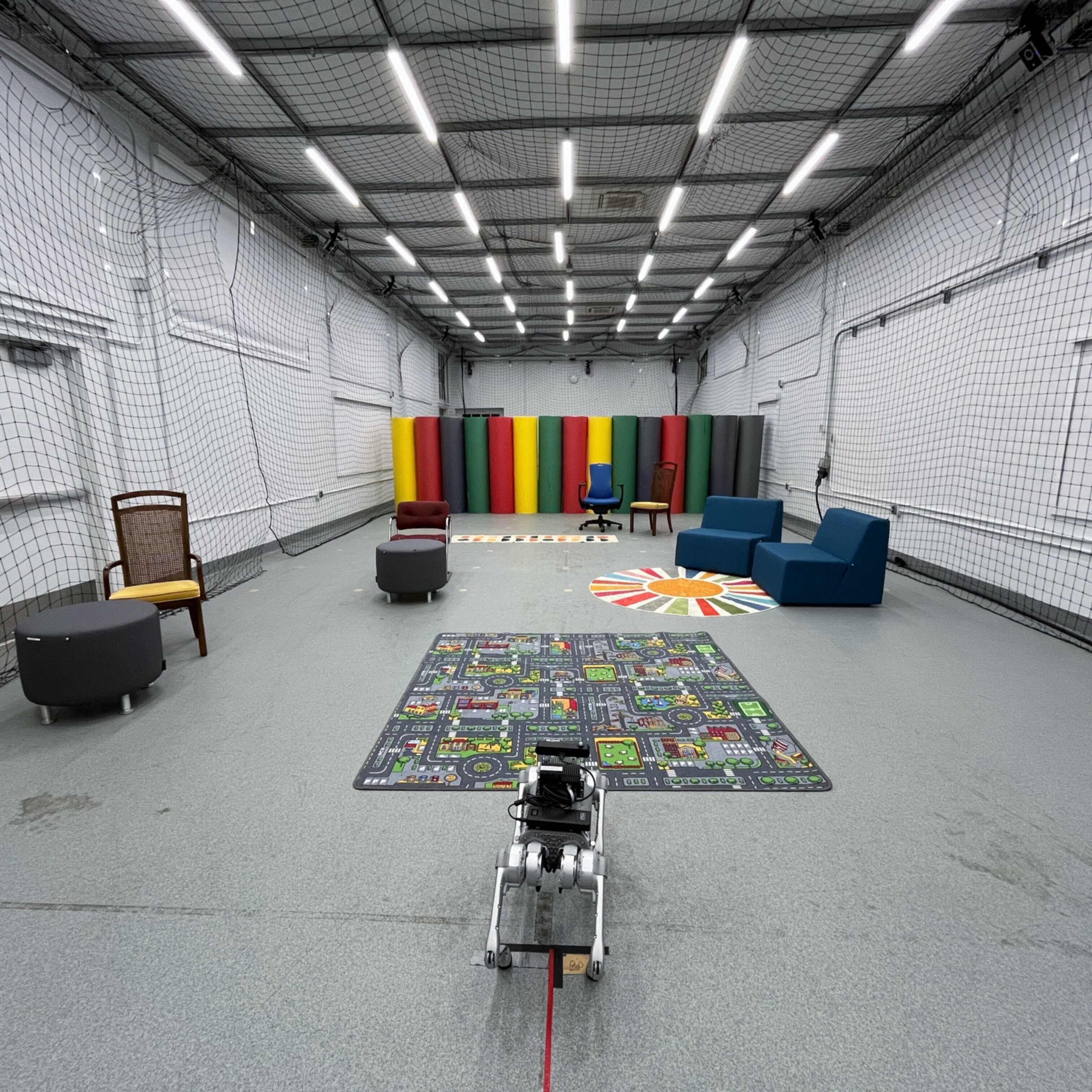}
    \end{minipage}
    \hfill
    \begin{minipage}{0.28\linewidth}
        \centering
        \includegraphics[width=\linewidth]{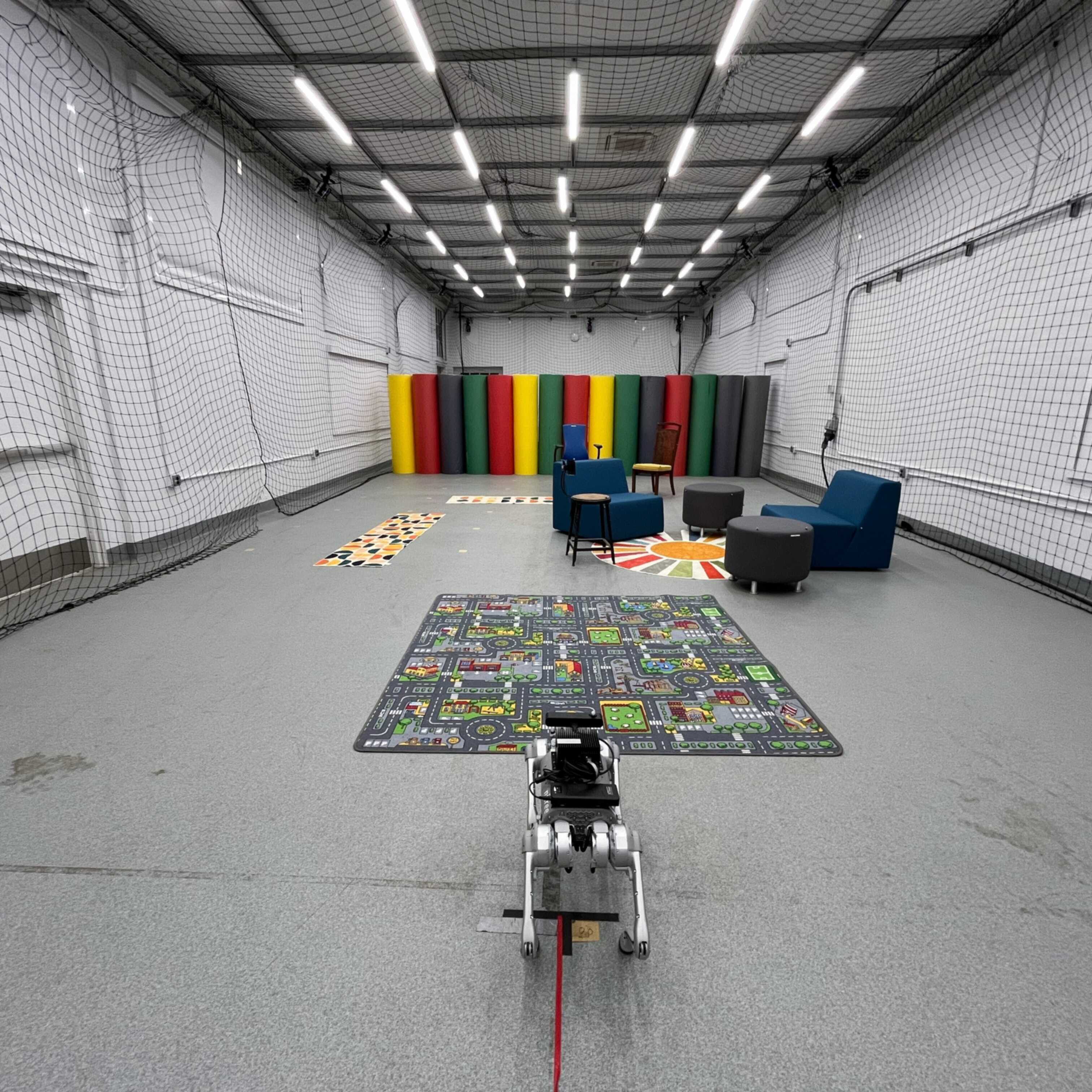}
    \end{minipage}
\end{minipage}
\vspace{12pt}
\begin{minipage}{\linewidth}
    \centering
    \begin{minipage}{0.28\linewidth}
        \centering
        \includegraphics[width=\linewidth]{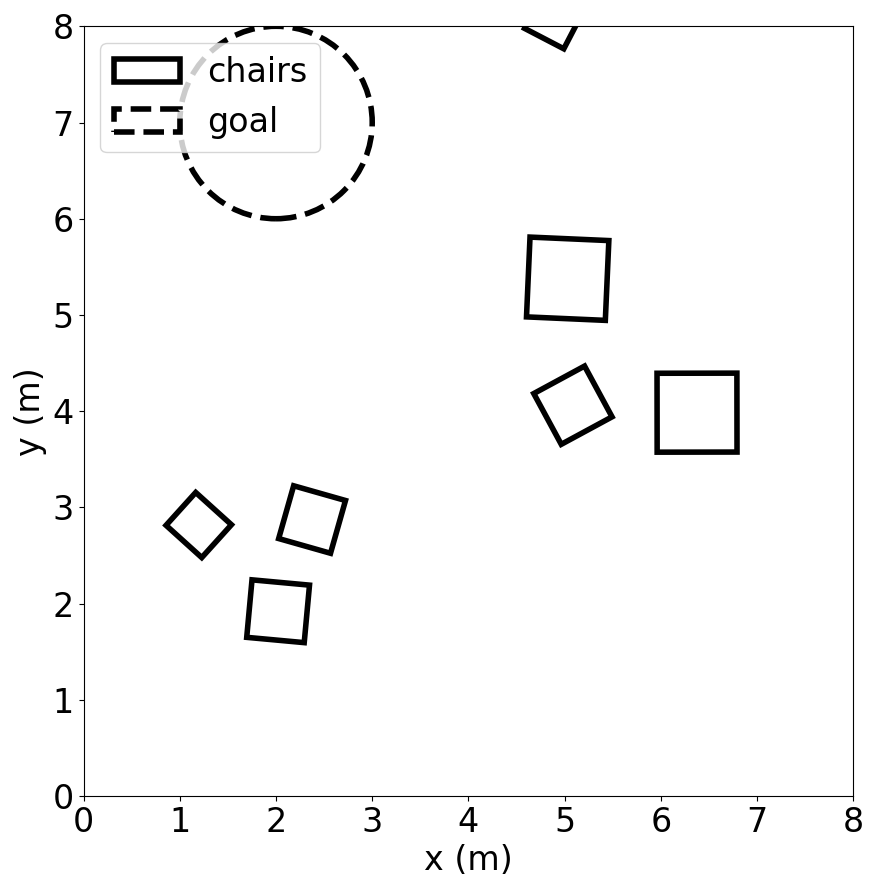}
        \caption*{(25) Environment 25}
    \end{minipage}
    \hfill
    \begin{minipage}{0.28\linewidth}
        \centering
        \includegraphics[width=\linewidth]{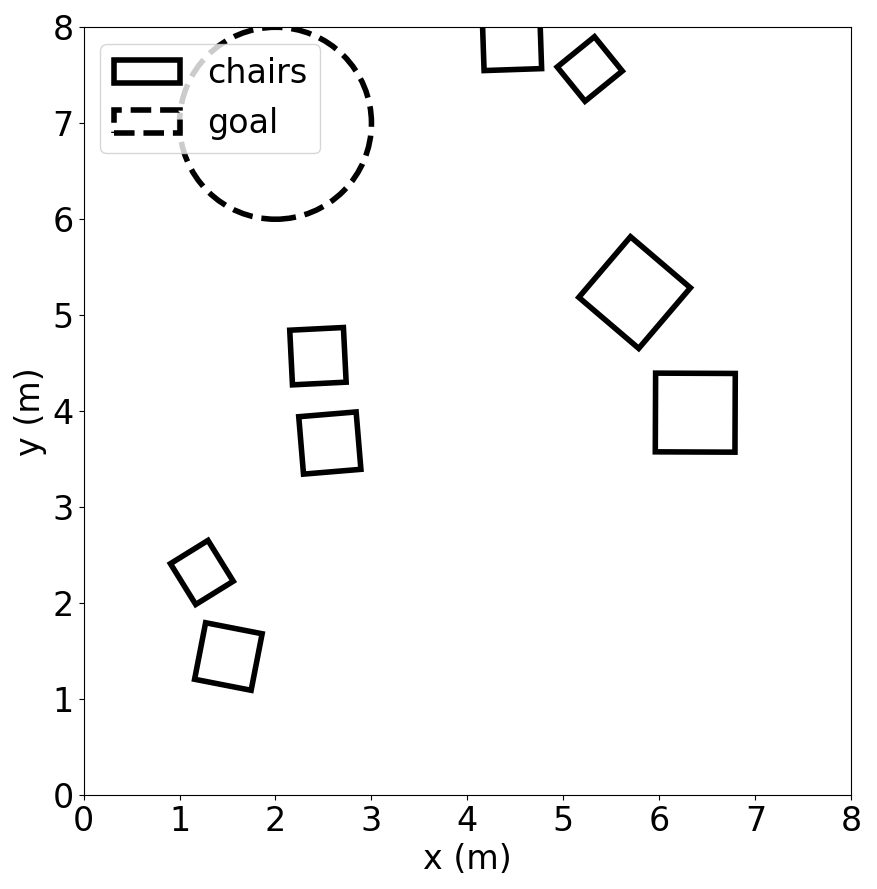}
        \caption*{(26) Environment 26}
    \end{minipage}
    \hfill
    \begin{minipage}{0.28\linewidth}
        \centering
        \includegraphics[width=\linewidth]{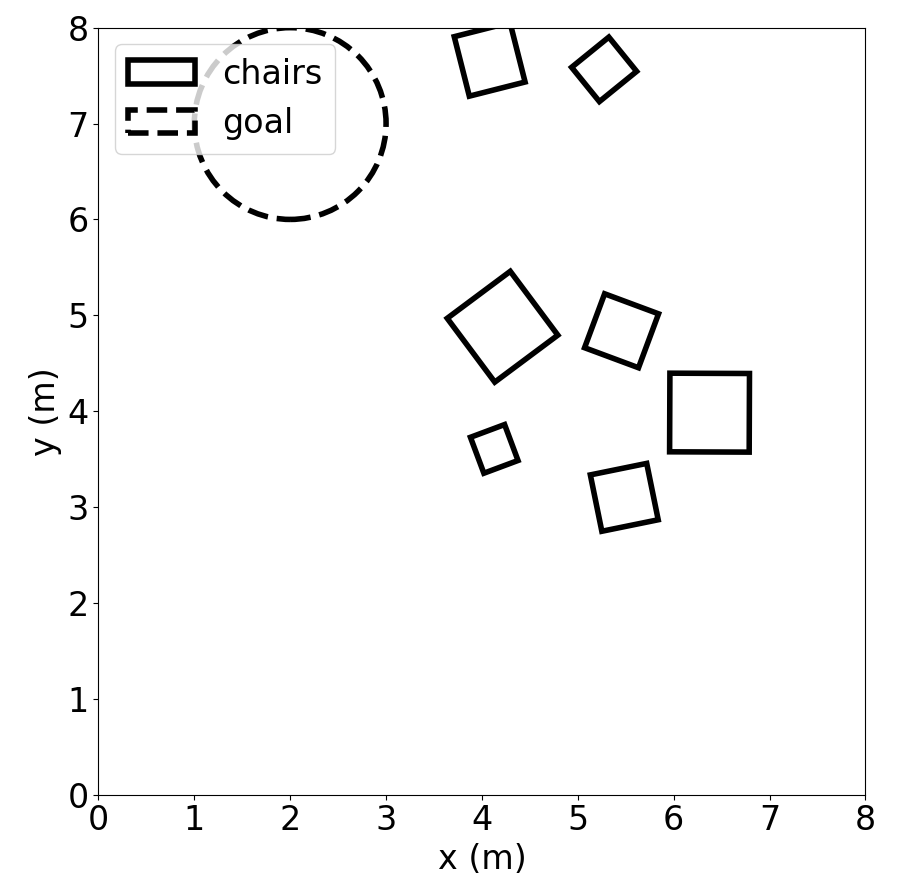}
        \caption*{(27) Environment 27}
    \end{minipage}
\end{minipage}
\end{figure*}

\begin{figure*}[h]
\centering
\begin{minipage}{\linewidth}
    \centering
    \begin{minipage}{0.28\linewidth}
        \centering
        \includegraphics[width=\linewidth]{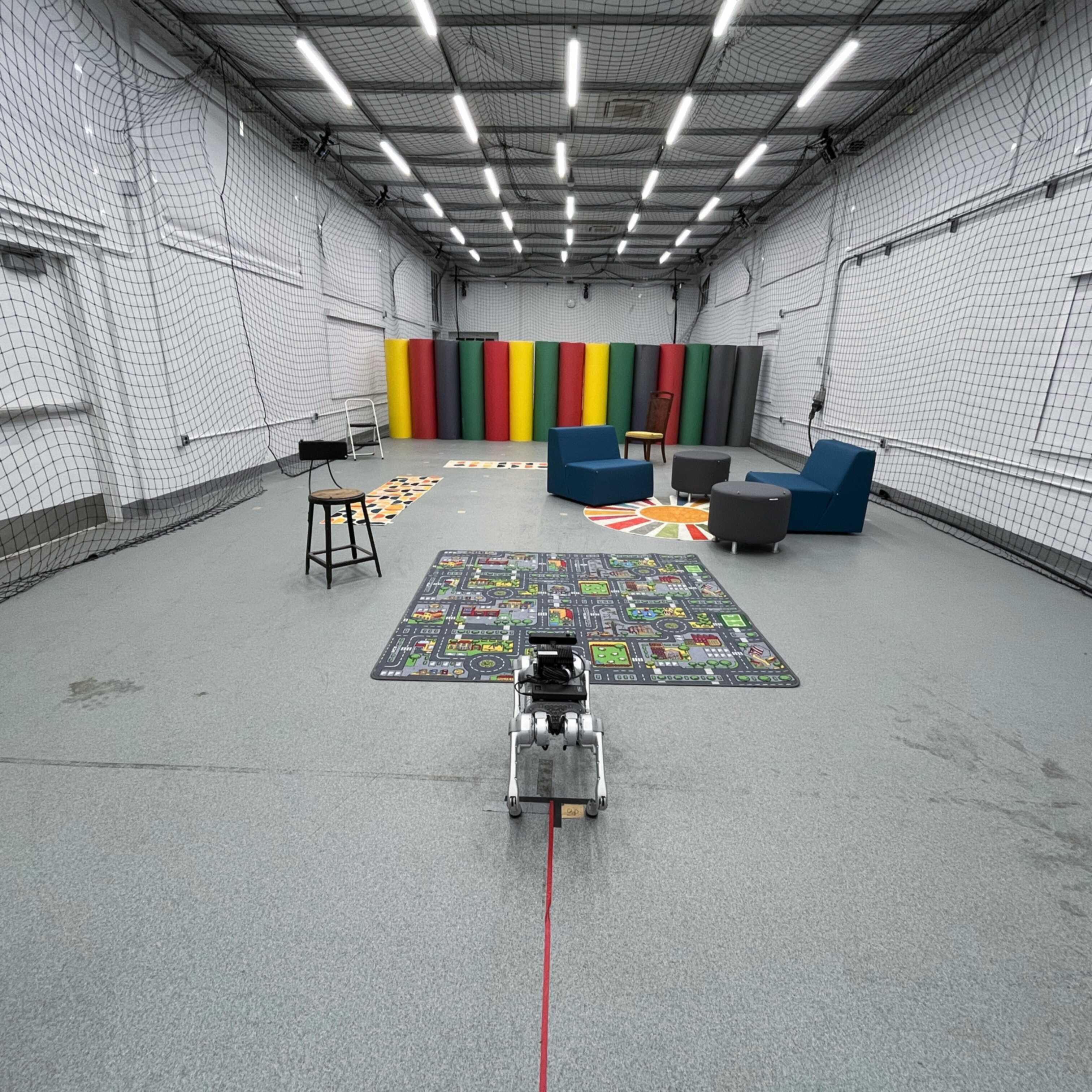}
    \end{minipage}
    \hfill
    \begin{minipage}{0.28\linewidth}
        \centering
        \includegraphics[width=\linewidth]{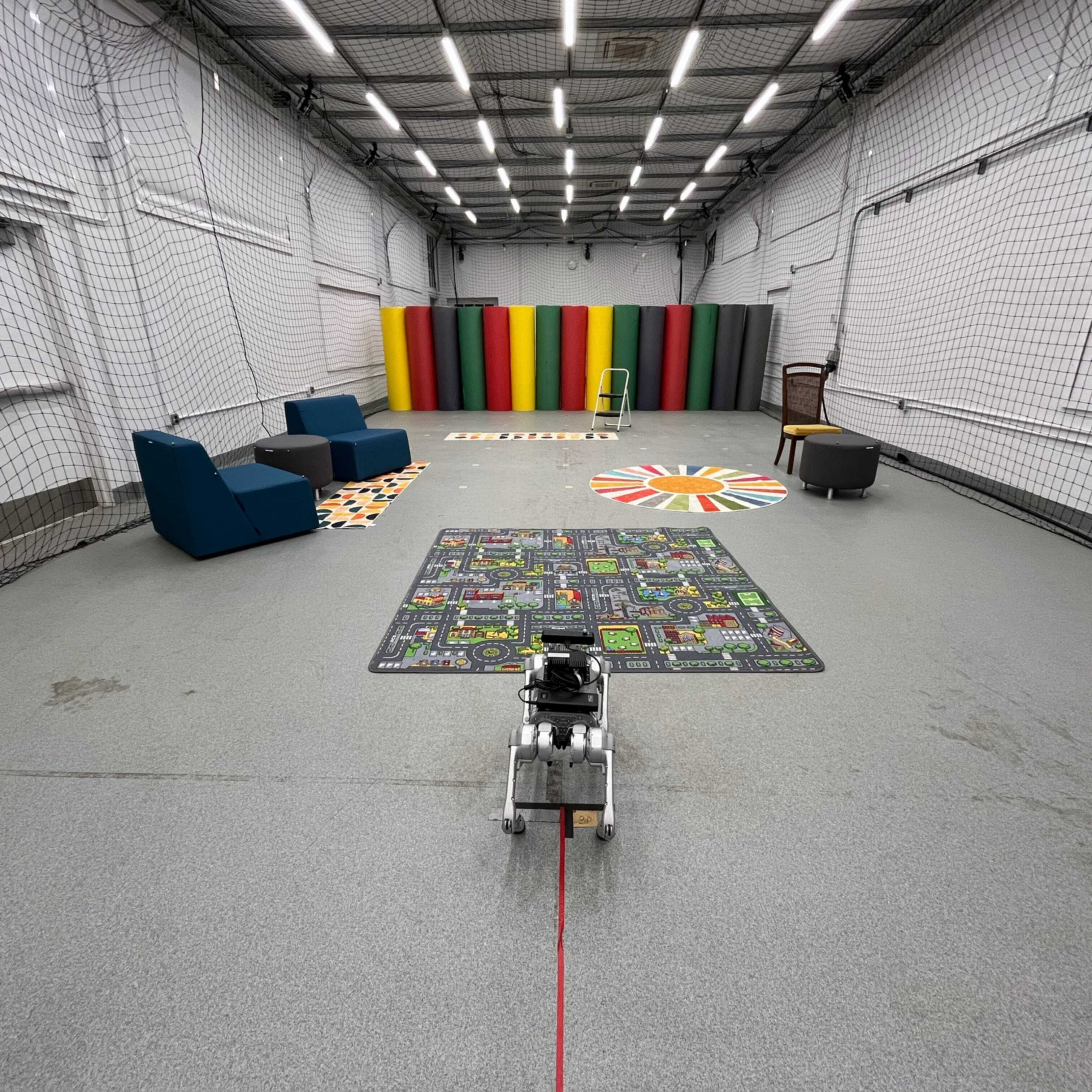}
    \end{minipage}
    \hfill
    \begin{minipage}{0.28\linewidth}
        \centering
        \includegraphics[width=\linewidth]{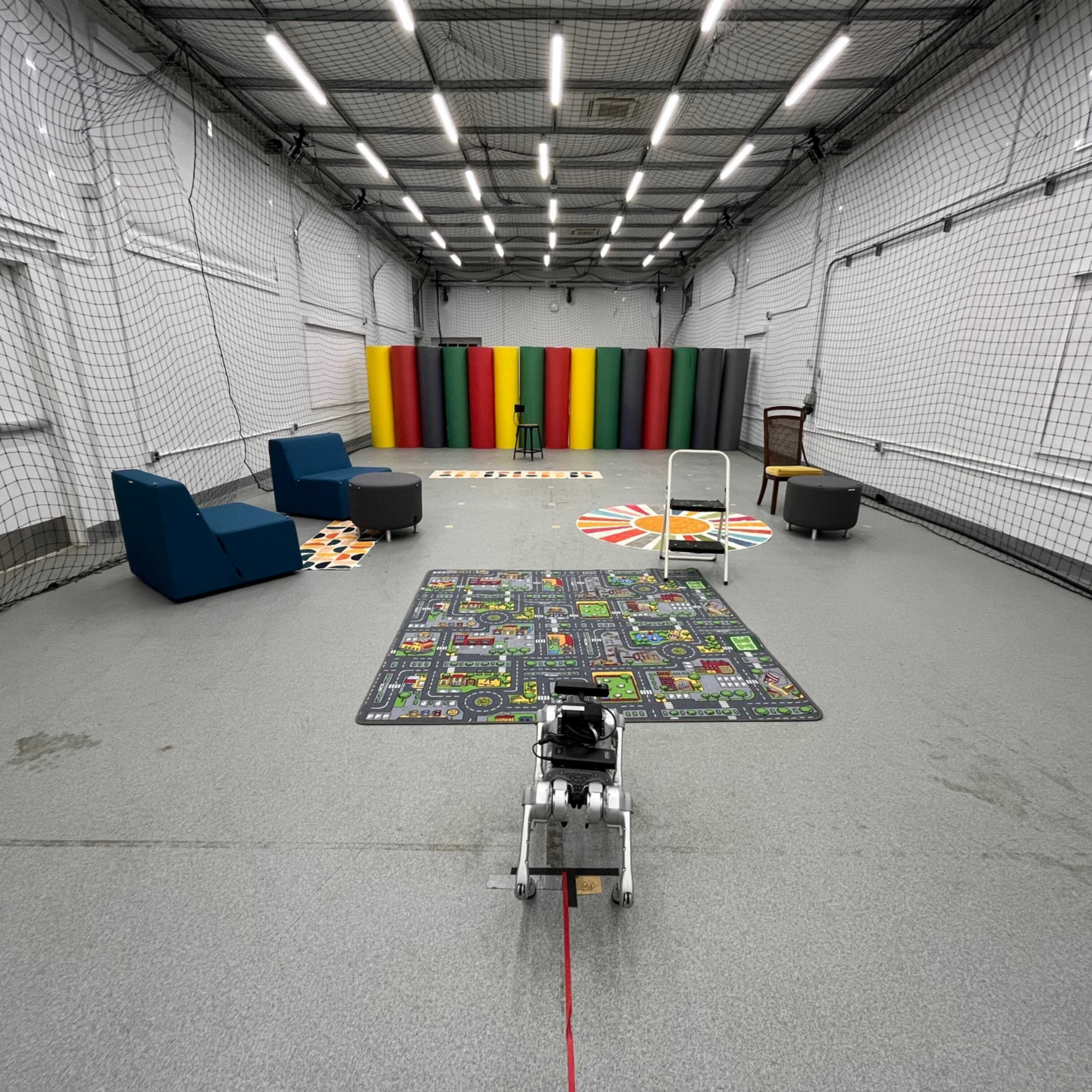}
    \end{minipage}
\end{minipage}
\vspace{12pt}
\begin{minipage}{\linewidth}
    \centering
    \begin{minipage}{0.28\linewidth}
        \centering
        \includegraphics[width=\linewidth]{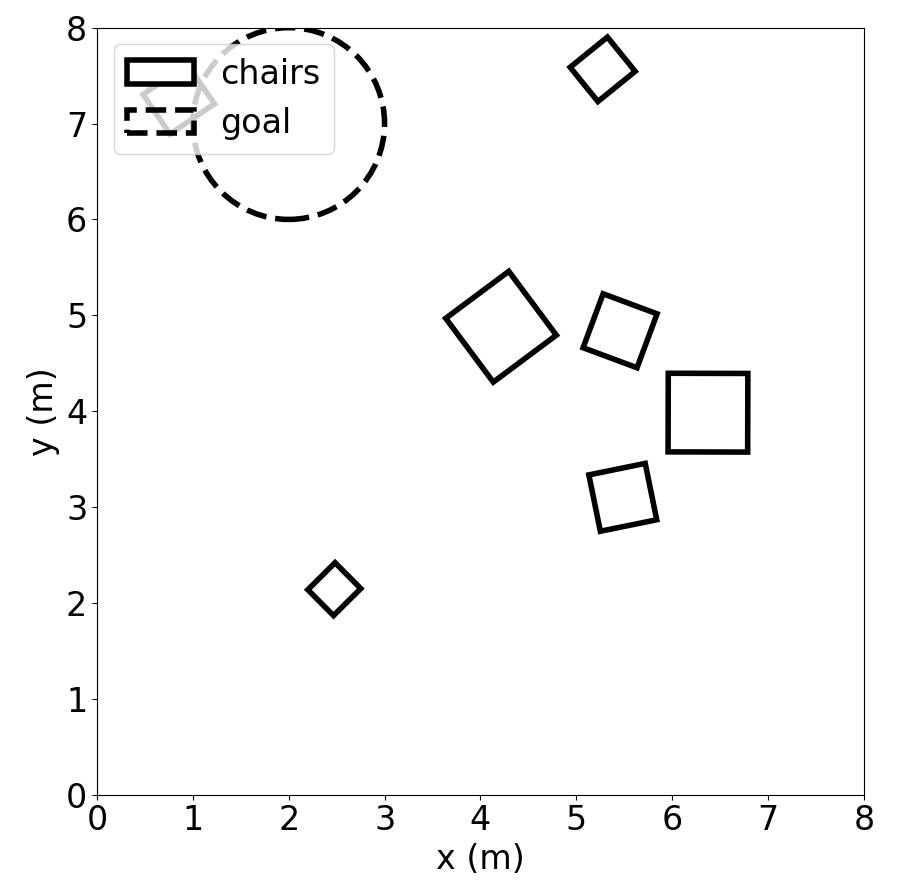}
        \caption*{(28) Environment 28}
    \end{minipage}
    \hfill
    \begin{minipage}{0.28\linewidth}
        \centering
        \includegraphics[width=\linewidth]{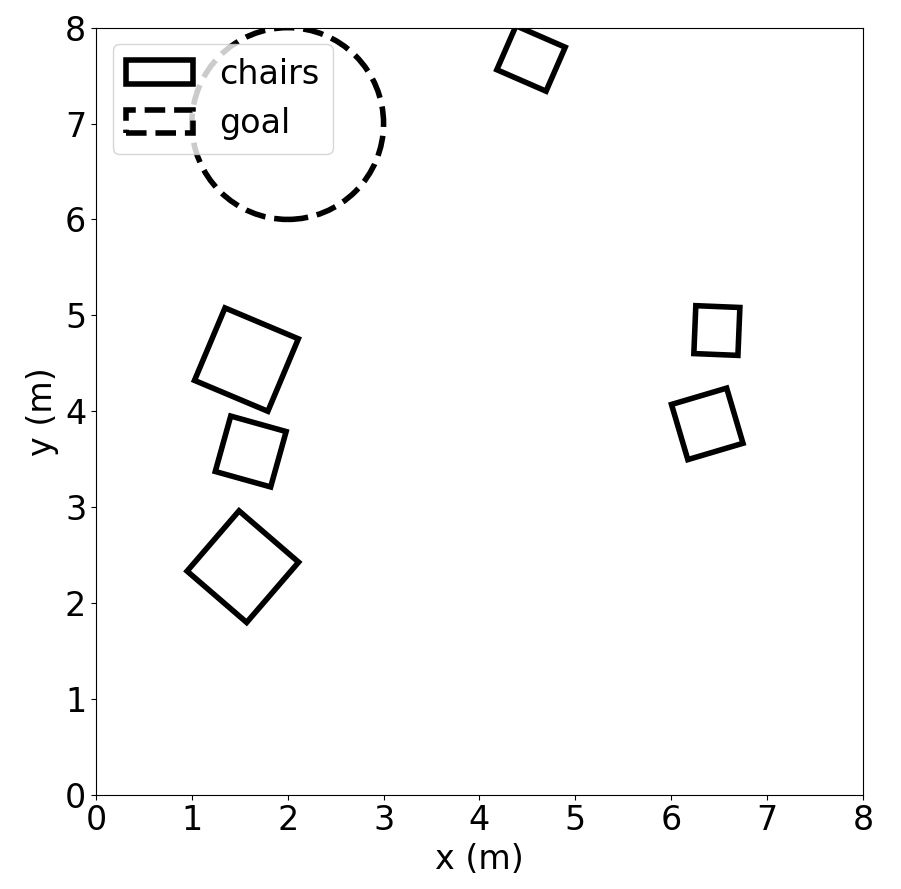}
        \caption*{(29) Environment 29}
    \end{minipage}
    \hfill
    \begin{minipage}{0.28\linewidth}
        \centering
        \includegraphics[width=\linewidth]{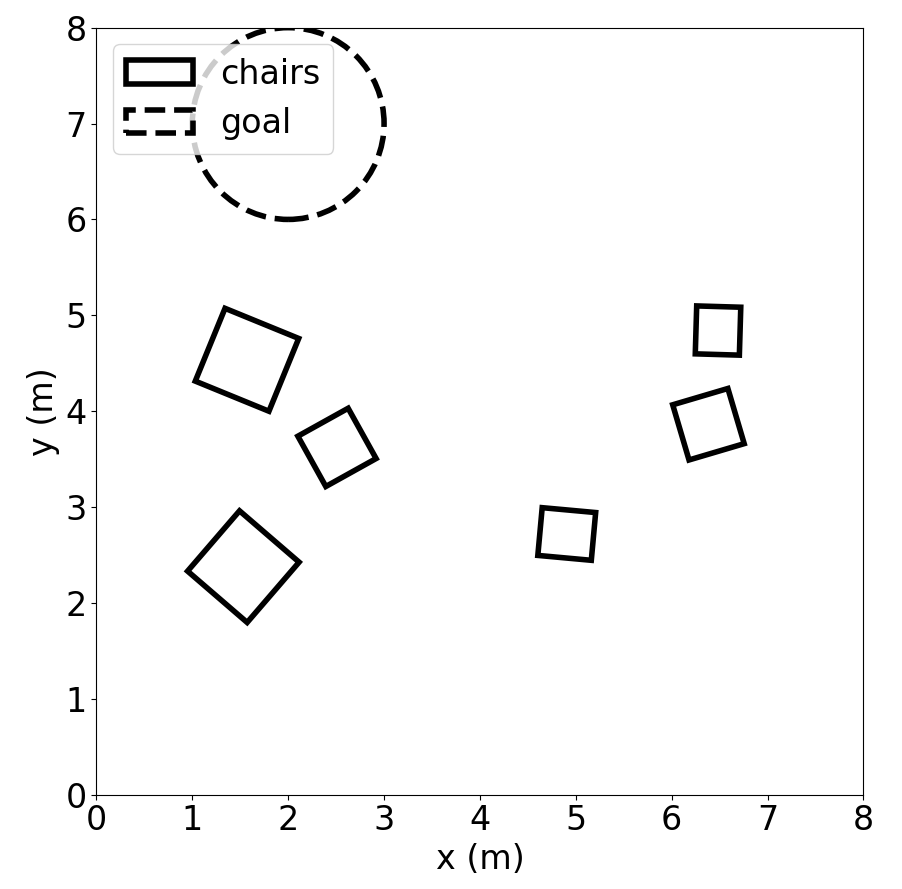}
        \caption*{(30) Environment 30}
    \end{minipage}
\end{minipage}
\end{figure*}

\end{appendices}

\end{document}